\documentclass{article}

\usepackage[preprint]{neurips_2026}

\usepackage{graphicx}
\usepackage{algorithm}
\usepackage{algorithmic}

\usepackage{multirow}
\usepackage{enumitem}
\usepackage{caption}

\usepackage[utf8]{inputenc} 
\usepackage[T1]{fontenc}    
\usepackage{hyperref}       
\usepackage{url}            
\usepackage{booktabs}       
\usepackage{amsfonts}       
\usepackage{nicefrac}       
\usepackage{microtype}      
\usepackage{xcolor}         
\usepackage{booktabs}
\usepackage{wrapfig}
\usepackage{placeins}
\usepackage{subcaption}
\usepackage{float}

\usepackage{amsmath,amssymb,amsthm,amsfonts,mathtools,bm,booktabs,xcolor,enumitem,hyperref}
\usepackage{natbib}

\newtheorem{assumption}{Assumption}
\newtheorem{theorem}{Theorem}

\newtheorem{lemma}{Lemma}
\newtheorem{corollary}{Corollary}
\newtheorem{remark}{Remark}

\newcommand{\E}{\mathbb{E}}
\newcommand{\R}{\mathbb{R}}

\newcommand{\tr}{\mathrm{tr}}

\newcommand{\Loss}{\mathcal{L}}
\newcommand{\Gauss}{\mathcal{N}}

\title{Adaptive Selection of LoRA Components in Privacy-Preserving Federated Learning}

\author{
    Myoungjun Kim \\
    Myongji University\\
    \texttt{tjmjtjmj2237@mju.ac.kr}
    \And
    Sangwoo Park\\
    King's College London\\
    \texttt{sangwoo.park@kcl.ac.uk}
    \And
    Yoseob Han \\
    Soongsil University\\
    \texttt{yoseob.han@ssu.ac.kr}
    \And
    Jin-Hyun Ahn \\
    Myongji University\\
    \texttt{wlsgus3396@mju.ac.kr}\\
}

\begin{document}
\maketitle

\begin{abstract}
Differentially private federated fine-tuning of large models with LoRA suffers from aggregation error caused by LoRA's multiplicative structure, which is further amplified by DP noise and degrades both stability and accuracy. Existing remedies apply a single update mode uniformly across all layers and all communication rounds (or alternate them on a fixed schedule), ignoring both the structural asymmetry between the two LoRA factors and the round-wise dynamics of training. We propose AS-LoRA, an adaptive framework defined by three axes \emph{(i)} layer-wise freedom, in which each layer independently selects its active component, $\emph{(ii)}$ round-wise adaptivity, in which the selection updates over communication rounds, and \emph{(iii)} a curvature-aware score derived from a second-order approximation of the loss. Theoretically, AS-LoRA eliminates the reconstruction-error floor of layer-tied schedules, accelerates convergence, implicitly biases solutions toward flatter minima, and incurs no additional privacy cost. Across GLUE, SQuAD, CIFAR-100, and Tiny-ImageNet under strict DP budgets and non-IID partitions, AS-LoRA improves over the federated LoRA baselines by up to $+7.5$ pp on GLUE and $+12.5$ pp on MNLI-mm for example, while matching or exceeding SVD-based aggregation methods at $33\text{--}180 \times$ lower aggregation cost and with negligible communication overhead. Code for the proposed method is available at \url{https://anonymous.4open.science/r/as_lora-F75F/}.
\end{abstract}

\vspace{-3mm}

\section{Introduction} 
Federated learning (FL) enables collaborative training over decentralized data without exposing raw samples \cite{mcmahan2017communication, bonawitz2019towards}, yet the communicated updates remain vulnerable to attacks such as Deep Leakage from Gradients \cite{zhu2019deep} and Membership Inference \cite{shokri2017membership}. Differential privacy (DP) has thus become a standard safeguard in this setting \cite{zheng2021federated, naseri2020local}. Meanwhile, as foundation models such as GPT-4 \cite{achiam2023gpt}, LLaMA \cite{touvron2023llama}, PaLM2 \cite{anil2023palm}, ViT \cite{dosovitskiy2020image}, and Swin-T \cite{liu2021swin} continue to scale, fine-tuning them on resource-constrained clients quickly becomes prohibitive. Low-Rank Adaptation (LoRA) \cite{hu2022lora}, which adapts a frozen weight $W_0$ via a small low-rank update $W=W_0 +BA$, has therefore become the predominant approach for Parameter-Efficient Fine-Tuning (PEFT) in FL with foundation models, incurring minimal performance degradation compared to full fine-tuning.

However, applying LoRA in private FL is non-trivial. Its multiplicative structure induces an \emph{aggregation error} across clients, which is further amplified into a \emph{quadratic noise term} under DP-SGD, and we defer the detailed mechanism to Section \ref{sec:back}. \emph{FedLoRA} \cite{zhang2024towards}, which is the most direct adaptation of LoRA to FL, suffers from both effects and degrades sharply under tight privacy budgets and non-IID partitions. Several remedies have since been proposed to alleviate those issues. \emph{FFA-LoRA} \cite{sun2024improving} freezes $A$ and trains only $B$, \emph{RoLoRA} \cite{chen2025robust} alternates updates between $A$ and $B$ across rounds, and \emph{FedSVD} \cite{lee2025fedsvd} also trains only $B$ while applying a server-side Singular Value Decomposition (SVD). Yet all of them adopt structurally simple update strategies that impose the \emph{same mode on every layer} and follow \emph{temporally static patterns}, which overlook both the structural asymmetry between $A$ and $B$, and the round-wise variation of their relative importance. This motivates an adaptive framework in which the active LoRA component is selected per layer and per round, as depicted in Figure \ref{fig:AS-LoRA}.


\begin{figure}[t]
    \centering
    \includegraphics[width=1.0\textwidth]{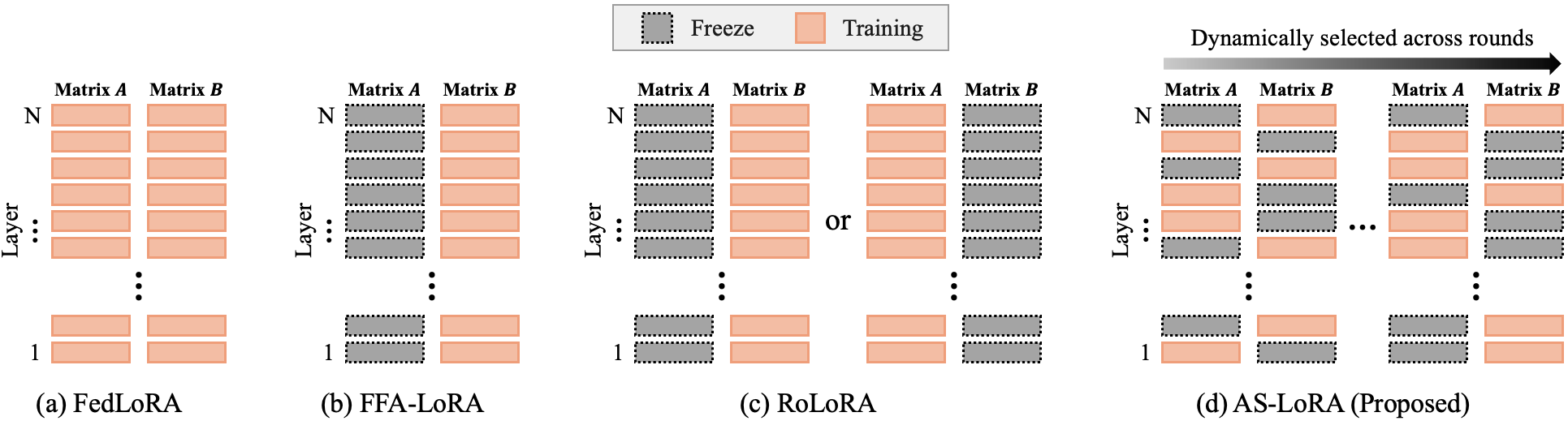}
    \caption{Comparison of update patterns in FedLoRA, FFA-LoRA, RoLoRA, and the proposed Adaptive Selection LoRA (AS-LoRA).}
    \label{fig:AS-LoRA}
    \vspace{-20pt}
\end{figure}

\begin{wraptable}{r}{0.46\linewidth}
\centering
\vspace{-13pt}
\scriptsize
\setlength{\tabcolsep}{5.5pt}
\renewcommand{\arraystretch}{1.15}
\caption{Performance comparison of fixed alternating LoRA optimization schedules on 3 GLUE tasks. Best results are shown in \textbf{bold} and second-best results are \underline{underlined}.}
\label{tab:fixed_schedule}
\begin{tabular}{lcccc}
\toprule
\textbf{Schedule} & \textbf{MNLI (m)} & \textbf{MNLI (mm)} & \textbf{SST-2} & \textbf{QQP} \\
\midrule
\texttt{BA} \cite{chen2025robust} & \textbf{81.17} & \underline{81.96} & 92.09 & \underline{84.45} \\
\texttt{BAA} & 80.26 & 81.20 & \textbf{92.89} & 84.21 \\
\texttt{BBA} & \underline{81.10} & \textbf{81.98} & 92.30 & \underline{84.45} \\
\texttt{BBAA} & 80.78 & 81.55 & \underline{92.66} & \textbf{84.49} \\
\bottomrule
\end{tabular}
\end{wraptable}

In fact, prior work~\cite{hayou2024lora+} shows that the two low-rank matrices contribute asymmetrically to the optimization process across training stages, owing to their different initialization schemes and structural characteristics. To examine whether this asymmetry can be handled by simple fixed schedules, we conduct preliminary experiments that compare several alternating patterns including \texttt{BA}, \texttt{BBA}, and \texttt{BAA}, as shown in Table~\ref{tab:fixed_schedule}. No single schedule attains the best performance across all tasks, which indicates that fixed and heuristic alternating schedules are inherently suboptimal, and that the choice of which low-rank component to optimize should instead be made adaptively according to current training dynamics. Consistent with this observation, the active LoRA component selected by our proposed AS-LoRA varies dynamically across both layers and communication rounds throughout training, as illustrated in Figure~\ref{fig:selected}, which presents a representative trajectory whose full details are deferred to Appendix~\ref{sec:Appendix D}.

Building on those observations, we propose AS-LoRA (Adaptive Selection LoRA), and the main contributions of this paper are summarized as follows.

\begin{wrapfigure}{r}{0.45\linewidth}
\centering
\vspace{-23pt}
\includegraphics[width=\linewidth]{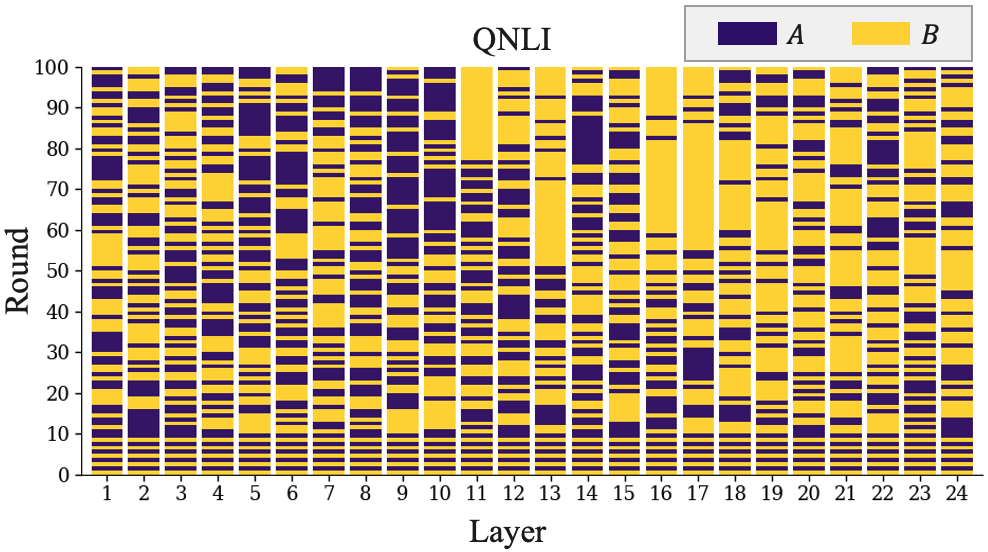}
\caption{Selection of LoRA components with AS-LoRA (ours) on QNLI across layers/rounds. Different layers favor different components, and the preferred component varies across rounds, supporting the need for adaptive selection.}
\label{fig:selected}
\vspace{-5pt}
\end{wrapfigure}
\textbf{Layer-wise adaptive LoRA optimization.}
To the best of our knowledge, this is the first work that extends alternating LoRA optimization to a adaptive selection framework. We further prove that this layer-wise freedom eliminates the irreducible reconstruction-error floor that any layer-tied schedule must incur.\\
\textbf{Round-wise adaptive mode selection.}
We introduce a temporally adaptive mechanism that updates the selected LoRA component across communication rounds based on EMA-smoothed historical scores, which allows the model to adjust its optimization strategy as training progresses.\\
\textbf{Curvature-aware scoring function.}
We design a scoring function derived from a second-order approximation of the loss, which captures curvature-aware component importance. We prove that this score-driven selection accelerates convergence with a quantifiable gain over fixed schedules and implicitly biases solutions toward flatter minima, which we further verify empirically through loss-landscape and sharpness analyses. To handle the channel-wise gradient outliers that are characteristic of LoRA~\cite{jung2025gralora}, we further apply a Gaussian random projection \cite{johnson1984extensions}, which yields more reliable score estimates. \\
\textbf{Privacy at no extra cost.}
We prove that the data-dependent component selection of AS-LoRA incurs no additional privacy cost beyond standard DP-SGD, which follows from the post-processing invariance of DP.

\vspace{-3mm}

\section{Background} 
\label{sec:back}

\textbf{Differential privacy \cite{dwork2014dpfoundations}.}  Formally, a randomized mechanism $\mathcal{M}$ is said to satisfy
$(\epsilon,\delta)$-differential privacy
if, for any two neighboring datasets $D$ and $D'$
that differ in exactly one data record,
and for any measurable subset $E$ of the output space,
the following condition holds:
$
\Pr[\mathcal{M}(D) \in E]
\leq e^{\epsilon} \Pr[\mathcal{M}(D') \in E] + \delta.
$ A common approach to achieving differential privacy in federated learning
is differentially private stochastic gradient descent (DP-SGD) \cite{abadi2016deep}.
Formally, let $g_i = \nabla_\theta \ell(\theta; x_i)$ denote the gradient
computed from a single training example $x_i$.
DP-SGD first applies per-sample gradient clipping. And the clipped gradients are then aggregated and perturbed with Gaussian noise:
$
\textit{(i)}\quad
g_i^{\mathrm{clip}} = g_i \cdot \min\!\left(1, \frac{C}{\|g_i\|_2}\right), \quad
\textit{(ii)}\quad
g^{\mathrm{noisy}} = \frac{1}{B}\left(\sum_{i=1}^{B} g_i^{\mathrm{clip}} + \mathcal{N}\!\left(0, \sigma^2 C^2 \mathbf{I}\right)\right),
$
where $C$ is a predefined clipping norm that bounds the sensitivity, $B$ is the batch size, $\sigma$ is the noise multiplier, and $\mathcal{N}(0, \sigma^2 C^2 \mathbf{I})$ denotes isotropic Gaussian noise.
Finally, the model parameters are updated using the noisy gradient:
$
\theta_{t+1} = \theta_t - \eta \, \bar{g},
$
where $\eta$ is the learning rate.

\textbf{Aggregation error in federated LoRA \cite{wang2024flora}.} Given a pre-trained weight matrix $W_0 \in \mathbb{R}^{d_{out} \times d_{in}}$,
LoRA parameterizes the adapted weight as $W = W_0 + BA$, where $B \in \mathbb{R}^{d_{out} \times r}$ and $A \in \mathbb{R}^{r \times d_{in}}$.
In FL, the server aggregates local LoRAs $\{(A_k, B_k)\}_{k=1}^K$ via averaging as
$
\bar{A} = \frac{1}{K}\sum_{k=1}^K A_k, \quad
\bar{B} = \frac{1}{K}\sum_{k=1}^K B_k.
$
However, due to the multiplicative structure of LoRA,
$
\frac{1}{K}\sum_{k=1}^K B_k A_k \;\neq\; \bar{B}\,\bar{A}.
$
This mismatch leads to an aggregation error that accumulates across communication rounds,
and becomes more pronounced under heterogeneous client data distributions.

\begin{wrapfigure}{r}{0.36\linewidth}
    \centering
    \vspace{-5mm}
    \includegraphics[width=\linewidth]{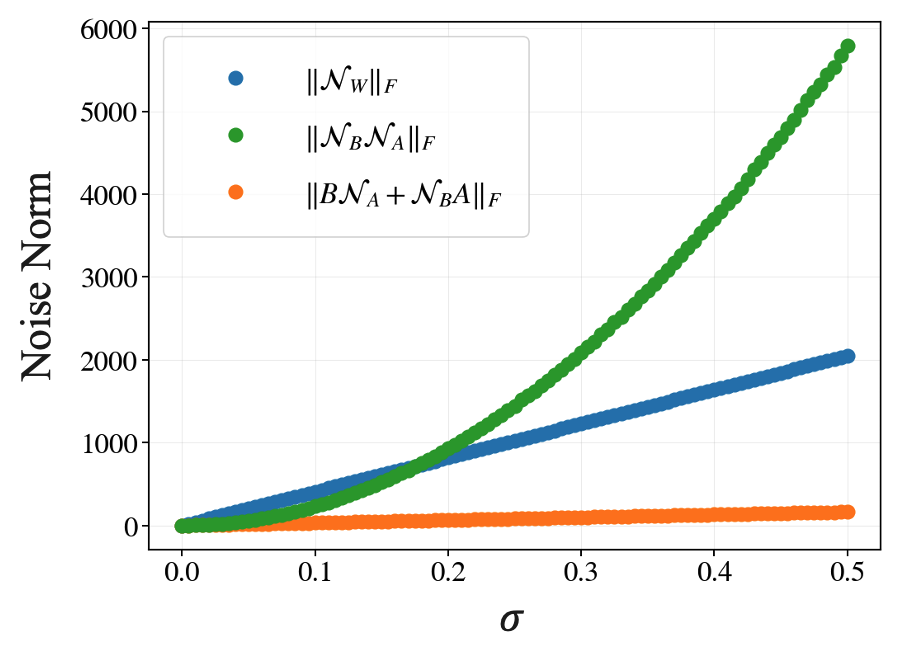}
    \caption{
    Scaling behavior of different noise components with respect to the noise scale $\sigma$.
    }
    \label{fig:noise_norm}
    \vspace{-5mm}
\end{wrapfigure}

\textbf{DP noise amplification in LoRA \cite{sun2024improving}.} When DP-SGD is applied to LoRA parameters, Gaussian noise is added
to the gradients.
Let $\Delta B$ and $\Delta A$ denote the updates to $B$ and $A$ in a given round,
which consist of a true update component and a DP noise component:
$
\Delta B = \Delta B^{\star} + \mathcal{N}_B, \quad
\Delta A = \Delta A^{\star} + \mathcal{N}_A.
$
The resulting update to the effective weight matrix is given by
$\Delta W=B \Delta A^{\star}+\Delta B^{\star} A+B \mathcal{N}_A+\mathcal{N}_B A+\mathcal{N}_B \mathcal{N}_A$.
As illustrated in Figure~\ref{fig:noise_norm}, different noise components exhibit distinct scaling behaviors with respect to the noise scale $\sigma$. 
The full-model noise $\|\mathcal{N}_W\|_F$ grows linearly, whereas the multiplicative cross-term $\|\mathcal{N}_B \mathcal{N}_A\|_F$ exhibits quadratic growth and becomes dominant as $\sigma$ increases. 
This phenomenon leads to \emph{noise amplification} in the effective model update, which can significantly degrade convergence and stability.

Additional related works are introduced in Appendix~\ref{sec:Appendix A}.

\vspace{-2mm}

\section{Method}

\vspace{-2mm}

In this section, we introduce our AS-LoRA framework. We first describe the problem formulation of FL with LoRA, and then present the score computation algorithm $\mathcal{S}(\cdot)$ and the mode selection algorithm $\mathcal{M}(\cdot)$ that together determine which LoRA component is activated at each layer and communication round. The full algorithmic procedure and visual illustration of AS-LoRA are deferred to Appendix~\ref{sec:Appendix B}.

\textbf{Problem formulation.}
We consider a FL setting with $K$ clients and a central server. Each client holds a local dataset that is not shared with others, and the goal is to collaboratively fine-tune a large pre-trained model while preserving data privacy. Following Section \ref{sec:back}, the adapted weight matrix is parameterized as \(W = W_0 + BA\), where \(A\) and \(B\) denote the two low-rank components.
Since LoRA is applied to multiple layers of the model, we further decompose the update into layer-wise adapters. Specifically, for the $n$-th layer at round $t$, the adapted weight can be written as
$
W_t(n) = W_0(n) + B^t(n)A^t(n),~ n \in \left[N \right],
$
where $N$ denotes the total number of LoRA layers.

\textbf{Score computation algorithm.} To determine which LoRA component should be activated at each communication round, we design a score function based on a second-order approximation of the expected loss decrease after a gradient update. Let $a \in \{A, B\}$ denote a candidate LoRA component. For a given layer $n$, we consider one gradient descent step at round $t$,
$
W' = W_t - \eta \nabla_{a(n)} L(W_t),
$
whose resulting change in the loss admits the second-order Taylor approximation
$
L(W') \approx L(W_t) 
- \eta \|\nabla_{a(n)} L(W_t)\|^2 
+ \frac{\eta^2}{2} \nabla_{a(n)} L(W_t)^\top H_{a(n)}(W_t) \nabla_{a(n)} L(W_t),
$
where $H_{a(n)}(W_t)$ is the Hessian with respect to the parameter block $a(n)$. This motivates the following score function:
\begin{equation}
\label{eq:score_function} 
\mathcal{S}_{a(n)}(W_t) =
\|\nabla_{a(n)} L(W_t)\|^2 
- \frac{\eta}{2}\, \nabla_{a(n)} L(W_t)^\top H_{a(n)}(W_t)\, \nabla_{a(n)} L(W_t),
\end{equation}
where we omit the client index $k$ throughout this derivation for notational clarity. The two terms in $\mathcal{S}_{a(n)}$ admit complementary interpretations from two well-established design principles. The \emph{first term} extends the classical Gauss--Southwell rule of block coordinate descent (BCD)~\cite{nesterov2012bcd}, which selects the block with the largest gradient magnitude and yields strictly faster convergence than uniform block selection. The \emph{second term} introduces a curvature-induced penalty whose effect mirrors the implicit bias of sharpness-aware minimization (SAM)~\cite{foret2021sam}, which discourages updates along directions in which a gradient step would overshoot a sharp region of the loss surface. Selecting the block that maximizes $\mathcal{S}_{a(n)}$ therefore corresponds to choosing the LoRA component that offers the largest expected one-step loss decrease while preferring flatter regions of the loss landscape. We formally establish these two design benefits in Theorem~\ref{thm:conv} and Theorem~\ref{thm:implicit}.

Now, we introduce the Gaussian random projection \cite{johnson1984extensions}, which is applied to the computed score \eqref{eq:score_function}. Recent studies have shown that large language models often exhibit channel-wise activation outliers  \cite{xiao2023smoothquant, lin2024awq}. \cite{jung2025gralora} further shows that the gradient magnitude of the $A$ component is highly concentrated in a small subset of channels, and the squared-gradient term $\|\nabla_{a(n)} L(W_t)\|^2$ in \eqref{eq:score_function}) can lead to an overestimation of the score of the $A$ component. To alleviate this, we apply the random projection step as follow.
For each LoRA component, we construct a random projection matrix in the original gradient space that preserves the original dimensionality while redistributing channel-wise outliers across coordinates, denoted as
$
R_A(n), 
R_B(n).
$
For client $k$ at round $t$, let
$
g_{a,k}^t(n) = \nabla_{a(n)} \mathcal{L}_k^t, 
$
denote the locally computed gradients with respect to the LoRA components of layer $n$. 
We project these gradients as
$
\widetilde{g}_{A,k}^t(n) = g_{A,k}^t(n) R_A(n), \quad
\widetilde{g}_{B,k}^t(n) = R_B(n) g_{B,k}^t(n),
$
so that the projected gradients redistribute channel-wise outliers across dimensions. The effectiveness of random projection is empirically validated in Appendix~\ref{app:random_projection}. In additoin, we remark that our score computation is \emph{single-pass}, \emph{i.e.,} the scores are obtained as a by-product of local training phase. As an alternative, one could consider a \emph{two-pass inner-loop computation} that re-evaluates the score after the globally aggregated model is redistributed to the clients. We empirically compare the two designs in terms of performance, and overhead (Appendix~\ref{app:pass}).

\textbf{Mode selection algorithm.} We now design a mode selection algorithm, denoted by $\mathcal{M}(\cdot)$ which determines the active LoRA component for each layer at round $t$ using the layer-wise scores calculated from stored scores and aggregate scores in an Exponential Moving Average (EMA) manner. The algorithm outputs the \emph{mode vector} indicating which component should be optimized.  
We denote the mode vector at round $t$ as
$
\{M^t(1), M^t(2), \dots, M^t(N)\},~ M^t(n) \in \{0,1\},
$
where $M^t(n)=0$ indicates that $A^t(n)$ is optimized and $M^t(n)=1$ indicates that $B^t(n)$ is optimized during round $t$. 
Accordingly, we define the layer-wise scores as $
\hat{S}_a^t(n) := \mathrm{EMA}\!\left(\hat{S}_a^{t-1}(n), S_a^t(n)\right),
$
where $S_a^t(n):=\tfrac{1}{K}\sum_{k=1}^{K} S_{a,k}^t(n)$. Instead of deterministically selecting the component with the larger score, we use a temperature-scaled softmax strategy. 
This approach encourages exploration during training and is consistent with the common practice of temperature annealing \cite{jang2016categorical, maddison2016concrete}. The probability of selecting $a$ at round $t$ is defined as
$
P^t(a,n)=
\frac{\exp\!\left(\hat{S}_a^{t}(n)/T^t\right)}
{\exp\!\left(\hat{S}_A^{t}(n)/T^t\right)+\exp\!\left(\hat{S}_B^{t}(n)/T^t\right)},
$
where $T^t$ is the temperature parameter at round $t$.
The temperature follows an annealing schedule
\begin{equation}
T^t =
\begin{cases}
T^0, & t \le t_w, \\
\max\!\left(T_{\min},\, T^0 \cdot \gamma^{(t-t_w)}\right), & t > t_w ,
\end{cases}
\end{equation}
where $T^0$ is the initial temperature, $T_{\min}$ is the minimum temperature, $\gamma \in (0,1)$ is the decay factor, and $t_w$ denotes the warm-up period.
During the warm-up stage ($t \le t_w$), the two LoRA components are alternately activated across rounds. The stochastic selection mechanism prevents premature convergence to a single component while allowing the framework to gradually favor the component with consistently higher scores as the temperature decreases. We remark that our mode selection operates on scores aggregated by \emph{uniform averaging} across clients, which gives equal weight to every client and and can be regarded as the standard choice. In Appendix~\ref{app:selection_rule}, we compare this design against two alternative aggregation rules, namely \emph{majority voting} and \emph{weighted averaging}.

\vspace{-3mm}

\section{Theoretical Analysis}\label{sec:theory}
We analyze AS-LoRA along two axes: $\emph{(i) layer-wise freedom}$ which is the ability to activate $A$ or $B$ independently per layer and $\emph{(ii) score-driven selection}$ which is  $\arg\max_a S_a$ with $S_a=\|g_a\|^2-\tfrac{\eta}{2}g_a^\top H_a g_a$. Each axis contributes a distinct and quantifiable advantage over the layer-tied or static baselines including FFA-LoRA \cite{sun2024improving}, and RoLoRA \cite{chen2025robust}. To focus our goals, we do not consider the effect of random projection, EMA, temperature-scaled softmax strategy, and the variance of training and scoring caused by distributed settings in our analysis. We first prove that AS-LoRA requires no extra privacy cost. In fact, AS-LoRA's selection at each layer depends on $\|g_a^{(n)}\|^2$ and $(g_a^{(n)})^\top H_a^{(n)} g_a^{(n)}$, both of which are gradient-dependent statistics that in principle \emph{could} be privacy-relevant. The following theorem resolves the concern based on the post-processing invariance of DP \cite{dwork2014dpfoundations}. The proof of Theorem \ref{thm:privacy} can be found in Appendix \ref{app:proof_of_privacy}.

\begin{theorem}[$(\epsilon,\delta)$-DP at no privacy cost]\label{thm:privacy}
With per-round DP-SGD~\cite{abadi2016deep} (subsampled Gaussian, sensitivity $C$, noise scale $\sigma$, sampling rate $q$), AS-LoRA satisfies $(\epsilon,\delta)$-DP with
\(
\epsilon \le \tfrac{q^2 C^2 T}{\sigma^2}\log(1/\delta),
\)
the same bound with FFA-LoRA/RoLoRA via moments accountant / R\'enyi-DP composition~\citep{mironov2017renyi,wang2019sgm}.
\end{theorem}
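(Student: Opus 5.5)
The plan is to reduce the privacy analysis of AS-LoRA to that of standard DP-SGD by viewing each communication round as a fixed sequence of mechanisms. Only the first, the subsampled Gaussian mechanism applied to clipped per-sample gradients, touches private data. Everything else (aggregation, the score $\mathcal{S}_{a(n)}$, the EMA, the softmax mode selection $\mathcal{M}(\cdot)$, and the next-round mode vector) will be arranged as a post-processing of already-privatized outputs together with independent public randomness. We will then invoke post-processing invariance and adaptive composition in Rényi DP, and show that the resulting accountant is identical to that of FFA-LoRA/RoLoRA.

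\textbf{Step 1: fixing what is released.} We fix the released quantity at round $t$ to be the noisy clipped gradient $g^{\mathrm{noisy},t}$ for the active LoRA block of every layer. Per-sample clipping is applied to the concatenation of all LoRA gradients used by the client, with norm bound $C$, so the per-round sensitivity is $C$ whatever the mode vector $M^t$ is. Selecting a subset of coordinates only lowers the $\ell_2$ norm, so the sensitivity bound holds uniformly in $M^t$.

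\textbf{Step 2: scores as post-processing.} The scores must be computed from privatized quantities: the noisy gradients $\widetilde g_a$, and Hessian–vector terms evaluated at parameters that are themselves functions of past noisy releases. If the score statistics $\|g_a\|^2$ and $g_a^\top H_a g_a$ were raw per-sample quantities, they would need their own noise. I expect this to be the main obstacle. I would resolve it by defining $S^t_{a,k}(n)$ from the clipped-and-noised gradient (the single-pass by-product), and by noting that the projections $R_a(n)$ and the softmax sampling use randomness independent of the data. The map from the transcript $(g^{\mathrm{noisy},1},\dots,g^{\mathrm{noisy},t})$ to $M^{t+1}$ is then a (randomized) post-processing.

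\textbf{Step 3: adaptive composition.} Round $t$'s mechanism is a subsampled Gaussian mechanism whose query is chosen adaptively by $M^t$, a function of earlier outputs. RDP adaptive composition~\citep{mironov2017renyi} applies because every round is $(\alpha,\rho(\alpha))$-RDP with the same $\rho$ for all choices of the query. The subsampled Gaussian bound of \citet{wang2019sgm} gives $\rho(\alpha)\lesssim 2\alpha q^2C^2/\sigma^2$ for small $q$ and moderate $\alpha$ (with $C$ kept explicit in the paper's convention). Summing over $T$ rounds gives $T\rho(\alpha)$. The conversion $\epsilon=T\rho(\alpha)+\log(1/\delta)/(\alpha-1)$, with $\alpha$ optimized, yields a bound of order $\tfrac{q^2C^2T}{\sigma^2}\log(1/\delta)$ up to the constants absorbed in the statement. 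Finally, because the per-round RDP curve does not depend on which block is trained, the same calculation applies verbatim to FFA-LoRA and RoLoRA, which establishes equality of the bounds.
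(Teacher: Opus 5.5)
Your skeleton matches the paper's: per-round subsampled-Gaussian RDP, additive composition over $T$ rounds, conversion to $(\epsilon,\delta)$, and post-processing invariance for the mode selection. You are more careful than the paper in two places. You handle adaptivity explicitly: round $t$'s query is chosen by $M^t$, and joint clipping makes the sensitivity $C$ uniform in $M^t$. You also account for the data-independent randomness in $R_a(n)$ and the softmax.

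\textbf{Main gap: the curvature term is not post-processing.} The gap is in Step 2, and it is the same leap the paper makes when it says that, because $H_a^{(n)}$ is computed on the same mini-batch, the score is a deterministic function of $\bar g_a^{(n)}$. The curvature term $\langle \widetilde g_a, H_a\widetilde g_a\rangle$ is not a function of the released transcript. The same holds, in the implemented FD score, for the loss values $\mathcal{L}_k^t(W_t\pm\epsilon_{\mathrm{fd}}v_{a,k}^t(n))$.
\begin{itemize}
\item That $W_t$ is a function of past noisy releases does not help. $H_a=\nabla_a^2\mathcal{L}_k$, evaluated at a public point along a public direction, is a fresh, unclipped, un-noised query on the client's data.
\item This is exactly the situation of a gradient at a public point, which is why DP-SGD has to noise it.
\item So the map to $M^{t+1}$ is not post-processing, and the uploaded scalars $S_{a,k}^t(n)$ are themselves an unaccounted release.
\end{itemize}
To close this, the score must be computable from the noisy transcript alone. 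One way is to keep only $\|\widetilde g_a\|^2$ of the noised gradient; another is to replace the Hessian term by an estimate built from released noisy gradients. Otherwise the score needs its own clipped Gaussian mechanism charged to the ledger, and then ``no extra cost'' fails.

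\textbf{Smaller instance of the same problem.} Your Step 1 releases noisy gradients only for the active block, but the softmax needs $S_a$ for both blocks. This one can be fixed at no privacy cost. Clip and noise the concatenated $(A,B)$ gradient of each layer as a single Gaussian mechanism (same sensitivity $C$), and use only the active part for the update.

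\textbf{The final conversion in Step 3 is not ``up to constants''.} Take $\rho(\alpha)=c\,\alpha q^2C^2/\sigma^2$ and set $u=cTq^2C^2/\sigma^2$. Minimizing $T\rho(\alpha)+\log(1/\delta)/(\alpha-1)$ over $\alpha$ gives $u+2\sqrt{u\log(1/\delta)}$. Writing $x:=\tfrac{q^2C^2T}{\sigma^2}\log(1/\delta)$, this is a square-root dependence on $x$, and it falls below the stated bound $x$ only once $x$ exceeds an absolute constant. For small $x$, the tight privacy loss of the composed subsampled Gaussian is itself essentially of order $\sqrt{x}$. So the product form cannot be proved in that regime, and the statement needs a regime restriction. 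The paper's conversion glosses over the same point; its per-round RDP term also omits the factor $\alpha$.
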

Now, we show that any schedule which ties the mode across all layers must incur an irreducible loss floor, whereas AS-LoRA provably eliminates it. For the analysis, we first introduce \emph{layer-wise population-optimal LoRA pair} that minimizes the \emph{population} fine-tuning loss $f_n$ as
\begin{equation}\label{def_optimal}
\bigl(A^\star(n),\,B^\star(n)\bigr) :=\arg\min_{A\in\R^{r\times d},\,B\in\R^{d\times r}}\; \E_{X_{n}}\!\bigl[\ell\bigl(f(X_{n})\bigr)\bigr].
\end{equation}
Next, we define the \emph{subspace misalignment} $\delta^t(n)$ as
\begin{equation}\label{def_misalign}
    \delta^t(n) :=\bigl\|B^\star(n)\,A^\star(n)\,(I - P_{A^t(n)})\bigr\|_F  = \bigl\|\Delta W^\star(n)(I - P_{A^t(n)})\bigr\|_F
\in[0,\|\Delta W^\star(n)\|_F],
\end{equation}
where $P_{A^t(n)}:=A^t(n)^\dagger A^t(n)\in\R^{d\times d}$ denote the orthogonal projector onto $\mathrm{row}(A^t(n))$, $A^\dagger$ denotes the Moore-Penrose pseudoinverse \cite{barata2012moore}. It quantifies the part of the optimal update $B^\star A^\star$ that lies outside $\mathrm{row}(A^t(n))$. The intermediate values with $r=1$ was used by \cite{chen2025robust} to diagnose the fundamental obstruction of FFA-LoRA, where $A^t(n)\!\equiv\!A^0(n)$ so $\delta^t(n)$ persists forever. With $X_n\in\R^{d\times m}$ stacking the layer-$n$ inputs, we define \emph{per-layer reconstruction risk} as
\begin{equation} \label{def_recon}
    \mathcal R_\text{rec}(\mathbf W) :=\frac{1}{N}\sum_{n=1}^{N}\bigl\|B^\star(n)A^\star(n)X_n - B(n)A(n)X_n\bigr\|_F^2,
\end{equation}
\emph{i.e.} the average layer-wise squared Frobenius distance between each layer's effective update and its optimal LoRA pair defined in \eqref{def_optimal}. While the reconstruction risk with one layer was also used in \cite{chen2025robust}, $\mathcal R_\text{rec}$ in \eqref{def_recon} is averaged layer-wise. Following Theorem \ref{thm:irreducible} and Corollary \ref{cor:asstrict} prove the strict dominance of layer-wise freedom over layer-tied schedules.

\begin{theorem}[Irreducible reconstruction floor of layer-tied schedules]\label{thm:irreducible}
Under Assumptions~\ref{asm:smooth}--\ref{asm:hess} and a well-conditioned input regime that is $\sigma_{\max}(X_n)/\sigma_{\min}^+(X_n A^0(n)^\dagger) = O(1)$ uniformly over layers $n$, FFA-LoRA~\citep{sun2024improving} which freezes $A^{(n)}$ at the random initialization $A^0(n)$ satisfies
\begin{equation}
\lim_{T\to\infty}\E\!\left[\mathcal R_\text{rec}(\mathbf W_T^{\text{FFA}})\right] \;=\; \Theta\!\left(\tfrac{1}{N}\!\sum_{n=1}^N\bigl(\delta^0(n)\bigr)^2\right),    
\end{equation}
where $\delta^0(n)\!=\!\|B^\star(n)A^\star(n)(I\!-\!P_{A^0(n)})\|_F$ is the initial subspace misalignment, and the implicit $\Theta(1)$ constants depend on the per-layer conditioning $\sigma_{\max}^2(X_n)/\sigma_{\min}^2(X_n A^0(n)^\dagger) = O(1)$ and absorb both the correction $O(1/\sqrt{Km})$ and the SGD noise floor $O(\eta\sigma_g^2)$. Moreover, any RoLoRA-type layer-tied alternation inherits $\Omega\!\bigl(\tfrac{1}{N}\sum_n(\delta^0(n))^2\bigr)$ floor under adversarial layer configurations.
\end{theorem}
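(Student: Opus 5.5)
The plan is to reduce FFA-LoRA, layer by layer, to a least-squares problem over a fixed row space and read off the floor from the orthogonal decomposition of $\Delta W^\star(n)$. With $A(n)\equiv A^0(n)$ frozen, every reachable update has the form $B(n)A^0(n)$. Its row space therefore lies in $\mathrm{row}(A^0(n))$ for all $T$, whatever schedule is used for $B$.

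The first step is the decomposition. Write $\Delta W^\star(n)=\Delta W^\star(n)P_{A^0(n)}+\Delta W^\star(n)(I-P_{A^0(n)})$. The layer-$n$ residual is then
\[
E_n(B):=\bigl(\Delta W^\star(n)P_{A^0(n)}-BA^0(n)\bigr)X_n+\Delta W^\star(n)(I-P_{A^0(n)})X_n .
\]
The second summand cannot be touched by any choice of $B$.

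The second step is the limit of $B$. Under Assumptions~\ref{asm:smooth}--\ref{asm:hess}, the $B$-subproblem is smooth and strongly convex along the directions spanned by $A^0(n)X_n$, because it is a quadratic in $B$ with Gram matrix $A^0(n)X_nX_n^\top A^0(n)^\top$. I would invoke standard SGD convergence for such quadratics. This gives $B_T(n)\to B_\infty(n)=\arg\min_B\|E_n(B)\|_F^2$, up to an $O(\eta\sigma_g^2)$ stationary noise floor and an $O(1/\sqrt{Km})$ finite-sample and aggregation correction. Both are absorbed into the $\Theta$ constants, as the statement allows.

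The third step computes the minimum of $\|E_n(B)\|_F^2$. This is a projection of the target $\Delta W^\star(n)X_n$ onto $\{BA^0(n)X_n\}$. The residual is $\Delta W^\star(n)(I-P_{A^0(n)})X_n$ projected onto the orthocomplement of $\mathrm{row}(A^0(n)X_n)$.

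\begin{itemize}
\item \emph{Upper bound.} The residual is at most $\sigma_{\max}^2(X_n)\,\delta^0(n)^2$.
\item \emph{Lower bound.} I need that the leakage of $(I-P_{A^0})X_n$ into $\mathrm{row}(A^0X_n)$ does not cancel the misaligned part. I would control this through the conditioning ratio $\sigma_{\max}(X_n)/\sigma_{\min}^+(X_nA^0(n)^\dagger)=O(1)$. Applied to the pseudo-inverse identity $A^0=A^0P_{A^0}$, it gives a residual of at least $c\,\sigma_{\min}^2\,\delta^0(n)^2$.
\end{itemize}

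This lower bound is the step I expect to be the main obstacle. If the direct argument is messy, my first attempt would be to assume, or reduce to, the case where $X_n$ is approximately isotropic up to the $O(1)$ conditioning. In that case the projection residual is exactly $\|\Delta W^\star(I-P_{A^0})\|_F^2$ times a constant. Averaging over $n$ and taking expectation then yields the claimed $\Theta\bigl(\tfrac1N\sum_n\delta^0(n)^2\bigr)$.

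For the RoLoRA statement I would give an adversarial construction. Choose layer configurations, for instance two groups of layers, such that:
\begin{itemize}
\item in one group the score landscape makes updating $A$ necessary to reduce misalignment;
\item in the other group, any round in which $A$ is updated is neutral or harmful;
\item equivalently, the target lies in $\mathrm{row}(A^0)$ and $A$-updates driven by noise rotate the row space away from it.
\end{itemize}
A tied schedule must choose a single mode per round. I would show that, in the worst case over these configurations, at least a constant fraction of the layers retain misalignment comparable to $\delta^0(n)$. Applying the bound from the third step to those layers gives the $\Omega\bigl(\tfrac1N\sum_n\delta^0(n)^2\bigr)$ floor.
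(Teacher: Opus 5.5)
\textbf{Lower bound for FFA-LoRA: the stated ratio cannot close it.} Your argument matches the paper's proof on every other step: freeze $A^0(n)$, reduce to static least squares in $B$, absorb the $O(\eta\sigma_g^2)$ SGD floor, identify the optimal residual as $\Delta W^\star(n)(I-P_{A^0(n)})X_n$ projected off $\mathrm{row}(A^0(n)X_n)$, and bound it above by $\sigma_{\max}^2(X_n)\delta^0(n)^2$. The lower bound, which you flag as the obstacle, is the gap.

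Your primary plan cannot work. The ratio $\sigma_{\max}(X_n)/\sigma_{\min}^+(X_nA^0(n)^\dagger)$ sees $X_n$ only through its top singular value and its action on the columns of $A^0(n)^\dagger$. Those columns span $\mathrm{row}(A^0(n))$, which is the subspace that gets projected out. The identity $A^0=A^0P_{A^0}$ likewise concerns only that subspace. Nothing in either controls how strongly $X_n$ acts on the complement, where the residual lives.

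Concretely, take $d=m=2$, $r=1$, $A^0=e_1^\top$, $B^\star A^\star=e_1e_2^\top$ and $X_n=\mathrm{diag}(1,s)$. The ratio is $1$ and $\delta^0=1$, yet $\min_B\|B^\star A^\star X_n-BA^0X_n\|_F^2=s^2\to0$.

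\textbf{The fix.} Your isotropic fallback is the right repair, but it needs an argument rather than ``exactly a constant times''. The minimization over $B$ decouples over rows. With $w_i^\top$ the rows of $\Delta W^\star(n)$ and $S=X_nX_n^\top$,
\[
\min_B\|B^\star A^\star X_n-BA^0X_n\|_F^2=\sum_i\min_{u\in\mathrm{row}(A^0)}(w_i-u)^\top S\,(w_i-u).
\]
Since $\min_{u\in\mathrm{row}(A^0)}\|w_i-u\|=\|(I-P_{A^0})w_i\|$, this lies between $\lambda_{\min}(S)\,\delta^0(n)^2$ and $\lambda_{\max}(S)\,\delta^0(n)^2$. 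So the hypothesis that actually yields $\Theta(\delta^0(n)^2)$ is a bounded condition number of $X_nX_n^\top$ (with $S\succ0$), not the stated ratio.

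This is the same content as the paper's route. The paper uses the trace identity, the Schur complement $M_n=S-SA^{0\top}(A^0SA^{0\top})^{-1}A^0S$ with kernel $\mathrm{row}(A^0)$, concentration to $\Sigma_X^\perp$ (the source of the $O(1/\sqrt{Km})$ term), and the reverse norm inequality with $\lambda_{\min}^+(\Sigma_X^\perp)$. It too closes the step only by declaring $\lambda_{\min}^+(\Sigma_X^\perp)=\Theta(1)$, which is what a well-conditioned $\Sigma_X$ provides. Working directly with the empirical $X_n$ that appears in $\mathcal R_{\mathrm{rec}}$, as you do, makes the concentration step unnecessary.

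\textbf{RoLoRA clause: the two-group construction does not produce the floor.} Layers whose target lies in $\mathrm{row}(A^0(n))$ have $\delta^0(n)=0$, so they add nothing to $\tfrac1N\sum_n\delta^0(n)^2$. In the reconstruction model with $S\succ0$, an $A$-step (strongly convex by Assumption~\ref{asm:hess}) moves $A(n)$ toward $(B^\top B)^{-1}B^\top\Delta W^\star(n)$. That minimizer's row space lies in $\mathrm{row}(\Delta W^\star(n))$, so any rotation away is transient or at the $O(\eta\sigma_g^2)$ level the statement already absorbs.

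More fundamentally, RoLoRA activates $A$ on every layer in half the rounds. The contraction $\delta^{t+1}(n)\le\gamma_n\delta^t(n)$ behind Corollary~\ref{cor:asstrict} needs only $\Theta(T)$ such rounds per layer. Inside this model, then, $\delta^T(n)\to0$ for all $n$, and no $T\to\infty$ floor can come from a mode trade-off.

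The paper does not escape this either. Its pigeonhole argument takes per-layer optimal $A$-active sets that partition $[T]$, giving a layer $n^\star$ with at most $T/(2N)$ matched rounds. It then obtains $\delta^T(n^\star)\ge(1-\epsilon)\delta^0(n^\star)$ only by explicitly restricting to a regime (finite $T$, or contraction ratio near $1$) where those rounds are insufficient. To reach the stated $\Omega$ bound you would have to adopt that restriction as a hypothesis; your schedule-versus-harm trade-off does not supply it.
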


The preliminaries can be found in Appendix \ref{app:prelim}, the proof of Theorem~\ref{thm:irreducible} can be found in Appendix~\ref{app:proof_of_irreducible}, and the proof of Corollary \ref{cor:asstrict} can be found in Appendix \ref{app:proof-asstrict}.

\begin{corollary}[Strict improvement of AS-LoRA]\label{cor:asstrict}
Under the same well-conditioned input regime as Theorem~\ref{thm:irreducible}: if AS-LoRA activates $A^{(n)}$ on $\Theta(T)$ rounds per layer, then for any $\varepsilon>0$ there exists $T=O(\log(1/\varepsilon))$ with $\delta^T(n)\!\le\!\varepsilon$ for all $n$, and
\begin{equation}
    \lim_{T\to\infty}\E\bigl[\mathcal R_\text{rec}(\mathbf W_T^{\text{AS}})\bigr] \;=\; O\!\bigl(\varepsilon^2\bigr) \;<\; \lim_{T\to\infty}\E\bigl[\mathcal R_\text{rec}(\mathbf W_T^{\text{FFA}})\bigr] \;=\; \Theta\!\bigl(\tfrac{1}{N}\!\sum_n(\delta^0(n))^2\bigr)
\end{equation}
whenever any $\delta^0(n_0)>\varepsilon$ for at least one layer $n_0$ (the FFA floor's $\Theta(1)$ constant is bounded below).
\end{corollary}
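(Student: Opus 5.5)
The proof has three steps. First, \(\delta^t(n)\) contracts geometrically on the rounds where \(A^{(n)}\) is active. Second, once \(A\) is aligned, the \(B\)-rounds reduce to a well-conditioned least-squares problem in \(B\). Third, we compare the resulting limit with the FFA floor of Theorem~\ref{thm:irreducible}.

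\textbf{Step 1: geometric contraction of the misalignment.} Fix a layer \(n\) and consider a round in which AS-LoRA activates \(A^{(n)}\) with \(B^t(n)\) held fixed. Under Assumptions~\ref{asm:smooth}--\ref{asm:hess}, the population loss restricted to layer \(n\) is locally quadratic near \(\Delta W^\star(n)\). The gradient with respect to \(A\) is \(B^\top(\cdot)\), applied to the residual \((BA-\Delta W^\star)X_nX_n^\top\).

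A gradient step on \(A\) therefore moves \(\mathrm{row}(A)\) toward \(\mathrm{row}(\Delta W^\star(n))\). I would take the one-step analysis of \cite{chen2025robust} for \(r=1\) and lift it to rank \(r\) via principal angles between the row spaces. The target bound is
\begin{equation*}
\delta^{t+1}(n)\le(1-c)\,\delta^t(n),
\end{equation*}
where \(c>0\) depends only on \(\eta\) and on the conditioning ratio \(\sigma_{\max}(X_n)/\sigma_{\min}^+(X_nA^0(n)^\dagger)=O(1)\). Since \(\delta^t(n)\) depends only on \(A^t(n)\), it is unchanged on \(B\)-rounds. If \(A^{(n)}\) is active on at least \(\alpha T\) rounds, then
\begin{equation*}
\delta^T(n)\le(1-c)^{\alpha T}\delta^0(n)\le(1-c)^{\alpha T}\|\Delta W^\star(n)\|_F .
\end{equation*}
Choosing \(T\ge \log(\|\Delta W^\star\|_F/\varepsilon)/(\alpha c)=O(\log(1/\varepsilon))\) makes \(\delta^T(n)\le\varepsilon\) for every \(n\). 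Uniformity over layers comes from the uniform conditioning assumption.

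\textbf{Step 2: the \(B\)-subproblem.} With \(A^T(n)\) fixed, minimizing over \(B\) is least squares, and its minimum reconstruction error is exactly the component of \(\Delta W^\star X_n\) outside \(\mathrm{row}(A)\). That error is bounded by \(\sigma_{\max}^2(X_n)\,(\delta^T(n))^2=O(\varepsilon^2)\).

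The \(\Theta(T)\) \(B\)-rounds converge linearly to this minimizer, up to the SGD floor \(O(\eta\sigma_g^2)\) and the finite-sample correction \(O(1/\sqrt{Km})\). These are the same terms absorbed in Theorem~\ref{thm:irreducible}. Continuing to interleave \(A\)-rounds only decreases \(\delta\) further, so the limit is monotone in the appropriate sense. Averaging over layers gives \(\lim_T\E[\mathcal R_\text{rec}(\mathbf W_T^{\text{AS}})]=O(\varepsilon^2)\).

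\textbf{Step 3: comparison with FFA.} Theorem~\ref{thm:irreducible} gives the FFA limit as \(\Theta(\frac1N\sum_n(\delta^0(n))^2)\), with a lower constant \(c_0>0\). That limit is at least \(c_0(\delta^0(n_0))^2/N>c_0\varepsilon^2/N\). Strictness then follows once \(\varepsilon\) is taken small relative to the fixed misalignment \(\delta^0(n_0)\), which is legitimate because \(\varepsilon\) is arbitrary while \(\delta^0(n_0)\) and \(N\) are fixed.

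\textbf{Main obstacle.} The contraction in Step 1 is the hard part. The difficulty is that the \(A\)-gradient is premultiplied by \(B^\top\), so if \(B^t(n)\) is nearly rank-deficient, for instance at its zero initialization, progress on \(\delta\) can stall. I would handle this by showing that after the warm-up alternation, \(\sigma_{\min}(B^t(n))\) is bounded below by a constant. This follows because each \(B\)-round fits \(B\) to the nonzero projected target \(\Delta W^\star X_n\). If that lower bound proves hard to get in full generality, I would state it as part of the regime assumption, which is consistent with the \(O(1)\) conditioning hypothesis already imposed.
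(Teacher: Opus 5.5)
Your outer skeleton matches the paper's. Both arguments use four ingredients: $\delta^t(n)$ depends only on $A^t(n)$ and is frozen on $B$-rounds; it decays geometrically on the (at least $cT$) $A$-rounds, which gives $T=O(\log(1/\varepsilon))$; the error is then bounded in terms of $\delta^T(n)$; and the final comparison uses the lower $\Theta$-constant of Theorem~\ref{thm:irreducible} with $\varepsilon\ll\delta^0(n_0)$. Your Step~2 is, if anything, cleaner than the paper's ``substitute $\delta^T(n)\le\varepsilon$ into the sandwich''. The feasible choice $B=\Delta W^\star(n)A^\dagger$ leaves residual $\Delta W^\star(n)(I-P_A)X_n$, so the error is at most $\sigma_{\max}^2(X_n)(\delta^T(n))^2$. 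This is an upper bound, not ``exactly'' the minimum; the minimum is the part of $\Delta W^\star(n)X_n$ outside $\mathrm{row}(AX_n)\subseteq\R^m$.

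The gap is Step~1, which carries the whole corollary and which you only state as a target. As stated, the target is false. Take $r=1$, $d=2$, $X_nX_n^\top=I$, $B=B^\star=(1,0)^\top$, $A^\star=(1,0)$, $A^t=(-1,1)$. The $A$-subproblem is $\tfrac12\|A-A^\star\|^2$ with $\mu_A=L_A=1$. One step with $\eta=\tfrac12$ gives $A^{t+1}=(0,\tfrac12)$, so $\delta$ rises from $1/\sqrt2$ to $1$, even though $\|A^t-A^\star\|_F$ halves.

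In this example $\sigma_{\min}(B)=\sigma_{\min}(A^\star)=1$ and the inputs are isotropic. So your proposed repair, lower-bounding $\sigma_{\min}(B^t(n))$ after warm-up, cannot yield a per-round contraction. Nor can the rate depend only on $\eta$ and the input conditioning, since $B$ enters the $A$-gradient. The example also undercuts the monotonicity of $\delta$ you invoke in Step~2.

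What actually drives per-round angle contraction is orientation consistency between $B$ and $A$ relative to the target. For $r=1$ and isotropic inputs, this means $\langle B,B^\star\rangle\langle A,A^\star\rangle>0$; it fails in the example, where the product is $-1$. It holds automatically when $B$ has just been fit to the current $A$, which is the structure an alternating, RoLoRA-type analysis relies on. For general $r$ one also needs $B^\top B^\star$ nonsingular, so that the $A$-subproblem's minimizer has row space $\mathrm{row}(A^\star)$. Neither condition follows from a singular-value bound on $B$.

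The paper never tracks the angle round by round. Its chain has four steps:
\begin{enumerate}
\item It makes the $A$-subproblem strongly convex via Assumption~\ref{asm:hess}. For the layer quadratic, $H_A^{(n)}=X_nX_n^\top\otimes B^\top B$, so $\mu_A^{(n)}>0$ is exactly where a lower bound on $\sigma_{\min}(B)$ is assumed rather than derived.
\item Lemma~\ref{lem:contract} contracts $\|A^t(n)-A_{\mathrm{opt}}(n)\|_F$ by $\gamma_n=\sqrt{1-\mu_A^{(n)}/L_A^{(n)}}$.
\item Wedin's $\sin\Theta$ theorem gives $\|P_{A^t}-P_{A_{\mathrm{opt}}}\|_F\le C\|A^t-A_{\mathrm{opt}}\|_F/\sigma_{\min}(A_{\mathrm{opt}})$.
\item The identity $B^\star A^\star(I-P_{A_{\mathrm{opt}}})=0$ gives $\delta^t(n)\le\|B^\star A^\star\|_{\mathrm{op}}\|P_{A_{\mathrm{opt}}}-P_{A^t}\|_F$.
\end{enumerate}
Read carefully, this chain yields $\delta(n)\le C'\gamma_n^{k}\|A^0(n)-A_{\mathrm{opt}}(n)\|_F$ after $k$ $A$-rounds. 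That is geometric decay up to a constant, not the per-step inequality displayed in \eqref{geometric_cont}, which the example above shows is also too strong. The weaker form is all the $O(\log(1/\varepsilon))$ count needs.

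So what your plan lacks, relative to the paper, is a potential that provably contracts on $A$-rounds under the stated assumptions, with the conversion to $\delta$ done only at the end. There is a trade-off. Your angle potential is invariant under $B$-rounds, whereas $\|A^t-A_{\mathrm{opt}}\|_F$ is not, because $A_{\mathrm{opt}}$ depends on the current $B$; the paper sidesteps this by treating $A_{\mathrm{opt}}$ as a fixed gauge representative of $A^\star$. An angle-based argument would therefore suit interleaved schedules better, but only once it comes with a genuine contraction lemma under the orientation condition above.
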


Having established that per-layer freedom is the right design axis, we now ask which block to pick at each round. Here, we discuss the improvement via the selection based on $S_a^{(n)}$. For this, we first define the \emph{adaptive gain} $\Delta_{\mathcal S}^t(n) := \E\!\left[\max_{a\in\{A,B\}} S_a^t (n)\right] -\frac{1}{2}\,\E\!\left[S_A^{t}(n)+S_B^{t}(n)\right] $
the per-layer gap between the score of AS-LoRA's greedy pick and the uniform baseline. This is the score-space version of the \emph{suboptimality gap} used in Gauss--Southwell analyses of BCD \cite{nesterov2012bcd}. In addition, We collect $1-\tfrac{\eta}{2}\lambda_a^{(n)}$ across the trajectory and across layers, and define $\rho^\star := \inf_{t,n}\Bigl(1-\tfrac{\eta}{2}\lambda_{M^t(n)}^{(n)}\Bigr) \;\in\;(0,1]$,
the \emph{curvature-penalty ratio}, where $\lambda_{a}^{(n)}:=(g_{a}^{(n)})^{\top} H_a^{n} g_a^{(n)}/\|g_a^{(n)}\|^2$ the block Rayleigh quotient. Intuitively, $\rho^\star$ is the fraction of pure-descent score that survives after the curvature penalty under the AS-LoRA selection rule. Note that the curvature-penalty ratio with AS-LoRA is larger than the value with static schedules in the late-training regime where $\|g_A^{(n)}\|^2 \approx \|g_B^{(n)}\|^2$. The following theorem provides the convergence of AS-LoRA and improvement compared to the static schedules characterized by the factors $\overline{\Delta_{\mathcal S}}$, and $\rho^\star$   
\begin{theorem}[Accelerated Convergence with quantities $\overline{\Delta_{\mathcal S}}$, and $\rho^\star$]\label{thm:conv}
Under Assumptions~\ref{asm:smooth}--\ref{asm:hess} with $\eta\le 1/\beta$,
\begin{equation} \label{non-convex_bound}
\min_{t\le T}\E\|\nabla\Loss(W_t)\|^2 \;\le\; \frac{2(\Loss(W_0)-\Loss^\star)}{\eta\,\rho^\star\,T} \;-\; \frac{2\,\overline{\Delta_{\mathcal S}}}{\rho^\star} \;+\; \frac{\eta\beta\sigma_g^2}{\rho^\star},    
\end{equation}
where $\overline{\Delta_{\mathcal S}}:=\tfrac{1}{T}\sum_{t=1}^{T}\tfrac{1}{N}\sum_{n=1}^{N}\Delta_{\mathcal S}^t(n)$. Under the additional PL condition (Assumption~\ref{asm:pl})~\citep{karimi2016pl},
$
\E[\Loss(W_T)-\Loss^\star]\le(1-\eta\mu\rho^\star)^T(\Loss(W_0)-\Loss^\star)+O(\eta\sigma_g^2/\mu).
$
\end{theorem}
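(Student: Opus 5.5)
The argument is a standard block-coordinate descent analysis under $\beta$-smoothness (Assumption~\ref{asm:smooth}), with the per-layer block $M^t(n)$ chosen by the score. We work with the idealized rule $M^t(n)=\arg\max_a S_a^t(n)$; as stated in Section~\ref{sec:theory}, the EMA, softmax and projection are set aside. Fix round $t$ and let $g_t$ be the stochastic gradient restricted to the selected blocks, with variance at most $\sigma_g^2$ (Assumption~\ref{asm:hess} or its noise part). The starting point is the second-order expansion used to motivate \eqref{eq:score_function}, which turns the expected one-step decrease into the selected scores:
\[
\E[\Loss(W_{t+1})\mid W_t] \le \Loss(W_t)-\eta\sum_n S^t_{M^t(n)}(n)+\tfrac{\eta^2\beta}{2}\sigma_g^2 .
\]
Here the quadratic term equals the Rayleigh-quotient term $\tfrac{\eta}{2}\lambda_a^{(n)}\|g_a^{(n)}\|^2$ up to a third-order remainder. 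That remainder is controlled by the Hessian-Lipschitz part of Assumption~\ref{asm:hess} together with $\eta\le 1/\beta$, and is absorbed into the noise term.

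The key step is to lower-bound the selected score in two ways and combine them. By the definition of $\Delta_{\mathcal S}^t(n)$,
\[
\E[S^t_{M^t(n)}(n)] = \tfrac12\E[S_A^t(n)+S_B^t(n)] + \Delta_{\mathcal S}^t(n).
\]
By the definition of $\rho^\star$, each score satisfies $S_a=(1-\tfrac{\eta}{2}\lambda_a)\|g_a\|^2$. Using $\rho^\star$ for the averaged part requires the infimum to cover both blocks. To handle this I would define $\rho^\star$ along the selected block, use $S_{\max}\ge S_{\text{other}}$, and then argue
\[
\tfrac12(S_A+S_B)\ge \tfrac{\rho^\star}{2}\bigl(\|g_A\|^2+\|g_B\|^2\bigr)=\tfrac{\rho^\star}{2}\|\nabla_{(n)}\Loss\|^2 .
\]
This is the main obstacle. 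If the unselected block has a larger Rayleigh quotient, the bound can fail, so I expect to need either a mild assumption that $\rho^\star$ bounds both blocks or to fold the discrepancy into $\Delta_{\mathcal S}$; I would try the latter first. Summing over layers then gives
\[
\E\Loss(W_{t+1})\le \E\Loss(W_t)-\tfrac{\eta\rho^\star}{2}\E\|\nabla\Loss(W_t)\|^2-\eta\sum_n\Delta^t_{\mathcal S}(n)+\tfrac{\eta^2\beta\sigma_g^2}{2},
\]
with the layer average $1/N$ matching the normalization in $\overline{\Delta_{\mathcal S}}$ after rescaling $\eta$ per layer.

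For the non-convex bound \eqref{non-convex_bound}, telescope over $t=0,\dots,T-1$, bound $\Loss(W_T)\ge\Loss^\star$, divide by $\eta\rho^\star T/2$, and bound the minimum by the average. This produces the three terms $2(\Loss(W_0)-\Loss^\star)/(\eta\rho^\star T)$, $-2\overline{\Delta_{\mathcal S}}/\rho^\star$ and $\eta\beta\sigma_g^2/\rho^\star$. For the PL case (Assumption~\ref{asm:pl}), substitute $\|\nabla\Loss\|^2\ge 2\mu(\Loss-\Loss^\star)$ into the per-step inequality. Dropping the nonnegative gain term $\Delta_{\mathcal S}\ge0$, which holds since a maximum is at least an average, gives the contraction
\[
\E[\Loss(W_{t+1})-\Loss^\star]\le(1-\eta\mu\rho^\star)\E[\Loss(W_t)-\Loss^\star]+\tfrac{\eta^2\beta\sigma_g^2}{2}.
\]
Unrolling this and summing the geometric series yields the floor $\eta\beta\sigma_g^2/(2\mu\rho^\star)=O(\eta\sigma_g^2/\mu)$, treating $\beta$ and $\rho^\star$ as constants.
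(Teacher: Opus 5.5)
Your outline follows the paper's proof step for step. Both use a Taylor-based per-round decrease written through the selected scores and the split $\E[S^t_{M^t(n)}(n)]=\tfrac12\E[S_A^t(n)+S_B^t(n)]+\Delta_{\mathcal S}^t(n)$. Both then apply a $\rho^\star$ lower bound to the averaged part and telescope, and under PL they drop $\Delta_{\mathcal S}\ge 0$ to reach the floor $\eta\beta\sigma_g^2/(2\mu\rho^\star)$.

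The step you flag is a genuine gap, and the fix you would try first cannot close it. Since $\Delta_{\mathcal S}^t(n)$ is \emph{defined} as $\E[\max_a S_a^t(n)]-\tfrac12\E[S_A^t(n)+S_B^t(n)]$, the split is an identity with no slack. What you need is exactly $\tfrac12\E[S_A^t(n)+S_B^t(n)]\ge\tfrac{\rho^\star}{2}\|\nabla_{(n)}\Loss\|^2$, and pushing any discrepancy into $\Delta_{\mathcal S}$ changes the quantity appearing in the statement.

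The inequality $S_{\max}\ge S_{\text{other}}$ does not deliver it either. Suppose the two scores are nearly tied at value $s$, the selected block has $1-\tfrac{\eta}{2}\lambda=\rho^\star$, and the unselected one has $1-\tfrac{\eta}{2}\lambda=c<\rho^\star$. Then $\tfrac12(S_A+S_B)\approx s$, while $\tfrac{\rho^\star}{2}(\|g_A\|^2+\|g_B\|^2)\approx\tfrac{s}{2}(1+\rho^\star/c)>s$.

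The paper gets past this point only by asserting $S_a(n)\ge\rho^\star\|g_a^{(n)}\|^2$ for \emph{each} block ``by definition of $\rho^\star$''. That is, it tacitly uses $\rho^\star$ as a lower bound on $1-\tfrac{\eta}{2}\lambda_a^{(n)}$ for the unselected block as well. So your other option is what the argument actually needs: read $\rho^\star$ as $\inf_{t,n,a}(1-\tfrac{\eta}{2}\lambda_a^{(n)})$, or add this as a hypothesis. Without it, the only unconditional bound is $1-\tfrac{\eta}{2}\lambda_a^{(n)}\ge\tfrac12$ (from $\lambda_a^{(n)}\le\beta$ and $\eta\le1/\beta$). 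That gives the estimate with $\rho^\star$ replaced by $\tfrac12$, not the stated one. The PL contraction rests on the same step.

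Three smaller remarks follow.

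\emph{Rescaling $\eta$.} Rescaling $\eta$ per layer to match the $1/N$ in $\overline{\Delta_{\mathcal S}}$ is unnecessary, and it would alter the algorithm. Summing over layers gives $-2N\overline{\Delta_{\mathcal S}}/\rho^\star$, which already implies the stated $-2\overline{\Delta_{\mathcal S}}/\rho^\star$ because $\overline{\Delta_{\mathcal S}}\ge 0$ under argmax.

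\emph{The noise term.} The same summation puts a factor $N$ on the noise term if $\sigma_g^2$ is the per-block bound of Assumption~\ref{asm:var}. That is the variance assumption, not Assumption~\ref{asm:hess}. The paper's telescoped inequality carries $\eta^2\beta NT\sigma_g^2/2$ and then silently drops the $N$. Your reading of $\sigma_g^2$ as the total variance over the selected blocks is therefore the consistent one.

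\emph{Rigor of the per-step inequality.} Your per-step inequality is exactly as heuristic as the paper's, on two counts. With all layers updated at once, the cross-layer Hessian blocks contribute genuinely second-order terms, not third-order remainders. And replacing $\langle\nabla_{a^{(n)}}\Loss,g_a^{(n)}\rangle$ by $\|g_a^{(n)}\|^2$ under a $g$-dependent selection costs a term of order $\eta\sigma_g^2$. For $\eta\le 1/\beta$ that term is not dominated by the $\eta^2\beta\sigma_g^2$ term it is said to be absorbed into.
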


The non-convex bound \eqref{non-convex_bound} recovers the standard $O(1/(\eta\rho^\star T))$ gradient-norm rate of SGD \citep{karimi2016pl} but with two distinguishing features. The terms are multiplied by $1/\rho^{\star}$, and the second term $-2\overline{\Delta_{\mathcal S}}/\rho^\star$ guarantees that AS-LoRA's pick tightens the bound directly and beats the uniform baseline. Under the PL condition, the improvement compounds geometrically, yielding a $(1-\eta\mu\rho^\star)^T$ rate that matches the best-known LoRA fine-tuning rate~\citep{liu2022loss,hayou2024lora+} but with $\mu\rho^\star$ in place of $\mu$, i.e.\ AS-LoRA provably converges generally faster than any layer-tied or static baselines. The proof can be found in Appendix \ref{app:proof-conv}. Finally, we discuss that AS-LoRA's trajectory with $S_a^{(n)}$ is biased toward the \emph{flatter} of the two blocks, which can be seen as an implict SAM-style regularization \cite{foret2021sam} or cheap method to achieve the flatness without the SVD overhead in FedSVD \cite{lee2025fedsvd}. The proof can be found in Appendix \ref{app:proof-implicit}.

\begin{theorem}[Implicit flatness via curvature-penalty score]\label{thm:implicit}
For each layer $n$, let $\overline H_t^{(n)} := \E[H_{M^t(n)}^{(n)}]$ denote the expected block Hessian at the selected mode under AS-LoRA's argmax rule. Under Assumptions~\ref{asm:smooth}--\ref{asm:hess},
\begin{equation}    
\E\bigl[\lambda_{\max}\bigl(\overline H_t^{(n),\text{AS}}\bigr)\bigr] \;\le\; \E\bigl[\lambda_{\max}\bigl(\overline H_t^{(n),\text{static}}\bigr)\bigr] \quad\text{for every } n\in[N],
\end{equation}
with strict inequality at every layer at which the two per-block top-spectra $\{\lambda_{\max}(H_A^{(n)}), \lambda_{\max}(H_B^{(n)})\}$ differ. AS-LoRA's trajectory is therefore biased toward the \emph{flatter} of the two blocks, yielding an implicit SAM-style regularization \citep{foret2021sam}.
\end{theorem}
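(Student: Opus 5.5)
The plan is to make the statement precise by reducing it to a pointwise (per-realization) comparison between the curvature of the block AS-LoRA picks and the curvature of the block a static schedule picks. Fix a layer $n$ and a round $t$, and condition on the current iterate $W_t$. A static or layer-tied schedule (FFA-LoRA, RoLoRA, or any fixed alternation) chooses $M^t(n)$ independently of the gradients. Averaged over its schedule, it therefore activates a fixed mixture of $H_A^{(n)}$ and $H_B^{(n)}$; for RoLoRA-type alternation this is equal weight on each block, and for FFA-LoRA the weight sits entirely on $B$. AS-LoRA picks $\arg\max_a S_a^{(n)}$, where
\[
S_a^{(n)} = \|g_a^{(n)}\|^2\bigl(1-\tfrac{\eta}{2}\lambda_a^{(n)}\bigr)
\]
and $\lambda_a^{(n)}$ is the block Rayleigh quotient. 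The goal is to show that this argmax selects the block with smaller top curvature at least as often as any gradient-independent rule.

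\textbf{Step 1 (Rayleigh quotient to top eigenvalue).} Under Assumption~\ref{asm:hess}, I would use the alignment/regularity of the block Hessians: the gradient direction has non-negligible overlap with the leading eigenvector, so that $\lambda_a^{(n)}\ge c\,\lambda_{\max}(H_a^{(n)})$. Together with $\lambda_a^{(n)}\le\lambda_{\max}(H_a^{(n)})$, which follows from $\beta$-smoothness (Assumption~\ref{asm:smooth}), this makes the ordering of the $\lambda_a^{(n)}$ agree with the ordering of the $\lambda_{\max}(H_a^{(n)})$ up to a constant. Then I would work in the late-training regime already invoked in the paper, $\|g_A^{(n)}\|^2\approx\|g_B^{(n)}\|^2$. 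There the score comparison $S_A\gtrless S_B$ reduces to $\lambda_A\lessgtr\lambda_B$, so the argmax rule picks the block of smaller curvature. In the general regime I would state it conditionally: when the gradient norms differ, the selection is driven by the first (Gauss--Southwell) term, and the curvature inequality holds in the weak form because ties in curvature contribute equality.

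\textbf{Step 2 (compare against any mixture).} Write
\[
\lambda_{\max}\bigl(\overline H^{\text{AS}}\bigr)=\lambda_{\max}\bigl(\E[H_{a^\star}]\bigr),
\]
where $a^\star$ is the selected block. Convexity of $\lambda_{\max}$ bounds this by $\E[\lambda_{\max}(H_{a^\star})]=\E[\min_a\lambda_{\max}(H_a)]$. For the static schedule I need the comparison in the other direction: its averaged Hessian is $p H_A+(1-p)H_B$ with $p$ fixed. Here lies the main obstacle. For a mixture of non-commuting matrices, $\lambda_{\max}$ of the mixture can fall below the weighted $\max$, and even below $\min_a\lambda_{\max}(H_a)$ is not excluded in general. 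I would handle it by exploiting the block structure: $H_A$ and $H_B$ act on disjoint parameter blocks. Embedding each in the full LoRA parameter space makes them block-diagonal on orthogonal coordinates, so
\[
\lambda_{\max}\bigl(pH_A\oplus(1-p)H_B\bigr)=\max\bigl(p\lambda_{\max}(H_A),(1-p)\lambda_{\max}(H_B)\bigr).
\]
If that normalization feels unfaithful to the statement, the alternative is to interpret $\lambda_{\max}(\overline H_t)$ per realization, i.e. the expected top eigenvalue of the active block. In that case static gives $p\lambda_{\max}(H_A)+(1-p)\lambda_{\max}(H_B)\ge\min_a\lambda_{\max}(H_a)$ directly, with no matrix-mixture issue. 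I would adopt this per-realization reading as the primary route and remark on the block-diagonal reading.

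\textbf{Step 3 (strictness).} Equality in the per-realization reading requires the static mixture weight to sit entirely on the minimizing block with probability one. When $\lambda_{\max}(H_A^{(n)})\ne\lambda_{\max}(H_B^{(n)})$, a gradient-independent alternation places weight $p\in(0,1)$ on the sharper block, which gives a strict gap of at least $p\,|\lambda_{\max}(H_A^{(n)})-\lambda_{\max}(H_B^{(n)})|$. FFA-LoRA is strict unless $B$ is always the flatter block; I would note this as the adversarial-configuration caveat, mirroring Theorem~\ref{thm:irreducible}. Summing over rounds along the trajectory then yields the SAM-style bias toward the flatter block.
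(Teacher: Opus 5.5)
\textbf{There is a genuine gap.} It is in Step~1, and it cannot be closed from Assumptions~\ref{asm:smooth}--\ref{asm:hess} alone, because under those assumptions the inequality is false.

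\emph{Counterexample.} Drop the superscript $(n)$ and write $a_-$, $a_+$ for the blocks with the smaller and larger $\lambda_{\max}(H_a)$. Take $\eta\le 1/\beta$ as in Theorem~\ref{thm:conv} and $\mu_a I\preceq H_a\preceq\beta I$. Then $1-\tfrac{\eta}{2}\lambda_a\in[\tfrac12,1)$, so $S_a\in[\tfrac12\|g_a\|^2,\|g_a\|^2)$. Suppose $\|g_{a_+}\|^2>2\|g_{a_-}\|^2$ almost surely (e.g.\ $\sigma_g=0$). Then $S_{a_+}\ge\tfrac12\|g_{a_+}\|^2>\|g_{a_-}\|^2\ge S_{a_-}$, so the argmax picks the sharper block whatever the curvatures are, and
\[
\lambda_{\max}\bigl(\overline H_t^{\text{AS}}\bigr)=\lambda_{\max}(H_{a_+})>\tfrac12\bigl(\lambda_{\max}(H_A)+\lambda_{\max}(H_B)\bigr)\ge\lambda_{\max}\bigl(\tfrac12 H_A+\tfrac12 H_B\bigr).
\]
The claim therefore fails against uniform alternation under any reading of $\overline H^{\text{static}}$. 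This is not a pathology for LoRA. The rescaling $(A,B)\mapsto(A/s,sB)$ leaves $W$ unchanged but multiplies $\|g_A\|^2$ and $H_A$ by $s^2$, and $\|g_B\|^2$ and $H_B$ by $s^{-2}$. So for large $s$ the larger-gradient block is also the sharper one.

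\emph{Consequences for your Step~1.} Your general-regime sentence (``ties in curvature contribute equality'') is wrong: the obstruction is gradient-norm dominance, not ties. Your two devices for the comparable-norm regime do not close the gap either:
\begin{itemize}
\item The curvature penalty can offset only a relative gradient-norm advantage of order $\eta|\lambda_{a_+}-\lambda_{a_-}|$ (Rayleigh quotients). So ``$\|g_A\|^2\approx\|g_B\|^2$'' must be quantified at that scale.
\item A bound $c\,\lambda_{\max}(H_a)\le\lambda_a\le\lambda_{\max}(H_a)$ preserves order only up to the factor $1/c$. Assumption~\ref{asm:hess} already gives $c=\mu_a/L_a$, and the upper bound is just the Rayleigh bound, not smoothness. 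If the argmax picks $a$ with $\lambda_a<\lambda_{a'}$, you learn only $\lambda_{\max}(H_a)<\lambda_{\max}(H_{a'})/c$.
\end{itemize}

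\emph{What you need.} You need an explicit extra hypothesis, e.g.
\[
\|g_{a_-}\|^2\bigl(1-\tfrac{\eta}{2}\lambda_{\max}(H_{a_-})\bigr)\;>\;\|g_{a_+}\|^2\bigl(1-\tfrac{\eta}{2}\,c\,\lambda_{\max}(H_{a_+})\bigr)\quad\text{almost surely}.
\]
This forces $S_{a_-}>S_{a_+}$; with equal norms it reduces to $\lambda_{\max}(H_{a_-})<c\,\lambda_{\max}(H_{a_+})$. Under it, your Steps~2--3 go through in the per-realization reading.

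\textbf{Step~2.} Drop the block-diagonal reading rather than remarking on it. With $p=\tfrac12$ it gives $\lambda_{\max}(\overline H^{\text{static}})=\tfrac12\lambda_{\max}(H_{a_+})$. That is below $\lambda_{\max}(H_{a_-})=\lambda_{\max}(\overline H^{\text{AS}})$ whenever the top eigenvalues are within a factor $2$, so this reading reverses the inequality. The per-realization reading is the provable version.

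\textbf{Comparison with the paper.} The per-realization reading is also what the paper's chain effectively compares against. The paper applies Jensen on the AS side, bounds by $\tfrac12(\lambda_{\max}(H_A)+\lambda_{\max}(H_B))$, and calls this the static value. Under the literal definition, that quantity only upper-bounds $\lambda_{\max}(\overline H^{\text{static}})$, which is exactly the obstruction you flagged. The paper's selection step has the same hole as your Step~1. It asserts $p^t(a^\star,n)>\tfrac12$ from the curvature penalty under ``comparable gradient norms''. It then passes from Rayleigh quotients to top eigenvalues via an empirical citation, so the missing argument is not supplied there either. The paper needs only probability $\ge\tfrac12$ on the flatter block, which suffices against uniform alternation. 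Your almost-sure selection is what lets you cover arbitrary static mixtures.

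\textbf{Two smaller points.}
\begin{itemize}
\item Against FFA-LoRA, strictness fails when $B$ is the flatter block, so ``static'' must mean uniform alternation.
\item Conditioning on a common $W_t$ gives only a one-step comparison. ``Summing along the trajectory'' does not follow, because the two runs visit different iterates and hence different $H_a^{(n)}$.
\end{itemize}
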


\vspace{-3mm}

\section{Experiments}

\textbf{Experimental setup.} \label{sec:experimental_setup}
For language tasks, we use RoBERTa-large \cite{liu2019roberta} as the backbone model, while ViT-large \cite{dosovitskiy2020image} is used for image tasks. 
The LoRA rank is set to $r=8$, and the scaling factor is set to $\alpha=8$. 
We consider a federated setting with $K=6$ clients for language tasks and $K=3$ clients for the vision tasks, with full client participation in every communication round.
Data are partitioned using a Dirichlet distribution with $\alpha=0.5$, except for SQuAD \cite{rajpurkar2016squad, rajpurkar2018know}, where data are equally partitioned across clients.
Training is conducted for $T=100$ communication rounds, with $\tau=10$ local steps per round. 
Following the empirical setup of \cite{sun2024improving}, the learning rate $\eta$ is selected from $\{0.05, 0.1, 0.25\}$ for FedLoRA and from $\{0.1, 0.25, 0.5, 1.0\}$ for all other methods. 
The batch size is set to $B=128$. 
For DP-SGD, we set the target failure probability to $\delta = 10^{-5}$ and fix the clipping norm at $C=2$. 
The differential privacy budget $\epsilon$ is computed using Opacus 
\cite{yousefpour2021opacus}. All experiments were run on RTX Pro $6000$ GPUs. Additional details are provided in Appendix~\ref{app:experimental_detail}. Unless otherwise specified, all experiments follow the settings described 
in Section~\ref{sec:experimental_setup} and Appendix~\ref{app:experimental_detail}.


\begin{wraptable}{r}{0.38\linewidth}
\vspace{-12pt}
\centering
\scriptsize
\setlength{\tabcolsep}{5.5pt}
\renewcommand{\arraystretch}{1.15}
\newcommand{\std}[1]{\,{\tiny$\pm$}\,\tiny #1}
\caption{Comparison on MNLI under DP-SGD ($\epsilon = 3$).}
\label{tab:mnli_variant}
\begin{tabular}{lcc}
\toprule
\textbf{Method} & \textbf{MNLI (m)} & \textbf{MNLI (mm)} \\
\midrule
Proposed  (FD)       & 79.15\std{0.23} & 80.03\std{0.40} \\
Proposed (HVP)   & 80.29\std{0.84} & 81.29\std{0.80} \\
\bottomrule
\vspace{-20pt}
\end{tabular}
\end{wraptable} 
For the proposed AS-LoRA, the warm-up ratio is set to $0.1$.  The temperature parameters are set to $T^0 = 2.0$, $T_{\min} = 0.2$, and $\gamma = 0.95$. While the proposed scoring function provides a theoretically grounded criterion,
direct Hessian computation is computationally prohibitive at the scale of large
pretrained models. To make the curvature-aware score practical, we consider two
approximation strategies, Hessian-Vector Products (HVP) \cite{pearlmutter1994fast} and Finite-Difference (FD) \cite{nocedal2006numerical} estimation, which are described in Appendix \ref{sec:Appendix B}. As shown in Table~\ref{tab:mnli_variant}, HVP-based scoring
achieves better performance, whereas FD-based scoring yields competitive
results without explicitly forming or storing the Hessian matrix. We
adopt the FD-based criterion in all subsequent experiments, as it preserves the
curvature-aware nature of the proposed score while maintaining practical
computational cost.

\textbf{Baselines.} We compare our method with the following baselines, which directly align with the design objective of AS-LoRA. Methods that address only the aggregation error~\cite{wang2024flora, singhal2024fedex, bai2024federated} are excluded from comparison, as they have been shown to suffer significant performance degradation under DP noise~\cite{lee2025fedsvd}.\\
\phantom{a}\textbf{FedLoRA} \cite{zhang2024towards}: Both $A$ and $B$ are locally trained and independently aggregated at the server.\\
\phantom{a}\textbf{FFA-LoRA} \cite{sun2024improving}: $A$ is frozen, and only $B$ is locally trained and aggregated.\\
\phantom{a}\textbf{RoLoRA} \cite{chen2025robust}: In odd-numbered rounds, $A$ is frozen and only $B$ is locally trained and aggregated.\\\phantom{a}In even-numbered rounds, $B$ is frozen and only $A$ is locally trained and aggregated.

\begin{table*}[h]
\centering
\scriptsize
\setlength{\tabcolsep}{3pt}
\renewcommand{\arraystretch}{1.05}
\newcommand{\std}[1]{\,{\tiny$\pm$}\,\tiny #1}

\caption{Performance comparison on the GLUE benchmark \cite{wang2018glue} under different privacy budgets.}
\label{tab:glue_dp}

\resizebox{\textwidth}{!}{
\begin{tabular}{c l c c c c c c c}
\toprule
\textbf{DP Budget} & \textbf{Method} & \textbf{QNLI} & \textbf{MNLI (m)} & \textbf{MNLI (mm)} & \textbf{SST-2} & \textbf{QQP}  & \textbf{SNLI} & \textbf{AVG}\\
\midrule

\multirow{4}{*}{$\epsilon = 3$}
 & FedLoRA   & \underline{72.43}\std{10.56} & 63.76\std{8.24} & 64.98\std{8.65} & 91.37\std{1.63} & 67.86\std{10.47} & 78.19\std{2.04} & 73.10\std{10.36} \\
 & FFA-LoRA  & 68.18\std{11.72} & 65.06\std{5.40} & 67.51\std{5.34} & 91.00\std{0.98} & \underline{75.79}\std{1.28} & 77.91\std{7.49} & 74.24\std{8.70} \\
 & RoLoRA    & 65.02\std{13.32} & \underline{69.15}\std{4.50} & \underline{71.02}\std{3.93} & \underline{91.37}\std{0.63} & 72.86\std{1.21} & \underline{82.22}\std{1.49} & \underline{75.27}\std{8.36} \\
 & Proposed  & \textbf{81.06}\std{1.34} & \textbf{79.15}\std{0.23} & \textbf{80.03}\std{0.40} & \textbf{92.86}\std{0.44} & \textbf{78.51}\std{0.38} & \textbf{85.23}\std{0.77} & \textbf{82.81}\std{4.93} \\
\midrule

\multirow{4}{*}{$\epsilon = 1$}
 & FedLoRA   & 55.72\std{1.40} & 58.03\std{8.05} & 59.37\std{8.94} & 57.57\std{0.37} & 65.72\std{9.90} & 74.79\std{1.63} & 61.87\std{6.56} \\
 & FFA-LoRA  & \textbf{71.95}\std{7.92} & \underline{62.70}\std{7.67} & \underline{65.04}\std{7.72} & \underline{79.27}\std{11.71} & 71.00\std{2.41} & \underline{81.37}\std{1.57} & \underline{71.89}\std{6.56} \\
 & RoLoRA    & 53.85\std{5.34} & 59.91\std{10.66} & 61.37\std{10.73} & 62.99\std{13.43} & \underline{73.10}\std{1.59} & 80.97\std{0.79} & 65.37\std{9.26} \\
 & Proposed  & \underline{71.52}\std{12.50} & \textbf{76.71}\std{0.92} & \textbf{77.54}\std{0.77} & \textbf{80.76}\std{14.08} & \textbf{74.94}\std{1.78} & \textbf{84.32}\std{0.64} & \textbf{77.63}\std{4.10} \\
\bottomrule
\end{tabular}
}
\vspace{-16pt}
\end{table*}

\begin{table*}[h]
\centering
\scriptsize
\setlength{\tabcolsep}{2pt}
\renewcommand{\arraystretch}{1.05}
\newcommand{\std}[1]{\,{\tiny$\pm$}\,\tiny #1}

\caption{Performance comparison on SQuAD benchmarks \cite{rajpurkar2016squad, rajpurkar2018know},CIFAR-100 \cite{krizhevsky2009learning} and Tiny-imageNet \cite{deng2009imagenet} under different privacy budgets.}
\label{tab:squad_vision_dp}
\begin{tabular}{c l cc cc cc}
\toprule
\textbf{DP Budget} & \textbf{Method} 
& \multicolumn{4}{c}{\textbf{Question Answering}} 
& \multicolumn{2}{c}{\textbf{Image Classification}} \\

\cmidrule(lr){3-6} \cmidrule(lr){7-8}
& 
& \textbf{SQuAD v1.1 (EM)} & \textbf{SQuAD v1.1 (F1)} 
& \textbf{SQuAD v2.0 (EM)} & \textbf{SQuAD v2.0 (F1)} 
& \textbf{CIFAR-100} & \textbf{Tiny-ImageNet} \\

\midrule

\multirow{3}{*}{$\epsilon = 3$}
& FFA-LoRA  
& \underline{72.30}\std{3.52} & \underline{82.83}\std{2.83} 
& \underline{55.90}\std{3.66} & \underline{59.92}\std{3.93} 
& 87.37\std{0.26} & \underline{85.50}\std{0.15} \\

& RoLoRA    
& 69.75\std{2.70} & 80.63\std{2.30} 
& 53.58\std{1.22} & 56.88\std{1.16} 
& \underline{87.75}\std{0.40} & 84.62\std{0.13} \\

& Proposed  
& $\textbf{73.98}$\std{1.36} & $\textbf{83.99}$\std{1.17} 
& $\textbf{60.59}$\std{2.97} & $\textbf{64.23}$\std{3.62} 
& $\textbf{89.55}$\std{0.20} & $\textbf{86.66}$\std{0.32} \\

\midrule

\multirow{3}{*}{$\epsilon = 1$}
& FFA-LoRA  
& \underline{70.42}\std{3.91} & \underline{81.24}\std{3.26} 
& \underline{54.33}\std{3.91} & \underline{58.58}\std{4.16} 
& \underline{84.79}\std{0.23} & \underline{83.52}\std{0.26} \\

& RoLoRA    
& 61.69\std{3.81} & 73.94\std{3.10} 
& 51.48\std{1.47} & 53.59\std{1.80} 
& 84.64\std{0.38} & 82.34\std{0.14} \\

& Proposed  
& $\textbf{72.78}$\std{1.50} & $\textbf{83.23}$\std{1.37} 
& $\textbf{58.09}$\std{2.57} & $\textbf{61.64}$\std{3.16} 
& $\textbf{85.62}$\std{0.24} & $\textbf{83.71}$\std{0.13} \\

\bottomrule
\end{tabular}
\vspace{-0pt}
\end{table*}

\textbf{Comparison results.} All results are reported as mean $\pm$ standard deviation over $4$ runs with different seeds. Best results are shown in \textbf{bold} and second-best results are \underline{underlined}. Table~\ref{tab:glue_dp} reports the performance on the GLUE benchmark \cite{wang2018glue} under different privacy budgets. The proposed method consistently outperforms all baselines across tasks.
Under a privacy budget of $\epsilon = 3$, AS-LoRA achieves the best average score of $\mathbf{82.81}$, outperforming RoLoRA ($75.27$) by +$\mathbf{7.54}$ percentage points (pp). 
Under a budget of $\epsilon = 1$, where training instability is more severe, AS-LoRA attains $\mathbf{77.64}$, surpassing FFA-LoRA ($\mathbf{71.89}$) by +$\mathbf{5.74}$ pp. In particular, significant improvements are observed on MNLI (mm) (+$12.50$ pp over FFA-LoRA) and consistent gains across SNLI and QQP.
As shown in Table~\ref{tab:squad_vision_dp}, these improvements extend beyond classification tasks. On SQuAD, AS-LoRA achieves higher performance across both v1.1 and v2.0 metrics, e.g., +$1.35$ EM and +$1.96$ F1 over FFA-LoRA on SQuAD v1.1 under $\epsilon = 3$. Moreover, the method consistently improves vision benchmarks, achieving +$2.34$ pp on CIFAR-100 and +$1.19$ pp on Tiny-ImageNet. Extended results on LoRA rank and heterogeneity of data distribution can be found in Appendix~\ref{app:alpha_rank}.

\begin{wraptable}{r}{0.52\linewidth}
\vspace{-13pt}
\centering
\scriptsize
\setlength{\tabcolsep}{2pt}
\renewcommand{\arraystretch}{1.05}
\newcommand{\std}[1]{\,{\tiny$\pm$}\,\tiny #1}

\caption{Comparison across mode selection granularity.(\textit{Per-client}: layer-wise and client-wise, \textit{AS-LoRA}: layer-wise and client-tied, \textit{Global}: layer-tied and client-tied)}
\label{tab:mode_granularity}

\begin{tabular}{c l c c c}
\toprule
\textbf{DP Budget} & \textbf{Method} & \textbf{MNLI (m)} & \textbf{MNLI (mm)} & \textbf{QQP} \\
\midrule

\multirow{3}{*}{$\epsilon$ = 3}
& Per-client & 72.73\std{2.07} & 73.73\std{2.23} & 63.18\std{0.00} \\
& AS-LoRA (ours) & $\textbf{79.15}$\std{0.23} & $\textbf{80.03}$\std{0.40} & $\textbf{78.51}$\std{0.38} \\
& Global & 75.74\std{2.53} & 76.98\std{2.44} & 73.61\std{2.75} \\

\midrule

\multirow{3}{*}{$\epsilon$ = 1}
& Per-client & 34.60\std{1.72} & 34.53\std{1.37} & 63.18\std{0.00} \\
& AS-LoRA (ours) & $\textbf{76.71}$\std{0.92} & $\textbf{77.54}$\std{0.77} & $\textbf{74.94}$\std{1.78} \\
& Global & 72.54\std{3.85} & 73.79\std{3.87} & 72.62\std{3.26} \\
\bottomrule
\end{tabular}
\vspace{-15pt}
\end{wraptable}

\textbf{Granularity of mode selection scope.}
While AS-LoRA performs layer-wise mode selection that is shared across all clients, we evaluate two alternatives to justify this design choice. Both are round-wise adaptive as AS-LoRA, but differ in selection granularity. \textit{Global} shares a single mode across all layers and clients, whereas \textit{Per-client} selects modes independently for each layer \emph{and} each client. As shown in Table~\ref{tab:mode_granularity}, \textit{Per-client}, yields the worst performance across all tasks and privacy budgets. It is because the aggregated model suffers from aggregation error. \textit{Global} also underperforms AS-LoRA. Sharing a single mode across all layers prevents the model from exploiting layer-specific dynamics. AS-LoRA strikes the right balance, reserving model expressiveness without compromising aggregation stability.









\begin{wraptable}{r}{0.49\linewidth}
\vspace{-13pt}
\centering
\scriptsize
\setlength{\tabcolsep}{2pt}
\renewcommand{\arraystretch}{1.05}
\newcommand{\std}[1]{\,{\tiny$\pm$}\,\tiny #1}

\caption{Comparison with FedSVD \cite{lee2025fedsvd} in performance and aggregation time.}
\label{tab:analysis_svd}

\begin{tabular}{c l c c c}
\toprule
\textbf{DP Budget} & \textbf{Method} & \textbf{Time (s)} & \textbf{MNLI (m)} & \textbf{MNLI (mm)} \\
\midrule

\multirow{4}{*}{$\epsilon = 3$}
& FedSVD(10) & 1.00 (33$\times$) & 78.16\std{1.01} & 79.43\std{1.21} \\
& FedSVD(5)  & 1.80 (60$\times$) & 77.06\std{1.47} & 79.07\std{1.10} \\
& FedSVD(1)  & 5.44 (181$\times$) & 78.49\std{0.89} & 79.75\std{0.68} \\
& Proposed   & \textbf{0.03 (1$\times$)} & \textbf{79.15}\std{0.23} & \textbf{80.03}\std{0.40} \\
\midrule

\multirow{4}{*}{$\epsilon = 1$}
& FedSVD(10) & 1.00 (33$\times$) & 74.24\std{1.48} & 75.69\std{1.45} \\
& FedSVD(5)  & 1.80 (60$\times$) & \textbf{76.97}\std{1.25} & \textbf{78.16}\std{1.13} \\
& FedSVD(1)  & 5.44 (181$\times$) & 76.62\std{0.77} & 77.79\std{0.60} \\
& Proposed   & \textbf{0.03 (1$\times$)} & 76.71\std{0.92} & 77.54\std{0.77} \\
\bottomrule
\end{tabular}

\vspace{-10pt}
\end{wraptable}

\textbf{Comparison with SVD-based method.} 
Table~\ref{tab:analysis_svd} compares AS-LoRA with \emph{FedSVD} \cite{lee2025fedsvd} the representative SVD-based method. FedSVD improves performance by introducing an additional SVD computation at the server. FedSVD$(n)$ applies SVD at every $n$-th communication rounds with standard FedAvg used otherwise.
Under $\epsilon=3$, the proposed method achieves the best performance while requiring substantially lower aggregation time. Under $\epsilon=1$, FedSVD$(5)$ shows slightly higher performance, but the gap is marginal. 
Moreover, while the proposed method relies on the FD-based score, it can scale to heavier approximations (HVP) as shown in Table~\ref{tab:mnli_variant}.

\begin{figure*}[h]
    \centering
    \includegraphics[width=1.0\textwidth]{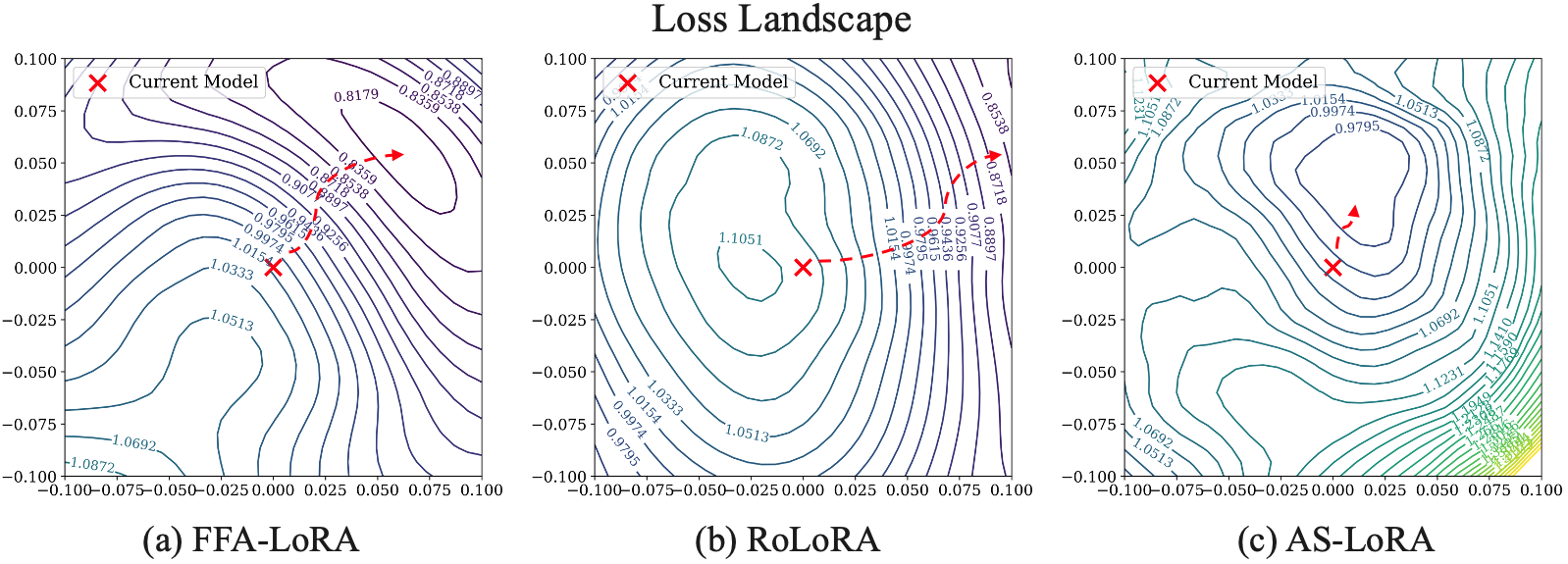}
    \caption{Loss landscape visualization of FFA-LoRA, RoLoRA, and AS-LoRA trained on MNLI with $T=100$ and $\tau=10$ under DP-SGD ($\epsilon=3$).}
    \label{fig:loss_landscape}
    \vspace{-5pt}
\end{figure*}

\begin{wraptable}{r}{0.35\linewidth}
\vspace{-13pt}
\centering
\scriptsize
\setlength{\tabcolsep}{2pt}
\renewcommand{\arraystretch}{1.05}
\newcommand{\std}[1]{\,{\tiny$\pm$}\,\tiny #1}

\caption{Comparison of SAM-style perturbation sharpness on MNLI.} 
\label{tab:perturbation_sharpness}

\begin{tabular}{lcc}
\toprule
\textbf{Method} & \textbf{PS} & $\boldsymbol{\Delta}$\textbf{Acc} \\
\midrule
FFA-LoRA & 0.03608 \std{0.04565} & 0.981\std{1.182} \\
RoLoRA   & 0.01972 \std{0.02530} & 0.321\std{0.045} \\
AS-LoRA  & $\textbf{0.00992}$\std{0.01281} & $\textbf{0.247}$\std{0.282} \\
\bottomrule
\end{tabular}
\vspace{-5pt}
\end{wraptable}

\textbf{Sharpness and loss landscape analysis.} 
To empirically validate Theorem~\ref{thm:implicit}, which establishes that AS-LoRA's curvature-penalty score implicitly biases the trajectory toward flatter regions of the loss landscape, we measure the SAM-style perturbation sharpness
$
PS = L(w+\rho \hat g)-L(w), ~ 
\hat g = \nabla L(w)/\|\nabla L(w)\|,
$
together with the induced accuracy drop ($\Delta$Acc) \cite{foret2021sam}. The results in Table~\ref{tab:perturbation_sharpness} indicates that the solution found by AS-LoRA is more robust to local perturbations and lies in a flatter region of the loss surface. This observation is further supported by the loss landscape visualization in Figure~\ref{fig:loss_landscape}. Compared with FFA-LoRA and RoLoRA, the contour around the AS-LoRA solution appears to suggest a flatter optimization geometry. Such flatness is known to correlate with improved stability and generalization \cite{kaddour2022flat}, which is consistent with the empirical results in TABLE \ref{tab:glue_dp} and TABLE \ref{tab:squad_vision_dp}.

\begin{wraptable}{r}{0.34\textwidth}
\vspace{-13pt}
\centering
\scriptsize
\setlength{\tabcolsep}{2pt}
\renewcommand{\arraystretch}{1.05}
\newcommand{\std}[1]{\,{\tiny$\pm$}\,\tiny #1}
\caption{
    Comparison of curvature computation schedules in AS-LoRA under DP-SGD ($\epsilon=3$).
}
\label{tab:score_computation}
\begin{tabular}{lcc}
\toprule
\textbf{Method} & \textbf{MNLI (m)} & \textbf{MNLI (mm)} \\
\midrule
\multicolumn{3}{l}{\textit{Periodic computation}} \\
($f{=}5$)   & 77.99\std{1.17} & 79.39\std{1.07} \\
($f{=}10$)  & 78.83\std{1.08} & 79.76\std{0.80} \\
\midrule
\multicolumn{3}{l}{\textit{Late-phase computation}} \\
($T^*{=}80$) & 78.91\std{1.11} & 79.87\std{0.95} \\
($T^*{=}90$) & 78.85\std{0.88} & \textbf{80.11}\std{0.74} \\
\midrule
\multicolumn{3}{l}{\textit{Original}} \\
Proposed (ours) & \textbf{79.15}\std{0.23} & 80.03\std{0.40} \\
\bottomrule
\end{tabular}
\vspace{-20pt}
\end{wraptable}

\textbf{Overhead analysis.}
Under AS-LoRA with FD approximation, each client evaluates the local loss at two perturbed points 
$\mathcal{L}_k^t\!\left(W_t\pm \epsilon_{\mathrm{fd}} v_{a,k}^t(n)\right)$ per layer (Algorithm \ref{alg:score_fd}). In our setting, this incurs a $160\%$ increase in training cost over the baseline, which may be impractical for resource-constrained deployments. We therefore propose two complementary strategies that jointly reduce this overhead from $160\%$ to $\mathbf{8\%}$ while preserving the performance gains. \emph{One-sided FD.} Replacing the symmetric difference with the forward difference $\mathcal{L}_k^t\!\left(W_t+ \epsilon_{\mathrm{fd}} v_{a,k}^t(n)\right)- \mathcal{L}_k^t\!\left(W_t\right)$ requires only a single additional forward pass, halving the curvature-evaluation cost. \emph{Occasional curvature computation.} We reduce the frequency of computing the curvature part $g_a^\top H_a g_a$ in $S_a=\|g_a\|^2-\tfrac{\eta}{2}g_a^\top H_a g_a$, and uses only $\|g_a\|^2$, which is free through training, otherwise. We compare two scheduling variants: \emph{(i) Periodic computation}, inspired by the periodic SVD \cite{lee2025fedsvd}, evaluates the full score every $f \in \{5, 10\}$ rounds, and \emph{(ii) Late-phase computation} evaluates it only after round $T^* \in \{80, 90\}$. The latter is motivated by our empirical observation that $\|g_a\|^2$ dominate the optimization in early training, whereas $g_a^\top H_a g_a$ becomes informative as $\|g_a\|^2$ diminishes. As shown in Table~\ref{tab:score_computation}, late-phase computation matches the full FD-based AS-LoRA and incur only a marginal performance drop in terms of variance, causing only a $\mathbf{8\%}$ computation increase at $T^* = 90$ ($20\times$ reduction from the original). These results suggest that full per-round curvature scoring is unnecessary, and that selective computation, at the last phase in particular, can substantially reduce overhead. Communication overhead is negligible: AS-LoRA introduces only $N$ scalar scores (uplink) and $N$ mode bits (downlink) per round, accounting for less than $0.02\%$ of the active LoRA parameter traffic in our $24$-layer configuration. Detailed analyses are provided in Appendix~\ref{app:communication_overhead} and Appendix~\ref{app:computational_overhead}.

\textbf{Effect of gradient smoothing under DP-SGD.} 
Motivated by prior work~\cite{liu2026rethinking, liang2024differentially, wang2020dp} demonstrating that applying smoothing after Gaussian noise injection in DP-SGD improves utility, we investigate several smoothing strategies such as Gaussian, Laplacian, and EMA applied to our method. Detailed results are provided in Appendix~\ref{app:kernel}.

\vspace{-3mm}

\section{Conclusion} \label{sec:conclusion}

In this paper, we proposed AS-LoRA, an adaptive optimization framework for LoRA in differentially private FL. 
Different to the prior work on federated LoRA whose optimization is fixed or layer-tied, AS-LoRA introduces a layer-wise adaptive selection mechanism that dynamically determines which component to update based on a theoretically grounded score function. 
We provided theoretical analysis showing that AS-LoRA eliminates the irreducible reconstruction error floor of static methods and achieves accelerated convergence with implicit flatter solution, without incurring additional privacy cost. Empirically, AS-LoRA consistently outperforms prior methods across language and vision tasks under non-IID and differentially private settings.

\textbf{Limitations and future works.}
Our current analysis and algorithm focus on layer-wise greedy selection. While this design enables tractable analysis and strong empirical performance, it does not explicitly account for possible dependencies across layers. In practice, the mode selection dynamics in Appendix~\ref{app:mode_selection} suggest that, particularly for language tasks, cross-layer structure may exist in the selection process. Extending AS-LoRA to a joint layer-wise decision framework that models such inter-layer correlations is an important direction for future work.

\textbf{Broader impacts.}
The proposed method advances differentially private FL, which has clear positive societal impact by enabling collaborative model training without exposing private user data—relevant. Potential negative impacts are limited: the method is an optimization technique for fine-tuning foundation models and does not directly enable new harmful capabilities. Improving DP fine-tuning efficiency could in principle lower barriers to deploying powerful models, but this is a general concern shared by all efficiency improvements in machine learning and not specific to this work.
\bibliographystyle{unsrt}  
\bibliography{references}  
\clearpage
\appendix

\section{Related Works}
\label{sec:Appendix A}

\subsection{Federated learning with LoRA}
Recent studies have explored integrating LoRA into FL to alleviate the communication and computation overhead associated with fine-tuning large-scale models \cite{kuo2024federated, cho2024heterogeneous, fang2024automated, yi2023pfedlora}. 
FedIT \cite{zhang2024towards} is one of the earliest attempts to incorporate LoRA into FL, demonstrating that low-rank adapters can effectively reduce communication costs while maintaining competitive performance.
However, FedIT aggregates the two low-rank components in a straightforward manner and does not explicitly account for aggregation errors that arise during the aggregation process. 
To mitigate aggregation mismatch caused by heterogeneous client updates, several extensions have been proposed.
FLoRA \cite{wang2024flora} addresses heterogeneity and aggregation errors by stacking client-specific LoRA adapters at the server for aggregation.
FedEx-LoRA \cite{singhal2024fedex} alleviates aggregation error by introducing a residual correction term added to a fixed pre-trained weight matrix.
FlexLoRA \cite{bai2024federated} further addresses heterogeneity by reconstructing an ideal weight update matrix from the aggregated client adapters at the server, followed by singular value decomposition (SVD) to redistribute LoRA adapters with different ranks to individual clients.

\subsection{Differentially private federated LoRA}
More recent studies have investigated federated LoRA under differential privacy (DP) constraints.
FFA-LoRA \cite{sun2024improving} provides a detailed analysis of the noise amplification problem that arises when DP is applied to federated LoRA training, and mitigates this issue by updating only the $B$ component.
However, restricting updates to the $B$ component leads to a loss of expressiveness in LoRA.
To address this limitation, FedSVD \cite{lee2025fedsvd} similarly updates only the $B$ component, while leveraging orthogonalization and low-rank decomposition to alleviate the expressiveness loss and improve training stability under DP constraints.
Another line of work, which is most closely related to our study, aims to alleviate aggregation error by adopting alternating optimization strategies.
RoLoRA \cite{chen2025robust} and LoRA-A$^2$ \cite{koo2025towards} alternate the optimization of the two LoRA components across communication rounds, updating only one component at a time to reduce aggregation mismatch.
La-LoRA \cite{liu2026rethinking} further extend alternating optimization from communication rounds to local update steps.
While these methods empirically improve training stability, they rely on fixed and symmetric alternating schedules and do not explicitly account for the asymmetric roles of the two LoRA components during training.
In summary, existing federated LoRA-based approaches either restrict updates to a single component or adopt fixed, symmetric alternating schedules, both of which overlook the structural asymmetry between the two LoRA components and the round-wise dynamics of training. This motivates AS-LoRA, which makes principled, adaptive per-layer decisions based on training dynamics.

\newpage
\section{AS-LoRA Algorithm}
\label{sec:Appendix B}

\begin{figure}[h]
    \centering
    \includegraphics[width=1.0\textwidth]{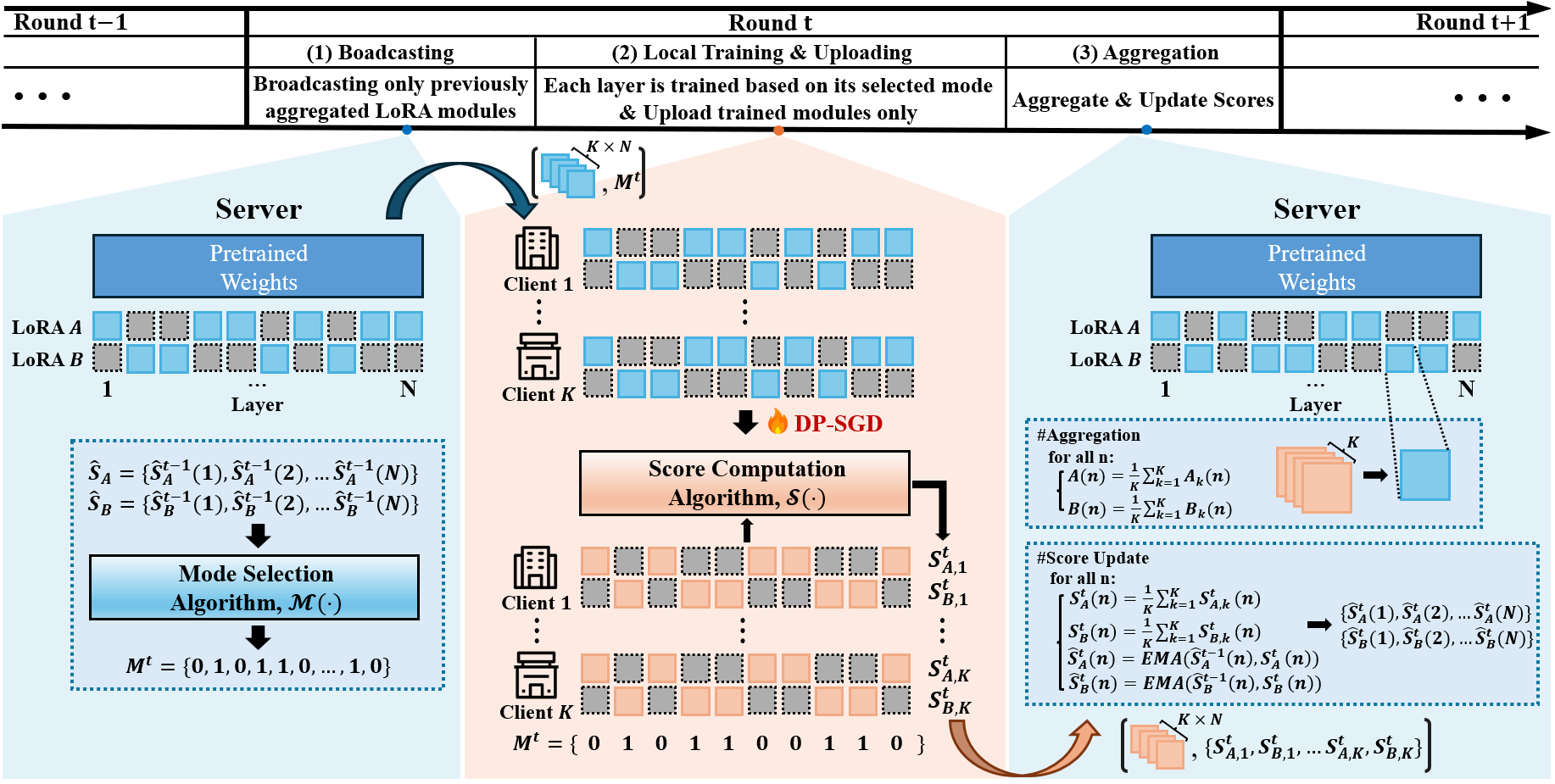}
    \caption{Detail of the proposed AS-LoRA framework. }
    \label{fig:AS-LoRA_detail}
\end{figure}

\begin{figure}[h]
    \centering
    \includegraphics[width=1.0\textwidth]{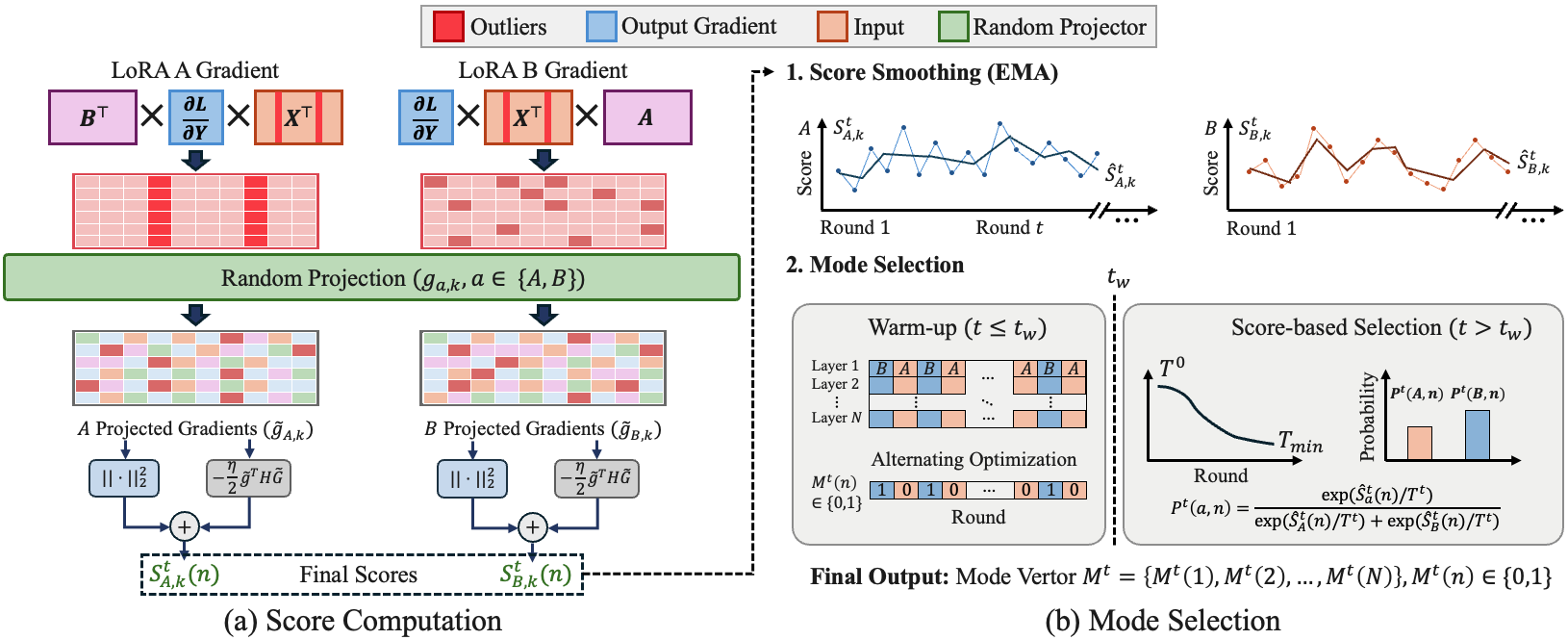}
    \caption{ Detailed visualization of the proposed AS-LoRA score computation and mode selection procedures. (a) Component-wise scores are computed from the projected gradients of LoRA $A$ and $B$ matrices. Random projection mitigates the effect of channel-wise gradient outliers before applying the squared-gradient and curvature-aware score terms. (b) The server smooths the aggregated scores using EMA and performs layer-wise mode selection. During warm-up, AS-LoRA alternates between components, while after warm-up it uses a temperature-scaled score-based selection rule to produce the final mode vector. }
    \label{fig:score_mode_detail}
\end{figure}

\FloatBarrier
\begin{algorithm}[h]
\caption{AS-LoRA}
\label{alg:as_lora}
\begin{algorithmic}[1]

\STATE \textbf{Input:} communication rounds $T$, clients $K$, LoRA layers $N$
\STATE Initialize $A^0(n), B^0(n)$ and server scores 
$\hat{S}_A^{0}(n), \hat{S}_B^{0}(n)$ for all $n$

\FOR{$t=1$ \TO $T$}
    \STATE $M^t \leftarrow \mathcal{M}(\hat{S}_A^{t-1}, \hat{S}_B^{t-1})$
    
    \FOR{$k=1$ \TO $K$ \textbf{in parallel}}
        \STATE Update LoRA parameters using DP-SGD according to $M^t$
        \STATE Compute local scores 
        $S_{A,k}^t(n), S_{B,k}^t(n) \leftarrow \mathcal{S}(\cdot)$
        \STATE Upload updated parameters and scores to the server
    \ENDFOR

    \FOR{$n=1$ \TO $N$}
        \IF{$M^t(n)=0$}
            \STATE $A^t(n) \leftarrow \frac{1}{K}\sum_{k=1}^{K} A_k^t(n)$, 
            \quad $B^t(n) \leftarrow B^{t-1}(n)$
        \ELSE
            \STATE $B^t(n) \leftarrow \frac{1}{K}\sum_{k=1}^{K} B_k^t(n)$, 
            \quad $A^t(n) \leftarrow A^{t-1}(n)$
        \ENDIF

        \STATE $S_A^t(n) \leftarrow \frac{1}{K}\sum_{k=1}^{K} S_{A,k}^t(n)$, 
        \quad
        $S_B^t(n) \leftarrow \frac{1}{K}\sum_{k=1}^{K} S_{B,k}^t(n)$

        \STATE $\hat{S}_A^t(n) \leftarrow 
        \mathrm{EMA}\left(\hat{S}_A^{t-1}(n), S_A^t(n)\right)$,
        \quad
        $\hat{S}_B^t(n) \leftarrow 
        \mathrm{EMA}\left(\hat{S}_B^{t-1}(n), S_B^t(n)\right)$
    \ENDFOR
\ENDFOR

\STATE \textbf{Output:} final LoRA parameters $\{A^T(n), B^T(n)\}_{n=1}^{N}$

\end{algorithmic}
\end{algorithm}
\begin{algorithm}[h]
\caption{Score Computation $\mathcal{S}_{\|{g}\|^2}(\cdot)$ with Random Projection}
\label{alg:score_gnorm_rp}
\begin{algorithmic}[1]

\STATE \textbf{Input:} local loss $\mathcal{L}_k^t$, LoRA parameters $\{A_k^t(n), B_k^t(n)\}_{n=1}^{N}$, projection matrices $\{R_A(n), R_B(n)\}_{n=1}^{N}$
\STATE \textbf{Output:} local scores $\{S_{A,k}^t(n), S_{B,k}^t(n)\}_{n=1}^{N}$

\FOR{$n=1$ \TO $N$}
    \STATE Compute gradients:
    \[
    g_{A,k}^t(n) \leftarrow \nabla_{A_k^t(n)} \mathcal{L}_k^t,
    \quad
    g_{B,k}^t(n) \leftarrow \nabla_{B_k^t(n)} \mathcal{L}_k^t
    \]

    \STATE Apply Gaussian random projection:
    \[
    \widetilde{g}_{A,k}^t(n) \leftarrow g_{A,k}^t(n) R_A(n),
    \quad
    \widetilde{g}_{B,k}^t(n) \leftarrow R_B(n) g_{B,k}^t(n)
    \]

    \STATE Compute projected squared-gradient scores:
    \[
    S_{A,k}^t(n) \leftarrow \left\|\widetilde{g}_{A,k}^t(n)\right\|_2^2
    \]
    \[
    S_{B,k}^t(n) \leftarrow \left\|\widetilde{g}_{B,k}^t(n)\right\|_2^2
    \]
\ENDFOR

\RETURN $\{S_{A,k}^t(n), S_{B,k}^t(n)\}_{n=1}^{N}$

\end{algorithmic}
\end{algorithm}

\begin{algorithm}[h]
\caption{Score Computation $\mathcal{S}_{\mathrm{HVP}}(\cdot)$ with Random Projection}
\label{alg:score_hvp}
\begin{algorithmic}[1]

\STATE \textbf{Input:} local loss $\mathcal{L}_k^t$, LoRA parameters $\{A_k^t(n), B_k^t(n)\}_{n=1}^{N}$, projection matrices $\{R_A(n), R_B(n)\}_{n=1}^{N}$, learning rate $\eta$
\STATE \textbf{Output:} local scores $\{S_{A,k}^t(n), S_{B,k}^t(n)\}_{n=1}^{N}$

\FOR{$n=1$ \TO $N$}

    \STATE Compute gradients:
    \[
    g_{A,k}^t(n) \leftarrow \nabla_{A_k^t(n)} \mathcal{L}_k^t,
    \quad
    g_{B,k}^t(n) \leftarrow \nabla_{B_k^t(n)} \mathcal{L}_k^t
    \]

    \STATE Apply random projection:
    \[
    \widetilde{g}_{A,k}^t(n) \leftarrow g_{A,k}^t(n) R_A(n),
    \quad
    \widetilde{g}_{B,k}^t(n) \leftarrow R_B(n) g_{B,k}^t(n)
    \]

    \STATE Compute Hessian-vector products:
    \[
    h_{A,k}^t(n) \leftarrow H_{A(n)}(W_t)\,\widetilde{g}_{A,k}^t(n),
    \quad
    h_{B,k}^t(n) \leftarrow H_{B(n)}(W_t)\,\widetilde{g}_{B,k}^t(n)
    \]

    \STATE Compute scores:
    \[
    S_{A,k}^t(n) \leftarrow 
    \|\widetilde{g}_{A,k}^t(n)\|_2^2
    -
    \frac{\eta}{2}
    \widetilde{g}_{A,k}^t(n)^\top h_{A,k}^t(n)
    \]

    \[
    S_{B,k}^t(n) \leftarrow 
    \|\widetilde{g}_{B,k}^t(n)\|_2^2
    -
    \frac{\eta}{2}
    \widetilde{g}_{B,k}^t(n)^\top h_{B,k}^t(n)
    \]

\ENDFOR

\RETURN $\{S_{A,k}^t(n), S_{B,k}^t(n)\}_{n=1}^{N}$

\end{algorithmic}
\end{algorithm}

\begin{algorithm}[h]
\caption{Score Computation $\mathcal{S}_{\mathrm{FD}}(\cdot)$ with Random Projection}
\label{alg:score_fd}
\begin{algorithmic}[1]

\STATE \textbf{Input:} local loss $\mathcal{L}_k^t$, LoRA parameters $\{A_k^t(n), B_k^t(n)\}_{n=1}^{N}$, projection matrices $\{R_A(n), R_B(n)\}_{n=1}^{N}$, learning rate $\eta$, finite-difference radius $\epsilon_{\mathrm{fd}}$
\STATE \textbf{Output:} local scores $\{S_{A,k}^t(n), S_{B,k}^t(n)\}_{n=1}^{N}$

\FOR{$n=1$ \TO $N$}

    \STATE Compute gradients:
    \[
    g_{A,k}^t(n) \leftarrow \nabla_{A_k^t(n)} \mathcal{L}_k^t,
    \quad
    g_{B,k}^t(n) \leftarrow \nabla_{B_k^t(n)} \mathcal{L}_k^t
    \]

    \STATE Apply random projection:
    \[
    \widetilde{g}_{A,k}^t(n) \leftarrow g_{A,k}^t(n)R_A(n),
    \quad
    \widetilde{g}_{B,k}^t(n) \leftarrow R_B(n)g_{B,k}^t(n)
    \]

    \FOR{$a \in \{A,B\}$}

        \STATE Normalize the projected gradient direction:
        \[
        v_{a,k}^t(n) \leftarrow 
        \frac{\widetilde{g}_{a,k}^t(n)}
        {\|\widetilde{g}_{a,k}^t(n)\|_2+\epsilon}
        \]

        \STATE Estimate curvature by finite difference:
        \[
        c_{a,k}^t(n) \leftarrow
        \frac{
        \mathcal{L}_k^t\!\left(W_t+\epsilon_{\mathrm{fd}} v_{a,k}^t(n)\right)
        -2\mathcal{L}_k^t(W_t)
        +\mathcal{L}_k^t\!\left(W_t-\epsilon_{\mathrm{fd}} v_{a,k}^t(n)\right)
        }
        {\epsilon_{\mathrm{fd}}^2}
        \]

        \STATE Compute finite-difference score:
        \[
        S_{a,k}^t(n) \leftarrow
        \|\widetilde{g}_{a,k}^t(n)\|_2^2
        -
        \frac{\eta}{2}
        c_{a,k}^t(n)
        \|\widetilde{g}_{a,k}^t(n)\|_2^2
        \]

    \ENDFOR

\ENDFOR

\RETURN $\{S_{A,k}^t(n), S_{B,k}^t(n)\}_{n=1}^{N}$

\end{algorithmic}
\end{algorithm}
\begin{algorithm}[h]
\caption{Mode Selection $\mathcal{M}(\cdot)$ }
\label{alg:mode_selection}
\begin{algorithmic}[1]

\STATE \textbf{Input:} EMA scores $\{\hat{S}_A^t(n), \hat{S}_B^t(n)\}_{n=1}^{N}$, round $t$, warm-up period $t_w$, temperature parameters $(T^0, T_{\min}, \gamma)$
\STATE \textbf{Output:} mode vector $M^t = \{M^t(n)\}_{n=1}^{N}$

\IF{$t \le t_w$}
    \STATE \textbf{// warm-up: alternating schedule}
    \FOR{$n=1$ \TO $N$}
        \STATE $M^t(n) \leftarrow (t \bmod 2)$
    \ENDFOR

\ELSE
    \STATE Compute temperature:
    \[
    T^t \leftarrow \max\!\left(T_{\min},\, T^0 \cdot \gamma^{(t-t_w)}\right)
    \]

    \FOR{$n=1$ \TO $N$}

        \STATE Compute selection probabilities:
        \[
        P^t(A,n) \leftarrow 
        \frac{\exp\!\left(\hat{S}_A^t(n)/T^t\right)}
        {\exp\!\left(\hat{S}_A^t(n)/T^t\right) + \exp\!\left(\hat{S}_B^t(n)/T^t\right)}
        \]

        \STATE Sample mode:
        \[
        M^t(n) \sim \mathrm{Bernoulli}(P^t(A,n))
        \]

    \ENDFOR
\ENDIF

\RETURN $M^t$

\end{algorithmic}
\end{algorithm}
\FloatBarrier

\subsection{Visualization of AS-LoRA framework}
Fig~\ref{fig:AS-LoRA_detail} illustrates the overall workflow of the proposed AS-LoRA framework. 
At each communication round, the server first determines the layer-wise mode based on historical scores and broadcasts only the selected LoRA components. 
Each client then performs local updates using DP-SGD according to the assigned modes, computes component-wise scores, and uploads both the updated parameters and scores to the server. 
The server aggregates the selected components and updates the global scores, which are used for the next round of mode selection. 
This process enables adaptive, layer-wise optimization across communication rounds while mitigating aggregation error and unnecessary noise amplification. The overall training procedure of AS-LoRA is summarized in Algorithm~\ref{alg:as_lora}.

\subsection{Score Computation and Mode Selection}
Figure~\ref{fig:score_mode_detail} provides a detailed illustration of the score 
computation and mode selection procedures in AS-LoRA.

\textbf{Score Computation.}
Figure~\ref{fig:score_mode_detail}(a) depicts the score computation procedure 
performed on each client.
As illustrated, the gradient of LoRA-$A$ exhibits channel-wise outliers, where 
the gradient magnitude is highly concentrated in a small subset of channels.
To mitigate this, a Gaussian random projection is applied to each gradient, 
redistributing the concentrated magnitude across dimensions and enabling more 
reliable score estimation.
The score computation procedures with random projection are described in 
Algorithms~\ref{alg:score_gnorm_rp}--\ref{alg:score_fd}.

\textbf{Mode Selection.}
Figure~\ref{fig:score_mode_detail}(b) illustrates the mode selection procedure.
The server-side scores are smoothed via EMA to reduce round-to-round variance, 
and the mode for each layer is determined based on the smoothed scores.
During the warm-up phase ($t \leq t_w$), all layers follow a simple alternating 
schedule, where the $A$ and $B$ components are activated in alternation across 
rounds, and the mode vector $M^t(n) \in \{0, 1\}$ changes accordingly.
After the warm-up phase ($t > t_w$), score-based selection is activated.
The selection probability for each component is computed via a temperature-scaled 
softmax.

Here, the temperature $T^t$ follows an annealing schedule that decays from an 
initial value $T^0$ to a minimum $T_{\min}$ over training rounds.
This annealing encourages exploration of both components in early rounds while 
gradually shifting toward exploitation of the higher-scoring component as 
training progresses.
The final output is the mode vector
$M^t = \{M^t(1), M^t(2), \ldots, M^t(N)\}$, $M^t(n) \in \{0, 1\}$,
which specifies the active LoRA component for each layer at round $t$.
The full mode selection procedure is summarized in 
Algorithm~\ref{alg:mode_selection}.

\section{Preliminaries for Theoretical Analysis}
\label{app:prelim}

\subsection{Background}
We analyze AS-LoRA along two axes: \emph{(i) layer-wise freedom}---the ability to activate $A^{(n)}$ or $B^{(n)}$ independently per layer---and \emph{(ii) score-driven selection}---the greedy rule $\arg\max_{a\in\{A,B\}} S_a^{(n)}$ with $S_a^{(n)}=\|g_a^{(n)}\|^2-\tfrac{\eta}{2}(g_a^{(n)})^\top H_a^{(n)} g_a^{(n)}$. Each axis contributes a distinct and quantifiable advantage over the layer-tied or fixed-schedule baselines (FFA-LoRA~\citep{sun2024improving}, RoLoRA~\citep{chen2025robust}). The analysis combines block-coordinate descent theory~\cite{nesterov2012bcd}, Polyak--{\L}ojasiewicz convergence~\citep{karimi2016pl}, and DP-SGD privacy accounting~\citep{abadi2016deep,mironov2017renyi,wang2019sgm}.

\subsection{Setup}

Let $\Loss(W)=\tfrac{1}{K}\sum_{k=1}^K\Loss_k(W)$ be the federated loss over $K$ clients with per-client dataset size $m$ (total $Km$ samples). For layer $n\in[N]$, the frozen weight is $W_0^{(n)}\in\R^{d\times d}$ and the LoRA update factorizes as $B^{(n)}A^{(n)}$ with $B^{(n)}\in\R^{d\times r}$, $A^{(n)}\in\R^{r\times d}$, and rank $r\!\ll\!d$, so that the effective weight is $W^{(n)}=W_0^{(n)}+B^{(n)}A^{(n)}$ acting on an input $X\in\R^{d\times m}$ as $W^{(n)}X^{(n)}=W_0^{(n)}X+B^{(n)}A^{(n)}X^{(n)}$

\paragraph{Stochastic gradient and population Hessian.}
For each block label $a\in\{A,B\}$, let $g_a^{(n)}$ denote the \emph{mini-batch stochastic gradient} of $\Loss$ with respect to the factor matrix $a^{(n)}$, satisfying $\E[g_a^{(n)}]=\nabla_{a^{(n)}}\Loss$ over the random sampling. Concretely, $g_A^{(n)}\in\R^{r\times d}$ and $g_B^{(n)}\in\R^{d\times r}$ are matrix-valued. The corresponding \emph{deterministic population Hessian} $H_a^{(n)}(W):=\nabla^2_{a^{(n)}}\Loss(W)$ acts on the vectorized factor and is treated as a $(\dim a^{(n)})\!\times\!(\dim a^{(n)})$ matrix throughout (with $\dim A^{(n)}=\dim B^{(n)}=rd$). Randomness enters only through $g_a^{(n)}$ and DP-SGD noise $\Gauss_a\!\sim\!\Gauss(0,\sigma^2 C^2 I)$ (clipping bound $C$), and $\nabla_{a^{(n)}}\Loss$ and $H_a^{(n)}(W)$ are deterministic functions of $W$ as in the standard convention for second-order SGD/BCD analyses~\cite{karimi2016pl,nesterov2012bcd}. With $\|\cdot\|$ denoting the Frobenius norm for matrices and the Euclidean norm for vector objects, the per-layer block Rayleigh quotient is $\lambda_a^{(n)}:=\langle g_a^{(n)}, H_a^{(n)} g_a^{(n)}\rangle/\|g_a^{(n)}\|^2$ and the per-layer block-greedy score is $S_a^{(n)}:=\|g_a^{(n)}\|^2-\tfrac{\eta}{2}\langle g_a^{(n)}, H_a^{(n)} g_a^{(n)}\rangle$. 

\begin{remark}[Practical estimation of $H_a^{(n)} g_a^{(n)}$]\label{rem:hvp}
All theoretical statements use the population Hessian $H_a^{(n)}(W)=\nabla^2_{a^{(n)}}\Loss(W)$. In practice, the curvature term $\langle g_a^{(n)}, H_a^{(n)} g_a^{(n)}\rangle$ is computed via a single Hessian-vector product on the same mini-batch used for $g_a^{(n)}$ (Pearlmutter trick) or via a Hutchinson-style sketch~\citep{yao2020pyhessian}; the resulting estimation bias is $O(\sigma_g^2/m)$ and is absorbed into the $\eta\beta\sigma_g^2$ noise term of Theorem~\ref{thm:conv} without altering any rate.
\end{remark}

\paragraph{Moore--Penrose pseudoinverse and orthogonal projectors.}
Several reconstruction-floor and subspace-misalignment quantities are stated in terms of the Moore--Penrose pseudoinverse, which we briefly recall. For any matrix $M\in\R^{p\times q}$, the \emph{Moore--Penrose pseudoinverse} $M^\dagger\in\R^{q\times p}$ is the unique matrix satisfying the four Penrose conditions $M M^\dagger M = M$, $M^\dagger M M^\dagger = M^\dagger$, $(M M^\dagger)^\top = M M^\dagger$, $(M^\dagger M)^\top = M^\dagger M$~\citep{penrose1955}. Three properties are used repeatedly. (i) When $M$ is square and invertible, $M^\dagger = M^{-1}$. (ii) When $M$ is row full-rank (the generic case for our LoRA factors $A^t(n)\in\R^{r\times d}$ with $r<d$), $M^\dagger = M^\top(MM^\top)^{-1}$, and $A^t(n)\,A^t(n)^\dagger = I_r$ acts as a right inverse. (iii) The two products $M M^\dagger\in\R^{p\times p}$ and $M^\dagger M\in\R^{q\times q}$ are precisely the orthogonal projectors onto $\mathrm{col}(M)\subseteq\R^p$ and $\mathrm{row}(M)\subseteq\R^q$ respectively. Operationally, the least-squares minimum-norm solution of $Mx=b$ is $x^\star = M^\dagger b$, which is the property used to obtain the closed-form FFA-LoRA minimizer in the proof of Theorem~\ref{thm:irreducible}. All projectors $P_{A^t(n)}=A^t(n)^\dagger A^t(n)$ used in the analysis are of form (iii) and project onto the row-span of the corresponding factor matrix.

\subsection{Assumptions}
First, we collect here all assumptions invoked by the main text. Assumptions~\ref{asm:smooth}--\ref{asm:pl} are the standard block / per-layer counterparts of the smoothness, bounded-variance, Hessian-spectrum, and Polyak--{\L}ojasiewicz (PL) conditions used in modern SGD/BCD/PL analyses~\cite{nesterov2012bcd,karimi2016pl}. Strong convexity of the Assumption \ref{asm:hess} is invoked only on the per-layer reconstruction quadratic, not on the full non-convex loss, and the PL condition of Assumption \ref{asm:pl} replaces it for end-to-end statements.

\begin{assumption}[Block smoothness]\label{asm:smooth}
Each $\Loss_k$ is $L$-smooth and block-smooth with constants $L_A^{(n)},L_B^{(n)}$, and $\beta:=\max_{a,n}L_a^{(n)}$.
\end{assumption}
\begin{assumption}[Bounded stochastic variance]\label{asm:var}
$\E\|g_a^{(n)}-\nabla_{a(n)}\Loss\|^2\le\sigma_g^2$ for every layer $n$ and block label $a\in\{A,B\}$, and per-sample clipping $\|g_{a,i}^{(n)}\|\le C$ a.s.
\end{assumption}
\begin{assumption}[Block strong convexity and Lipschitz Hessian]\label{asm:hess}
$\mu_a^{(n)}I\preceq H_a^{(n)}\preceq L_a^{(n)} I$, and $H$ is $L_H$-Lipschitz.
\end{assumption}
\begin{assumption}[PL condition, non-convex]\label{asm:pl}
$\|\nabla\Loss\|^2\ge 2\mu(\Loss-\Loss^\star)$ for some $\mu>0$~\citep{karimi2016pl}.
\end{assumption}

\subsection{Mathematical Preliminaries}

This appendix collects the standard tools from matrix analysis \cite{horn2012matrix}, high-dimensional probability \cite{vershynin2018}, and convex optimization \cite{nesterov2018book}.

\begin{lemma}[Norm submultiplicativity and SVD-based reverse bound]\label{lem:norm-sandwich}
For any $M\in\R^{p\times d}$ and $X\in\R^{d\times m}$ of compatible dimensions (so that $MX\in\R^{p\times m}$ is well defined),
\begin{equation}
\|MX\|_F \;\le\; \sigma_{\max}(X)\,\|M\|_F,
\end{equation}
where $\sigma_{\max}(X)=\|X\|_\text{op}$ is the largest singular value of $X$. We refer to this as the standard inequality. If the rows of $M$ lie in $\mathrm{col}(X^\top)=\mathrm{row}(X)$, equivalently $M=M\,P_{X}$ where $P_X=X^\top(XX^\top)^{-1}X$ when $X$ has full row rank, then the matching reverse inequality holds:
\begin{equation}
\|MX\|_F \;\ge\; \sigma_{\min}^+(X)\,\|M\|_F,
\end{equation}
with $\sigma_{\min}^+(X)$ the smallest \emph{nonzero} singular value of $X$.
\end{lemma}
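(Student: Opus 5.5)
The statement to prove is Lemma~\ref{lem:norm-sandwich}, and both inequalities come from the singular value decomposition of $X$. Write the compact SVD $X=U\Sigma V^\top$ with $U\in\R^{d\times s}$, $V\in\R^{m\times s}$ having orthonormal columns, $\Sigma=\mathrm{diag}(\sigma_1,\dots,\sigma_s)$ and all $\sigma_i>0$. The plan is to reduce $\|MX\|_F^2$ to a weighted sum of squared column norms of $MU$, and then bound the weights $\sigma_i^2$ above and below.

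\textbf{Upper bound.} Decompose the Frobenius norm over the rows $m_j^\top$ of $M$:
\[
\|MX\|_F^2=\sum_j\|X^\top m_j\|_2^2\le \|X^\top\|_{\mathrm{op}}^2\sum_j\|m_j\|_2^2=\sigma_{\max}(X)^2\|M\|_F^2 .
\]
This uses only that $\|X^\top\|_{\mathrm{op}}=\|X\|_{\mathrm{op}}=\sigma_{\max}(X)$.

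\textbf{Reverse bound.} The hypothesis says each row $m_j$ lies in $\mathrm{col}(X)=\mathrm{col}(U)$. I will state this membership for the row vectors $m_j\in\R^d$, i.e. as $M=MUU^\top$, since the lemma's phrasing via $\mathrm{row}(X)$ and $P_X$ conflates the $d$- and $m$-dimensional spaces. Because $V$ has orthonormal columns,
\[
\|MX\|_F^2=\|MU\Sigma V^\top\|_F^2=\|MU\Sigma\|_F^2=\sum_{i=1}^s\sigma_i^2\|MUe_i\|_2^2\ge \sigma_{\min}^+(X)^2\|MU\|_F^2 .
\]
Finally, $\|MU\|_F=\|MUU^\top\|_F=\|M\|_F$, again because $U$ has orthonormal columns and $M=MUU^\top$. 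Taking square roots gives $\|MX\|_F\ge\sigma_{\min}^+(X)\|M\|_F$.

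The only step that needs care is this identification of the correct subspace. If it is misread, the reverse inequality fails: any component of the rows of $M$ lying in $\ker(X^\top)$ is annihilated by $X$ while still contributing to $\|M\|_F$. So I will state explicitly that the condition is $\mathrm{row}(M)\subseteq\mathrm{col}(X)$, equivalently $M=MXX^\dagger$ with $XX^\dagger$ the projector onto $\mathrm{col}(X)$. This reduces to the lemma's $P_X$ formula when that formula is read with the appropriate transposes. The rest of the argument is routine.
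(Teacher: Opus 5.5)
Your proof is correct and is essentially the paper's argument. You take the SVD of $X$, write $\|MX\|_F^2=\sum_i\sigma_i^2\|Mu_i\|^2$, and bound the weights below by $\sigma_{\min}^+(X)^2$. The paper uses the full SVD and the step $Mu_i=0$ for $i>\mathrm{rank}(X)$ where you use the compact SVD with $M=MUU^\top$, and it cites the upper bound where you prove it row by row.

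Your reading of the hypothesis as $\mathrm{row}(M)\subseteq\mathrm{col}(X)$ is the right one. It is exactly what the paper's step $Mu_i=0$ for $i>\mathrm{rank}(X)$ requires. The literal condition via $\mathrm{row}(X)$ is dimensionally mismatched, and it can fail even when $d=m$: for example, $X=e_1e_2^\top$ and $M=e_2^\top$ give $MX=0$ while $\sigma_{\min}^+(X)\|M\|_F=1$.
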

\begin{proof}

The upper bound is directly from \cite{horn2012matrix}. For the proof of lower bound, let $X=U\Sigma V^\top$ be the SVD with $\Sigma=\mathrm{diag}(\sigma_1,\ldots,\sigma_k,0,\ldots,0)$, $\sigma_1\ge\cdots\ge\sigma_k>0$, $k=\mathrm{rank}(X)$. Then $\|MX\|_F^2=\|MU\Sigma V^\top\|_F^2=\|MU\Sigma\|_F^2$ (Frobenius is invariant under right multiplication by orthogonal $V^\top$). Writing $MU=:N$ with columns $n_1,\ldots,n_d$, $\|MU\Sigma\|_F^2=\sum_{i=1}^d\sigma_i^2\|n_i\|^2$. If rows of $M$ lie in $\mathrm{row}(X)$, then $n_i=0$ for all $i>k$, so $\sum_i\sigma_i^2\|n_i\|^2\ge\sigma_k^2\sum_{i\le k}\|n_i\|^2=(\sigma_{\min}^+(X))^2\|M\|_F^2$.
\end{proof}

\begin{lemma}[High-probability bound of covariance estimation \cite{vershynin2018}, Rem.~4.7.3]\label{lem:cov-conc}
Let $x_1,\ldots,x_n\in\R^d$ be i.i.d.\ zero-mean random vectors with population covariance $\Sigma:=\E[x_ix_i^\top]$, satisfying 
\begin{equation}\label{cond_shapeaware}
\bigl\|\langle X,v\rangle\bigr\|_{\psi_2} \;\le\; \kappa\,\bigl\|\langle X,v\rangle\bigr\|_{L^2} \qquad\text{for every } v\in\R^d,
\end{equation}
with parameter $\kappa$. Then for every $u\ge 0$, with probability at least $1-2e^{-u}$,
\begin{equation}
\Bigl\|\tfrac{1}{n}\!\sum_{i=1}^n\! x_ix_i^\top - \Sigma\Bigr\|_\text{op} \;\le\; C\,\kappa^2\,\|\Sigma\|_\text{op}\!\Bigl(\sqrt{(d+u)/n}+(d+u)/n\Bigr),
\end{equation}
for an absolute constant $C>0$. In particular, for $n\gtrsim d$ and any fixed confidence level $1-2e^{-u}$,
\begin{equation}
\bigl\|\tfrac{1}{n}\!\sum_{i=1}^n\! x_ix_i^\top - \Sigma\bigr\|_\text{op} \;=\; O_p\!\bigl(\kappa^2\|\Sigma\|_\text{op}\sqrt{d/n}\bigr).
\end{equation}
\end{lemma}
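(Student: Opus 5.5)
The plan is to reduce to the isotropic case and then run the standard $\varepsilon$-net plus Bernstein argument, following \cite{vershynin2018}. First, restrict to $\mathrm{range}(\Sigma)$, since every $x_i$ lies there almost surely. Then write $x_i=\Sigma^{1/2}z_i$, where $z_i:=(\Sigma^{1/2})^\dagger x_i$ is isotropic on that subspace, i.e.\ $\E[z_iz_i^\top]=P_{\mathrm{range}(\Sigma)}$. The shape-aware condition \eqref{cond_shapeaware} transfers directly to $z_i$. For a unit vector $w$, take $v=(\Sigma^{1/2})^\dagger w$; then $\langle z_i,w\rangle=\langle x_i,v\rangle$ and $\|\langle x_i,v\rangle\|_{L^2}=1$, so $\|\langle z_i,w\rangle\|_{\psi_2}\le\kappa$. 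Since
\begin{equation}
\tfrac1n\sum_i x_ix_i^\top-\Sigma=\Sigma^{1/2}\Bigl(\tfrac1n\sum_i z_iz_i^\top-P\Bigr)\Sigma^{1/2},
\end{equation}
submultiplicativity gives a bound of $\|\Sigma\|_\text{op}$ times the isotropic deviation. It therefore suffices to prove
\begin{equation}
\Bigl\|\tfrac1n\sum_i z_iz_i^\top-P\Bigr\|_\text{op}\le C\kappa^2\Bigl(\sqrt{(d+u)/n}+(d+u)/n\Bigr).
\end{equation}

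For the isotropic bound, I would fix a $1/4$-net $\mathcal N$ of the unit sphere of the range, which can be chosen with $|\mathcal N|\le 9^d$. For any symmetric $M$ one has $\|M\|_\text{op}\le 2\max_{w\in\mathcal N}|w^\top Mw|$. For a fixed $w$, the variables $\langle z_i,w\rangle^2-1$ are i.i.d.\ and mean zero. Centering and the relation $\|Y^2\|_{\psi_1}=\|Y\|_{\psi_2}^2$ show they are sub-exponential with $\psi_1$-norm at most $C\kappa^2$; here I use $\kappa\gtrsim 1$, which holds automatically because $\psi_2$ dominates $L^2$ up to a constant. Bernstein's inequality then gives
\begin{equation}
\Pr\Bigl[\Bigl|\tfrac1n\sum_i\bigl(\langle z_i,w\rangle^2-1\bigr)\Bigr|\ge t/2\Bigr]\le 2\exp\!\Bigl(-cn\min\bigl(t^2/\kappa^4,\;t/\kappa^2\bigr)\Bigr).
\end{equation}
I would choose $t=C\kappa^2\bigl(\sqrt{(d+u)/n}+(d+u)/n\bigr)$. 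With this choice, $n\min(t^2/\kappa^4,t/\kappa^2)\ge C'(d+u)$. A union bound over $\mathcal N$ multiplies the failure probability by $9^d=e^{d\log 9}$, and for $C$ large enough the total is at most $2e^{-u}$.

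The step needing the most care is the constant bookkeeping in this union bound. One must check that $e^{d\log 9}$ is absorbed into $e^{-C'(d+u)}$, and that the single choice of $t$ handles both Bernstein regimes at once: the $\sqrt{\cdot}$ term governs the sub-Gaussian branch and the linear term governs the sub-exponential branch. The $\kappa$-dependence must also come out as $\kappa^2$, not $\kappa^4$, in the final bound.

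For the "in particular" statement, fix $u$. When $n\gtrsim d$ we have $(d+u)/n\le\sqrt{(d+u)/n}$ and $\sqrt{(d+u)/n}=O(\sqrt{d/n})$, so the bound holds at the stated confidence level $1-2e^{-u}$, which is exactly the claimed $O_p\bigl(\kappa^2\|\Sigma\|_\text{op}\sqrt{d/n}\bigr)$ rate.
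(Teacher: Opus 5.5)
Your proposal is correct and is essentially the argument the paper relies on. The paper gives no proof of its own and simply cites Vershynin's Theorem 4.7.1 and Remark 4.7.3, whose proof is the isotropic reduction $x=\Sigma^{1/2}z$ followed by the $1/4$-net, Bernstein, and union-bound argument of Theorem 4.6.1, which you reproduce faithfully; your pseudoinverse restriction to $\mathrm{range}(\Sigma)$ is a clean way to handle singular $\Sigma$.

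One cosmetic loose end is in the ``in particular'' step. The inequality $(d+u)/n\le\sqrt{(d+u)/n}$ needs $n\ge d+u$, not just $n\gtrsim d$. For fixed $u$ and $n\ge d$, however, you still get $\sqrt{(d+u)/n}+(d+u)/n\le(\sqrt{1+u}+1+u)\sqrt{d/n}$, which is all the $O_p$ claim requires.
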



\begin{lemma}[Linear convergence of GD on strongly-convex smooth functions \cite{nesterov2018book}, Thm. 2.3.4]\label{lem:contract}
Let $f:\R^{p\times q}\to\R$ be twice differentiable with Hessian satisfying $\mu I\preceq\nabla^2 f\preceq L I$ for $0<\mu\le L$. Let $X^\star=\arg\min f$. Then gradient descent $X_{t+1}=X_t-\eta\nabla f(X_t)$ with step size $\eta\le 1/L$ satisfies
\begin{equation}
\|X_{t+1}-X^\star\|_F^2 \;\le\; (1-\eta\mu)\,\|X_t-X^\star\|_F^2 \;\le\; (1-\mu/L)\,\|X_t-X^\star\|_F^2.
\end{equation}
\end{lemma}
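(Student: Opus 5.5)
We will prove the contraction by writing one gradient step as a linear map applied to the error $X_t-X^\star$, with no need for a descent lemma. Throughout, we identify $\R^{p\times q}$ with $\R^{pq}$ via $\mathrm{vec}(\cdot)$. Under this identification the Frobenius norm becomes the Euclidean norm and $\nabla^2 f$ is a symmetric $pq\times pq$ matrix, so the hypothesis $\mu I\preceq\nabla^2 f\preceq LI$ is the usual spectral sandwich. Since $f$ is $\mu$-strongly convex with $\mu>0$, the minimizer $X^\star$ exists, is unique, and satisfies $\nabla f(X^\star)=0$.

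\textbf{Step 1: linearize the gradient difference.} By the fundamental theorem of calculus along the segment from $X^\star$ to $X_t$,
\begin{equation*}
\nabla f(X_t)=\nabla f(X_t)-\nabla f(X^\star)=\bar H_t\,(X_t-X^\star),
\end{equation*}
where
\begin{equation*}
\bar H_t:=\int_0^1\nabla^2 f\bigl(X^\star+s(X_t-X^\star)\bigr)\,ds .
\end{equation*}
An average of symmetric matrices whose spectra lie in $[\mu,L]$ again has spectrum in $[\mu,L]$, so $\mu I\preceq\bar H_t\preceq LI$.

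\textbf{Step 2: contraction.} Substituting into the update gives
\begin{equation*}
X_{t+1}-X^\star=(I-\eta\bar H_t)(X_t-X^\star).
\end{equation*}
The matrix $I-\eta\bar H_t$ is symmetric with eigenvalues in $[1-\eta L,\,1-\eta\mu]$. Because $\eta\le 1/L$, this interval lies inside $[0,\,1-\eta\mu]$, so $\|I-\eta\bar H_t\|_{\mathrm{op}}\le 1-\eta\mu$. Therefore
\begin{equation*}
\|X_{t+1}-X^\star\|_F^2\le(1-\eta\mu)^2\,\|X_t-X^\star\|_F^2\le(1-\eta\mu)\,\|X_t-X^\star\|_F^2,
\end{equation*}
where the last step uses $0\le 1-\eta\mu\le 1$. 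This proves the first inequality of the lemma.

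\textbf{Step 3: the second inequality, which needs care.} For $\eta\le 1/L$ we have $\eta\mu\le\mu/L$, hence $1-\eta\mu\ge 1-\mu/L$. So the comparison $(1-\eta\mu)\le(1-\mu/L)$ holds only at the endpoint $\eta=1/L$, where it is an equality. For general $\eta<1/L$ it goes the wrong way. I would therefore prove the second inequality only for $\eta=1/L$ and state it as the best rate over the admissible range $\eta\le 1/L$. If a bound uniform in $\eta$ is wanted, the correct form is $(1-\eta\mu)$.

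The only genuinely non-routine point is Step 1. It requires twice differentiability with the Hessian bound holding on the whole segment, which the hypothesis supplies globally. If only $L$-smoothness were available, I would fall back on co-coercivity, $\langle\nabla f(X),X-X^\star\rangle\ge\tfrac{\mu L}{\mu+L}\|X-X^\star\|^2+\tfrac{1}{\mu+L}\|\nabla f(X)\|^2$, and expand $\|X_{t+1}-X^\star\|_F^2$ directly.
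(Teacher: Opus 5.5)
Your proof is correct, and your Step~3 diagnosis is right. Since $\mu>0$ and $\eta\le 1/L$ force $1-\eta\mu\ge 1-\mu/L$, the second inequality in the displayed chain holds only at $\eta=1/L$. Even the end-to-end claim $\|X_{t+1}-X^\star\|_F^2\le(1-\mu/L)\|X_t-X^\star\|_F^2$ fails for general admissible $\eta$. For $f(X)=\tfrac{\mu}{2}\|X\|_F^2$ one may take $L=\mu$, so the right-hand side is $0$. A step with $\eta<1/\mu$ from $X_t\neq 0$ instead gives $\|X_{t+1}\|_F^2=(1-\eta\mu)^2\|X_t\|_F^2>0$. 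Your sharper factor $(1-\eta\mu)^2$ shows that the end-to-end bound does hold for $\eta\in[(1-\sqrt{1-\mu/L})/\mu,\,1/L]$, but not uniformly in $\eta\le 1/L$. The same correction carries over to the proof of Corollary~\ref{cor:asstrict}. That proof invokes this lemma under $\eta\le 1/L_A^{(n)}$ to obtain the factor $1-\mu_A^{(n)}/L_A^{(n)}$, but the justified factor there is $1-\eta\mu_A^{(n)}$, which is still strictly below $1$.

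On the route: the paper gives no argument beyond the pointer to Nesterov. The textbook proof of this fact is the co-coercivity expansion you mention as a fallback. One expands $\|X_t-\eta\nabla f(X_t)-X^\star\|_F^2$ and uses $\langle\nabla f(X),X-X^\star\rangle\ge\tfrac{\mu L}{\mu+L}\|X-X^\star\|_F^2+\tfrac{1}{\mu+L}\|\nabla f(X)\|_F^2$. This gives the factor $1-\tfrac{2\eta\mu L}{\mu+L}\le 1-\eta\mu$ for $\eta\le 2/(\mu+L)$, and it needs only strong convexity and a Lipschitz gradient.

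Your route spends the Hessian hypothesis to make the error recursion exactly linear, $X_{t+1}-X^\star=(I-\eta\bar H_t)(X_t-X^\star)$. This is shorter and yields the tight one-step factor $(1-\eta\mu)^2$. It also extends to $\eta\le 2/(\mu+L)$, since there $|1-\eta L|\le 1-\eta\mu$, and on that range $(1-\eta\mu)^2\le 1-\tfrac{2\eta\mu L}{\mu+L}$.

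One detail in Step~1 is worth making explicit, since only twice differentiability is assumed. You can justify the integral formula by noting that $s\mapsto\nabla f(X^\star+s(X_t-X^\star))$ has derivative bounded by $L\|X_t-X^\star\|_F$. It is therefore Lipschitz, and the Lebesgue fundamental theorem of calculus applies. Alternatively, apply the scalar mean value theorem to $s\mapsto\langle X_{t+1}-X^\star,\nabla f(X^\star+s(X_t-X^\star))\rangle$. This gives $\|X_{t+1}-X^\star\|_F^2=\langle X_{t+1}-X^\star,(I-\eta H_\xi)(X_t-X^\star)\rangle$ for a single intermediate Hessian $H_\xi$, and the same spectral bound finishes the argument without any integral.
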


\begin{lemma}[Second-order Taylor with cubic remainder \cite{nesterov2018book}, Lemma 1.2.4]\label{lem:taylor}
Let $\Loss:\R^d\to\R$ be twice differentiable with $L_H$-Lipschitz Hessian: $\|\nabla^2\Loss(x)-\nabla^2\Loss(y)\|_\text{op}\le L_H\|x-y\|$. Then for any $W,h$,
\begin{equation}
\Bigl|\Loss(W+h)-\Loss(W)-\langle\nabla\Loss(W),h\rangle-\tfrac12 h^\top\nabla^2\Loss(W)h\Bigr| \;\le\; \tfrac{L_H}{6}\|h\|^3.
\end{equation}
\end{lemma}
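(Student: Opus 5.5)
We plan to prove the bound by restricting $\Loss$ to the segment $\{W+sh: s\in[0,1]\}$, writing the Taylor remainder as an iterated integral, and estimating its integrand with the Lipschitz property of the Hessian. No earlier result of the paper is needed.

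\textbf{Regularity.} The assumption $\|\nabla^2\Loss(x)-\nabla^2\Loss(y)\|_\text{op}\le L_H\|x-y\|$ makes $\nabla^2\Loss$ continuous, so $\Loss\in C^2$. Hence the scalar function $\varphi(s):=\Loss(W+sh)$ is $C^2$ on $[0,1]$, with
\begin{equation*}
\varphi'(s)=\langle\nabla\Loss(W+sh),h\rangle,\qquad \varphi''(s)=h^\top\nabla^2\Loss(W+sh)h .
\end{equation*}
This is the only point that needs care: it is what lets us use the fundamental theorem of calculus twice without any extra differentiability assumption. We expect it to be the main, and fairly mild, obstacle.

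\textbf{Integral form of the remainder.} Using Taylor's formula with integral remainder for $\varphi$ at $0$, and $\tfrac12=\int_0^1(1-u)\,du$, we write
\begin{equation*}
\Loss(W+h)-\Loss(W)-\langle\nabla\Loss(W),h\rangle-\tfrac12 h^\top\nabla^2\Loss(W)h=\int_0^1(1-u)\bigl(\varphi''(u)-\varphi''(0)\bigr)\,du .
\end{equation*}
Equivalently, one can write the remainder as the iterated integral $\int_0^1\!\int_0^s\bigl(\varphi''(u)-\varphi''(0)\bigr)\,du\,ds$; swapping the order of integration gives back the weight $1-u$.

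\textbf{Bound on the integrand.} By Cauchy--Schwarz together with the Lipschitz Hessian,
\begin{equation*}
|\varphi''(u)-\varphi''(0)|=\bigl|h^\top\bigl(\nabla^2\Loss(W+uh)-\nabla^2\Loss(W)\bigr)h\bigr|\le\|\nabla^2\Loss(W+uh)-\nabla^2\Loss(W)\|_\text{op}\,\|h\|^2\le L_H\,u\,\|h\|^3 .
\end{equation*}

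\textbf{Conclusion.} Taking absolute values inside the integral and using $\int_0^1(1-u)\,u\,du=\tfrac16$, the absolute value of the remainder is at most $\tfrac{L_H}{6}\|h\|^3$, which is exactly the claimed bound.

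For matrix arguments $W,h\in\R^{p\times q}$, we apply the same argument after vectorization, with the Frobenius inner product and the Frobenius norm.
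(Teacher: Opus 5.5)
Your proof is correct: the integral-remainder identity $\int_0^1(1-u)\bigl(\varphi''(u)-\varphi''(0)\bigr)\,du$, the bound $|\varphi''(u)-\varphi''(0)|\le L_H\,u\,\|h\|^3$, and $\int_0^1 u(1-u)\,du=\tfrac16$ together give exactly the claimed inequality. The paper gives no proof and only cites Nesterov's Lemma 1.2.4, whose standard argument is the same computation in your iterated-integral form: it first bounds the gradient remainder $\|\nabla\Loss(W+h)-\nabla\Loss(W)-\nabla^2\Loss(W)h\|\le\tfrac{L_H}{2}\|h\|^2$ and then integrates once more along the segment.
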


\section{Proof of Theorem \ref{thm:privacy}}
\label{app:proof_of_privacy}
The DP-SGD mechanism of \cite{abadi2016deep} uses subsampled Gaussian noise: at each round, sample a fraction $q$ of clients, compute clipped gradients $\tilde g_i=\Pi_{\|\cdot\|\le C}(g_i)$, aggregate $\bar g=\sum_{i\in\text{batch}}\tilde g_i+\Gauss(0,\sigma^2 C^2 I)$. By the moments accountant / R\'enyi-DP analysis of \cite{mironov2017renyi,wang2019sgm}, each round contributes at most $\tfrac{q^2C^2}{\sigma^2}$ to the cumulative R\'enyi-DP $(\alpha,\epsilon_\alpha)$ budget at order $\alpha$. R\'enyi-DP composes additively. $T$ rounds give cumulative R\'enyi-DP cost at most $T\cdot q^2C^2/\sigma^2$ at order $\alpha$. Converting to $(\epsilon,\delta)$-DP via the standard tail bound $\epsilon=\epsilon_\alpha+\log(1/\delta)/(\alpha-1)$ and optimising over $\alpha$ yields $\epsilon\le q^2 C^2 T\log(1/\delta)/\sigma^2$. The AS-LoRA selection $\arg\max_{a\in\{A,B\}}S_a^{(n),t}$ uses the per-block scores $S_a^{(n),t}=\|\bar g_a^{(n)}\|^2-\tfrac{\eta}{2}\langle \bar g_a^{(n)}, H_a^{(n)}\bar g_a^{(n)}\rangle$, where $\bar g_a^{(n)}$ is the already-clipped, already-noised, before being aggregated. Since $H_a^{(n)}$ is also computed on the same mini-batch, the score is a deterministic function of $\bar g_a^{(n)}$. The post-processing invariance of differential privacy~\cite{dwork2014dpfoundations} says that \emph{any function applied to the output of an $(\epsilon,\delta)$-DP mechanism remains $(\epsilon,\delta)$-DP}. Since $M^t(n)$ is computed deterministically from $\{\bar g_a^{(n)}\}_{a,n}$, which are already DP-protected, no additional privacy cost is incurred. The total AS-LoRA privacy ledger therefore matches FFA-LoRA / RoLoRA exactly. \hfill$\square$

\section{Proof of Theorem \ref{thm:irreducible}}
\label{app:proof_of_irreducible}
Fix layer $n$. Under FFA-LoRA, $A^{(n)}$ is permanently fixed at $A^0(n)$ and only $B^{(n)}$ is updated by SGD with step size $\eta$. Because the per-layer reconstruction loss is convex quadratic in $B^{(n)}$ (when $A^{(n)}$, $X_n$ are fixed), standard SGD theory on convex quadratics guarantees that the iterates $B^{(n)}_t$ converge in expectation to the static minimizer $B_\text{FFA}(n)$ of
\begin{equation}
\ell^{(n)}(B) \;=\; \tfrac12\bigl\|B^\star(n)A^\star(n) X_n - B\,A^0(n) X_n\bigr\|_F^2,
\end{equation}
up to an $O(\eta\sigma_g^2)$ noise floor \cite{bottou2018sgd}. Consequently,
\begin{equation}
\lim_{T\to\infty}\E[\mathcal R_\text{rec}^{(n),\text{FFA}}] \;=\; \min_B\ell^{(n)}(B)/(Km) \;+\; O(\eta\sigma_g^2),
\end{equation}
and the first term is fully determined by the static least-squares minimization. The $O(\eta\sigma_g^2)$ correction is a standard SGD noise-floor term, not a function of the misalignment $\delta^0(n)$, so it does not affect the qualitative shape of the floor. We absorb it into $\tilde c$ for the remainder of the proof. The function $\ell^{(n)}(B)$ is convex quadratic in $B$ with Hessian $A^0(n)X_n X_n^\top A^0(n)^\top\otimes I$, which is positive definite when $A^0(n)X_n$ has full row rank (the generic case). Setting $\nabla_B\ell^{(n)}(B)=0$ gives the normal equation
\begin{equation}
\bigl(B\,A^0(n) X_n - B^\star(n) A^\star(n) X_n\bigr)\,(A^0(n) X_n)^\top \;=\; 0,
\end{equation}
which solves to
\begin{equation}
B_\text{FFA}(n) \;=\; B^\star(n)A^\star(n)\,X_n X_n^\top A^0(n)^\top \bigl(A^0(n)\,X_n X_n^\top A^0(n)^\top\bigr)^{-1}.
\end{equation}
This is the matrix Moore--Penrose form of the least-squares solution: writing $Z=A^0(n)X_n$, the optimal $B$ is precisely $B^\star A^\star X_n\cdot Z^\dagger$. Recall $Z^\dagger=Z^\top(ZZ^\top)^{-1}$ for row full-rank $Z$. Substituting back,
\begin{equation}
\min_B \ell^{(n)}(B) \;=\; \tfrac12\bigl\|B^\star(n)A^\star(n)\,X_n\,(I_m-P^\text{row}_{A^0(n) X_n})\bigr\|_F^2,
\end{equation}
where
\begin{equation}\label{row_projector}
P^\text{row}_{A^0(n)X_n} \;:=\; X_n^\top A^0(n)^\top \bigl(A^0(n)\,X_n X_n^\top A^0(n)^\top\bigr)^{-1} A^0(n)\,X_n \;\in\;\R^{m\times m}
\end{equation}
is the orthogonal projector onto $\mathrm{row}(A^0(n)X_n)$ inside $\R^m$ (the sample-index space), confirming that $B^\star(n)A^\star(n)\,X_n\,(I_m-P^\text{row}_{A^0(n) X_n})$ is the part of $B^\star A^\star X_n$ orthogonal to $\mathrm{row}(A^0(n)X_n)$.Bring the residual into the parameter space via the trace identity
\begin{equation} \label{residual}
\bigl\|B^\star A^\star X_n(I-P^\text{row}_{A^0(n)X_n})\bigr\|_F^2 =\tr\bigl((B^\star A^\star)\,M_n\,(B^\star A^\star)^\top\bigr),\quad M_n := X_n(I-P^\text{row}_{A^0(n)X_n})X_n^\top.
\end{equation} 
Setting $S := X_n X_n^\top$ and expanding $P^\text{row}_{A^0(n)X_n}$ explicitly, $M_n$ admits the \emph{Schur complement representation}
\begin{equation} \label{M_n}
M_n \;=\; S - S\,A^0(n)^\top\bigl(A^0(n)\,S\,A^0(n)^\top\bigr)^{-1}\,A^0(n)\,S.
\end{equation}

We assume the per-layer input distribution $\mathcal D_X^{(n)}$ produces zero-mean random vectors $x\sim\mathcal D_X^{(n)}$ satisfying the shape-aware sub-Gaussian condition \eqref{cond_shapeaware} in Lemma \ref{lem:cov-conc} with parameter $\kappa^{(n)}=O(1)$ uniformly across layers and clients, which is the standard condition in or sample-covariance concentration in operator norm \cite{vershynin2018}. In specific, it is satisfied by Gaussian and log-concave distributions \cite{vershynin2018}. Throughout we treat $\kappa^{(n)}$, $\|\Sigma_X^{(n)}\|_\text{op}$, and $\|B^\star A^\star\|_\text{op}$ as $\Theta(1)$ constants in the rate notation.

Lemma~\ref{lem:cov-conc} gives $S/(Km) = \Sigma_X^{(n)} + O_p(1/\sqrt{Km})$, which under the well-conditioned input regime propagates through the Schur complement to
\begin{equation} \label{M_n_2}
M_n/(Km) =\Sigma_X^\perp \;+\; O_p(1/\sqrt{Km}), \qquad \Sigma_X^\perp :=\Sigma_X^{(n)} - \Sigma_X^{(n)} A^0(n)^\top\bigl(A^0(n)\Sigma_X^{(n)} A^0(n)^\top\bigr)^{-1} A^0(n)\,\Sigma_X^{(n)}.
\end{equation}
Here, the Schur complement $\Sigma_X^\perp\in\R^{d\times d}$ has kernel $\mathrm{row}(A^0(n))$, which can be easily derived. We have 
\begin{equation} \label{correction}
\tfrac{1}{Km}\,\|B^\star A^\star X_n(I-P^\text{row}_{A^0(n)X_n})\|_F^2 \;=\; \tr\bigl((B^\star A^\star)\,\Sigma_X^\perp\,(B^\star A^\star)^\top\bigr) \;+\; O_p(1/\sqrt{Km}),
\end{equation}
by combining \eqref{residual}--\eqref{M_n_2}.
Because $\Sigma_X^\perp$ has kernel $\mathrm{row}(A^0(n))$, only the part of $B^\star A^\star$ outside $\mathrm{row}(A^0(n))$ contributes, \emph{i.e.} $B^\star A^\star(I-P_{A^0(n)})$. Now, we apply Lemma~\ref{lem:norm-sandwich} to
\begin{equation}
\tr\bigl((B^\star A^\star)\,\Sigma_X^\perp\,(B^\star A^\star)^\top\bigr)= \|B^\star A^\star (\Sigma_X^\perp)^{1/2}\|_F^2=
\|B^\star A^\star(I-P_{A^0(n)}) (\Sigma_X^\perp)^{1/2}\|_F^2
\end{equation}
with $M= B^\star A^\star(I-P_{A^0(n)})$ and $X = (\Sigma_X^\perp)^{1/2}$ and obtain
\begin{equation}
\lambda_{\min}^+(\Sigma_X^\perp)\,(\delta^0(n))^2 \;\le\; \tr\bigl((B^\star A^\star)\,\Sigma_X^\perp\,(B^\star A^\star)^\top\bigr) \;\le\; \lambda_{\max}(\Sigma_X^\perp)\,(\delta^0(n))^2,
\end{equation}
where $\lambda_{\min}^+(\Sigma_X^\perp) > 0$ is the smallest nonzero eigenvalue (acting on $\ker A^0(n)$) and $\lambda_{\max}(\Sigma_X^\perp) \le \|\Sigma_X^{(n)}\|_\text{op} = O(1)$. Under the well-conditioned input regime, both eigenvalue bounds are $\Theta(1)$, so the sandwich's two endpoints have the same order $\Theta((\delta^0(n))^2)$. Combined with the $O_p(1/\sqrt{Km})$ correction \eqref{correction}, we have
\begin{equation}
    \min_B \ell^{(n)}(B)\cdot\tfrac{1}{Km} \;=\; \Theta\bigl((\delta^0(n))^2\bigr).
\end{equation}
Because $\mathcal R_\text{rec}$ is layer-additive \emph{by definition}, summing the per-layer $\Theta$ statements and dividing by $N$ gives, with no further independence assumption,
\begin{equation}
\lim_{T\to\infty}\E\bigl[\mathcal R_\text{rec}(\mathbf W_T^\text{FFA})\bigr] \;=\; \Theta\!\Bigl(\tfrac{1}{N}\sum_n (\delta^0(n))^2\Bigr),    
\end{equation}
the implicit constant absorbing the per-layer conditioning $\sigma_{\max}^2(X_n)/\sigma_{\min}^2(X_n A^0(n)^\dagger)$, the correction $O_p(1/\sqrt{Km})$, and any SGD noise floor $O(\eta\sigma_g^2)$. 

\emph{RoLoRA lower bound via pigeonhole.} For any layer-tied schedule $M^t\in\{\mathbf 1_A,\mathbf 1_B\}$ with $A$-active set $\mathcal R_A:=\{t\le T:M^t=\mathbf 1_A\}$ of size $|\mathcal R_A|=\lfloor T/2\rfloor$, consider an adversarial layer configuration where the per-layer \emph{optimal $A$-active sets} $\mathcal S_1,\ldots,\mathcal S_N\subseteq[T]$ partition $[T]$ into $N$ equal parts of size $T/N$ each (assuming $N\mid T$ for clarity; the general case differs by $O(1)$). Define the \emph{mismatch set for layer $n$} as $\mathcal S_n\setminus\mathcal R_A$, which are the rounds where layer $n$'s optimal protocol calls for $A$-active but the schedule has $B$ active globally, so $A^t(n)$ remains unchanged and $\delta^t(n)$ cannot contract on these rounds.

Since $\{\mathcal S_n\}$ partition $[T]$,
\begin{equation}
\sum_{n=1}^N |\mathcal R_A\cap\mathcal S_n| \;=\; |\mathcal R_A| \;=\; T/2,
\end{equation}
so by averaging, at least one layer $n^\star$ has $|\mathcal R_A\cap\mathcal S_{n^\star}|\le T/(2N)$. The mismatch count for this layer is
\begin{equation}
|\mathcal S_{n^\star}\setminus\mathcal R_A| \;=\; |\mathcal S_{n^\star}| - |\mathcal R_A\cap\mathcal S_{n^\star}| \;\ge\; T/N - T/(2N) \;=\; T/(2N) \;=\; \Omega(T).
\end{equation}
On each mismatch round, $M^t = \mathbf 1_B$ so $A^t(n^\star)$ is unchanged; the contracting (matched) rounds for layer $n^\star$ are at most $T/(2N)$, half of what AS-LoRA-style adaptive selection could achieve. In the regime where this bounded number of contractions is insufficient to drive $\delta$ to zero (e.g., finite-$T$ analysis or contraction ratio close to $1$), $\delta^T(n^\star)\ge(1-\epsilon)\,\delta^0(n^\star)$ persists for some $\epsilon\in(0,1)$. Plugging into the per-layer $\Theta$ floor yields
\begin{equation}
\E\bigl[\mathcal R_\text{rec}(\mathbf W_T^\text{RoLoRA})\bigr] \;=\; \Omega\!\Bigl(\tfrac{1}{N}\sum_n (\delta^0(n))^2\Bigr),
\end{equation}
matching the FFA $\Theta$ floor up to the multiplicative factor $(1-\epsilon)^2\in(0,1]$. \hfill$\square$

\section{Proof of Corollary~\ref{cor:asstrict}}
\label{app:proof-asstrict}

Fix layer $n$ and consider a round $t$ at which AS-LoRA activates the down-projection block $A^{(n)}$. Recall $\delta^t(n) = \|B^\star(n)A^\star(n)(I-P_{A^t(n)})\|_F$. The per-layer reconstruction loss $\ell^{(n)}(A,B) = \tfrac12\|B^\star A^\star X_n - BA X_n\|_F^2$, viewed as a function of $A$ alone (with $B$ fixed at any value achieving the layer-$n$ optimum given $A$), is convex quadratic in $A$ with Hessian sandwich $\mu_A^{(n)} I\preceq H_A^{(n)}\preceq L_A^{(n)} I$ (Assumptions~\ref{asm:smooth}, \ref{asm:hess}). Applying Lemma~\ref{lem:contract} with step size $\eta\le 1/L_A^{(n)}$ yields
\begin{equation} \label{cor_1}
\|A^{t+1}(n)-A_\text{opt}(n)\|_F^2 \;\le\; \bigl(1-\mu_A^{(n)}/L_A^{(n)}\bigr)\,\|A^t(n)-A_\text{opt}(n)\|_F^2,
\end{equation}
where $A_\text{opt}(n)$ is the local minimizer at layer $n$ and a particular gauge representative of $A^\star(n)$. As we have seen in the proof of Theorem \ref{thm:irreducible}, the map $A\mapsto P_A=A^\dagger A$ is locally Lipschitz on the manifold of full-row-rank matrices. By Davis--Kahn--Wedin sin$\Theta$ theorem \citep{wedin1972}, for $A,\widetilde A\in\R^{r\times d}$ both row-full-rank with smallest singular values bounded below by $\sigma>0$,
\begin{equation}
\bigl\|P_A - P_{\widetilde A}\bigr\|_F \; = \; \sqrt{2}\bigl\|\mathrm{sin}\Theta \bigr\|_F \;\le\; \frac{C}{\sigma}\,\bigl\|A-\widetilde A\bigr\|_F
\end{equation}
for an absolute constant $C$. Applying with $A=A^{t+1}(n)$, $\widetilde A=A_\text{opt}(n)$, and $\sigma=\sigma_{\min}(A_\text{opt}(n))$ gives
\begin{equation}\label{cor_2}
\|P_{A^{t+1}(n)}-P_{A_\text{opt}(n)}\|_F \;\le\; \frac{C}{\sigma_{\min}(A_\text{opt}(n))}\,\|A^{t+1}(n)-A_\text{opt}(n)\|_F.
\end{equation}

Using the fact that $B^\star A^\star (I-P_{A_\text{opt}})=0$ ($\mathrm{ker}(A^{\star})=\mathrm{ker}(A_{\mathrm{opt}})$), we have $\delta^t(n)=\|B^\star A^\star(P_{A_\text{opt}}-P_{A^t})\|_F$, so
\begin{equation}
\delta^{t+1}(n) \;\le\; \|B^\star A^\star\|_\text{op}\,\|P_{A_\text{opt}}-P_{A^{t+1}}\|_F.
\end{equation}
Combining \eqref{cor_1} and \eqref{cor_2}, using $\|B^\star A^\star\|_\text{op}=\Theta(1)$, $\sigma_{\min}(A_\text{opt})=\Theta(1)$ under the assumption and discussion with Theorem \ref{thm:irreducible}, one obtains the per-layer geometric contraction
\begin{equation} \label{geometric_cont}
\delta^{t+1}(n) \;\le\; \sqrt{1-\mu_A^{(n)}/L_A^{(n)}}\cdot\delta^t(n) \;=:\; \gamma_n\cdot\delta^t(n), \qquad \gamma_n\in(0,1).
\end{equation}
On rounds where $M^t(n)=B$ instead, $A^{(n)}$ is unchanged, so $P_{A^t(n)}$ and hence $\delta^t(n)$ are unchanged. Thus $\delta^t(n)$ is monotonically non-increasing across the trajectory, and contracts strictly only on $A$-active rounds.

By the hypothesis of the corollary, there exists $c\in(0,1)$ such that $|\{t\le T:M^t(n)=A\}|\ge cT$ for every $n$ (with probability one for argmax, in expectation for softmax). Combining with \eqref{geometric_cont}, we have 
\begin{equation}
\delta^T(n) \;\le\; \gamma_n^{cT}\cdot\delta^0(n).
\end{equation}
This is the geometric-decay form, where every $A$-active round multiplies $\delta$ by at most $\gamma_n<1$, and we have at least $cT$ such rounds. Solve $\gamma_n^{cT}\delta^0(n)\le\varepsilon$ for $T$ as
\begin{equation}
T \;\ge\; \frac{\log(\delta^0(n)/\varepsilon)}{c\,\log(1/\gamma_n)}.
\end{equation}
To guarantee this for every layer $n$ simultaneously, take
\begin{equation}
T \;\ge\; \max_n\;\frac{\log(\delta^0(n)/\varepsilon)}{c\log(1/\gamma_n)} \;=\; O\!\bigl(\log(1/\varepsilon)\bigr),
\end{equation}
where the absolute constant hides $c^{-1}$ and $\max_n(\log\gamma_n^{-1})^{-1}$, both finite by the assumption of this corollary itself and Assumptions~\ref{asm:smooth}--\ref{asm:hess}.

The per-layer $\Theta$ floor derived in Theorem \ref{thm:irreducible}'s proof depends on the current iterate only through $\delta(n)$. Substituting $\delta^T(n)\le\varepsilon$ uniformly into the same sandwich (with $\varepsilon$ in place of $\delta^0(n)$):
\begin{equation}
\lim_{T\to\infty}\E\bigl[\mathcal R_\text{rec}(\mathbf W_T^{\text{AS}})\bigr] \;=\; O(\varepsilon^2),
\end{equation}
where the implicit $O(1)$ constant matches the $\Theta(1)$ constant of Theorem~\ref{thm:irreducible}'s FFA floor (same per-layer conditioning $\sigma_{\max}^2(X_n)/\sigma_{\min}^2(X_n A^0(n)^\dagger)$). Whenever $\delta^0(n_0)>\varepsilon$ for some layer $n_0$, the FFA floor of Theorem~\ref{thm:irreducible} is $\Theta(\frac{1}{N}\sum_n (\delta^0(n))^2) \ge \Theta((\delta^0(n_0))^2/N) = \Theta(\delta^0(n_0)^2)$ (treating $N$ as $\Theta(1)$ in the rate notation), so the AS-LoRA bound $O(\varepsilon^2)$ is strictly smaller in order whenever $\varepsilon \ll \delta^0(n_0)$. This proves the strict inequality. \hfill$\square$

\section{Proof of Theorem \ref{thm:conv}} 
\label{app:proof-conv}

Fix layer $n$ and block label $a\in\{A,B\}$. Write $\Delta:=\Loss(W)-\E[\Loss(W-\eta\,g_a^{(n)})]$ and let $h:=\eta\,g_a^{(n)}$ (the perturbation at the $a^{(n)}$ block). Apply the Taylor expansion of Lemma~\ref{lem:taylor} to $\Loss$ as a function of the block factor $a^{(n)}$:
\begin{equation}
\Loss(W-h)=\Loss(W)-\langle\nabla_{a^{(n)}}\Loss,h\rangle+\tfrac12 h^\top H_a^{(n)}(W) h+R, \qquad |R|\le\tfrac{L_H}{6}\|h\|^3.
\end{equation}
Substituting $h=\eta g_a^{(n)}$ and rearranging,
\begin{equation} \label{delta}
\Loss(W)-\Loss(W-\eta g_a^{(n)}) \;=\; \eta\langle\nabla_{a^{(n)}}\Loss,g_a^{(n)}\rangle - \tfrac{\eta^2}{2}(g_a^{(n)})^\top H_a^{(n)} g_a^{(n)} - R,
\end{equation}
with $|R|=O(\eta^3\|g_a^{(n)}\|^3)$ uniformly bounded under Assumption~\ref{asm:hess}.

Now take expectation over the gradient noise. Decompose $g_a^{(n)}=\nabla_{a^{(n)}}\Loss+\xi_a^{(n)}$ with $\E[\xi_a^{(n)}]=0$ and $\E\|\xi_a^{(n)}\|^2\le\sigma_g^2$ (Assumption~\ref{asm:var}). Then we have
\emph{first-order term.} as $\E[\langle\nabla_{a^{(n)}}\Loss,g_a^{(n)}\rangle] = \|\nabla_{a^{(n)}}\Loss\|^2$ by unbiasedness, and \emph{second-order term} as $\E[(g_a^{(n)})^\top H_a^{(n)} g_a^{(n)}] = (\nabla_{a^{(n)}}\Loss)^\top H_a^{(n)}\nabla_{a^{(n)}}\Loss + \tr(H_a^{(n)}\Sigma_\xi)$ where $\Sigma_\xi=\E[\xi\xi^\top]$ has $\tr\Sigma_\xi\le\sigma_g^2$ and $H_a^{(n)}\preceq\beta I$, so $\tr(H_a^{(n)}\Sigma_\xi)\le\beta\sigma_g^2$.

Plugging back to \eqref{delta}, we have 
\begin{equation}
\Delta \;=\; \eta\|\nabla_{a^{(n)}}\Loss\|^2 - \tfrac{\eta^2}{2}(\nabla_{a^{(n)}}\Loss)^\top H_a^{(n)}\nabla_{a^{(n)}}\Loss - \tfrac{\eta^2}{2}\beta\sigma_g^2 + O(\eta^3).
\end{equation}
The score satisfies $\E[S_a (n)]=\|\nabla_{a^{(n)}}\Loss\|^2-\tfrac{\eta}{2}(\nabla_{a^{(n)}}\Loss)^\top H_a^{(n)}\nabla_{a^{(n)}}\Loss-\tfrac{\eta}{2}\tr(H_a^{(n)}\Sigma_\xi)+ O(\sigma_g^2)$. Accordingly, modulo $O(\eta\sigma_g^2)$ correction which is absorbed for $\eta\le 1/\beta$,
\begin{equation}\label{eq:per-round-proved}
\Delta \;\ge\; \eta\,S_a (n) - \tfrac{\eta^2\beta}{2}\sigma_g^2.
\end{equation}
Therefore, we can say that the greedy rule $\arg\max_a S_a (n)$ thus maximizes $\Delta$ up to $O(\eta^2\sigma_g^2)$.

Applying~\eqref{eq:per-round-proved} at round $t$ with the selected block $M^t(n)$, summing across layers $n\in[N]$ and rounds $t\in[T]$, and using $\Loss(W_T)\ge\Loss^\star$, we have 
\begin{equation}\label{Thm3_0}
\Loss(W_0)-\Loss^\star \;\ge\; \E[\Loss(W_0)]-\E[\Loss(W_T)] \;\ge\; \eta\!\sum_{t=1}^T\!\sum_{n=1}^N \E\bigl[S_{M^t(n)}^t(n)\bigr] - \tfrac{\eta^2\beta NT}{2}\sigma_g^2.
\end{equation}

 Decompose AS-LoRA's chosen score against the \emph{uniform-random baseline} $\tfrac{1}{2}(S_A^t(n)+S_B^t(n))$:
\begin{equation}
S_{M^t(n)}^t(n) \;=\; \tfrac{1}{2}\bigl(S_A^t(n)+S_B^t(n)\bigr) + \Bigl(S_{M^t(n)}^t(n)-\tfrac{1}{2}\bigl(S_A^t(n)+S_B^t(n)\bigr)\Bigr).
\end{equation}
Under AS-LoRA's argmax rule, $S_{M^t(n)}^t(n)=\max_a S_a^t(n)$, so the second term has expectation exactly $\Delta_{\mathcal S}^t(n)$ (the adaptive gain, defined as the gap between $\E[\max_a S_a^t(n)]$ and the uniform-baseline expectation $\tfrac{1}{2}\E[S_A^t(n)+S_B^t(n)]$ in the main text). Hence
\begin{equation} \label{Thm3_1}
\E[S_{M^t(n)}^t(n)] \;=\; \tfrac{1}{2}\E[S_A^t(n)+S_B^t(n)] + \Delta_{\mathcal S}^t(n).
\end{equation}
To convert the score expression into a gradient-norm bound, we apply Assumption~\ref{asm:hess} to each individual block-score $S_a(n)$. Using $H_a^{(n)}\preceq L_a^{(n)} I\preceq\beta I$ and $\eta\le 1/\beta$, the curvature term satisfies $(g_a^{(n)})^\top H_a^{(n)} g_a^{(n)}\le\beta\|g_a^{(n)}\|^2$, so
\begin{equation} \label{Thm3_2}
S_a(n) \;=\; \|g_a^{(n)}\|^2-\tfrac{\eta}{2}\lambda_a^{(n)}\|g_a^{(n)}\|^2 \;=\; \bigl(1-\tfrac{\eta}{2}\lambda_a^{(n)}\bigr)\|g_a^{(n)}\|^2 \;\ge\; \rho^\star\,\|g_a^{(n)}\|^2,
\end{equation}
by definition of $\rho^\star=\inf_{t,n}(1-\tfrac{\eta}{2}\lambda_{M^t(n)}^{(n)})\ge 1/2$. Combining \eqref{Thm3_1}--\eqref{Thm3_2} and $\E\|g_a^{(n)}\|^2\ge\|\nabla_{a(n)}\Loss\|^2$ (since $\E[g_a^{(n)}]=\nabla_{a(n)}\Loss$ and variance $\ge 0$), we have 
\begin{equation}\label{Thm3_3}
\sum_n\E[S_{M^t(n)}^t(n)] \;\ge\; \tfrac{\rho^\star}{2}\sum_n\E\|\nabla_{M^t(n)}\Loss\|^2 + \sum_n\Delta_{\mathcal S}^t(n).
\end{equation}

Combining \eqref{Thm3_0}, \eqref{Thm3_3} and  $\min_{t\le T}\E\|\nabla\Loss(W_t)\|^2 \le \tfrac{1}{T}\sum_t\E\|\nabla\Loss(W_t)\|^2$, we obtain
\begin{equation}
\min_{t\le T}\E\|\nabla\Loss(W_t)\|^2 \;\le\; \frac{2(\Loss(W_0)-\Loss^\star)}{\eta\rho^\star T} \;-\; \frac{2\,\overline{\Delta_{\mathcal S}}}{\rho^\star} \;+\; \frac{\eta\beta\sigma_g^2}{\rho^\star},
\end{equation}
by dividing by $\eta\rho^\star T/2$ where $\overline{\Delta_{\mathcal S}}=\tfrac{1}{T}\sum_t\tfrac{1}{N}\sum_n\Delta_{\mathcal S}^t(n)$. This is the non-convex bound.

\emph{PL refinement to geometric rate.} Under Assumption~\ref{asm:pl} ($\|\nabla\Loss\|^2\ge 2\mu(\Loss-\Loss^\star)$), the per-round inequality \eqref{eq:per-round-proved} sharpens from a gradient-norm lower bound to a function-value recursion. Combining \eqref{eq:per-round-proved} with \eqref{Thm3_2}, we have 
\begin{equation} \label{Thm3_basic}
\E[\Loss(W_t)-\Loss^\star] \;\le\; \E[\Loss(W_{t-1})-\Loss^\star] - \tfrac{\eta\rho^\star}{2}\,\E\|\nabla\Loss(W_{t-1})\|^2 + \tfrac{\eta^2\beta\sigma_g^2}{2}.
\end{equation}

Applying PL to the negative-coefficient gradient-norm term which is $-\tfrac{\eta\rho^\star}{2}\,\E\|\nabla\Loss\|^2 \le -\eta\mu\rho^\star\,\E[\Loss - \Loss^\star]$, \eqref{Thm3_basic} can be rewritten as
\begin{equation} 
\E[\Loss(W_t)-\Loss^\star] \;\le\; (1-\eta\mu\rho^\star)\,\E[\Loss(W_{t-1})-\Loss^\star] + \tfrac{\eta^2\beta\sigma_g^2}{2}.
\end{equation}

This is a first-order recursion in $\E[\Loss(W_t)-\Loss^\star]$, contracted by $\gamma:=1-\eta\mu\rho^\star \in (0,1)$ each round (provided $\eta < 1/(\mu\rho^\star)$, automatically satisfied under $\eta\le 1/\beta$ in the typical regime $\beta \ge \mu\rho^\star$). Considering the iteration, the first-order recusrion is extended to
\begin{equation} 
\E[\Loss(W_T)-\Loss^\star] \;\le\; (1-\eta\mu\rho^\star)^T(\Loss(W_0)-\Loss^\star) + \tfrac{\eta\beta\sigma_g^2}{2\mu\rho^\star}.
\end{equation}
\hfill$\square$

\section{Proof of Theorem~\ref{thm:implicit}}
\label{app:proof-implicit}

Fix layer $n$ and let $\kappa(H_a^{(n)}):=\lambda_{\max}(H_a^{(n)})/\lambda_{\min}(H_a^{(n)})$ denote the condition number of the per-block Hessian. Under AS-LoRA's argmax rule $M^t(n) = \arg\max_{a\in\{A,B\}} S_a(n)$, the selected block is a deterministic function of the stochastic gradients $\{g_a^{(n)}\}_{a\in\{A,B\}}$. Since $g_a^{(n)}$ is random (mini-batch noise, Assumption~\ref{asm:var}), $M^t(n)$ inherits this randomness, and we define the \emph{expected Hessian at the selected mode}
\begin{equation}
\overline H_t^{(n)} \;:=\; \E\bigl[H_{M^t(n)}^{(n)}\bigr] \;=\; \sum_{a\in\{A,B\}} p^t(a,n)\,H_a^{(n)},
\end{equation}
where $p^t(a,n) := \Pr\bigl[M^t(n) = a\bigr]$ is the \emph{induced selection probability} which is the probability that argmax picks block $a$, taken over the distribution of stochastic gradients. $\overline H_t^{(n)}$ is a convex combination of $H_A^{(n)}, H_B^{(n)}$ with weights $p^t(\cdot, n)$. The score $S_a(n)=\|g_a^{(n)}\|^2-\tfrac{\eta}{2}(g_a^{(n)})^\top H_a^{(n)} g_a^{(n)}$ contains a curvature-penalty term that is large when $\lambda_a^{(n)}=(g_a^{(n)})^\top H_a^{(n)}g_a^{(n)}/\|g_a^{(n)}\|^2$ is large, i.e.\ when block $a$ has high effective curvature along the gradient direction. Hence among two blocks with comparable gradient norms, the block with smaller $\lambda_a^{(n)}$ has larger $S_a(n)$ and is therefore selected by argmax. Formally, under heterogeneity $\lambda_A^{(n)}\ne\lambda_B^{(n)}$, the argmax-induced selection probability satisfies $p^t(a^\star,n)>1/2$ where $a^\star:=\arg\min_a\lambda_a^{(n)}$, with the gap $p^t(a^\star,n)-1/2$ strictly positive and growing with the curvature heterogeneity $|\lambda_A^{(n)}-\lambda_B^{(n)}|$. Accordingly, we can say that AS-LoRA favors to select $a\in{A,B}$ with low $\lambda_{a}^{(n)}$, and also maximum eigenvalue of $H_{a}^{(n)}$ following the observation in \cite{jaiswal2025low} that demonstrates that \emph{the gradient subspace aligns with the dominant directions in the Hessian}.

For symmetric matrices the maximum eigenvalue is convex \cite{horn2012matrix}, so by Jensen's inequality
\begin{equation}
\lambda_{\max}\bigl(\overline H_t^{(n)}\bigr) \;=\; \lambda_{\max}\bigl(\E_a[H_a^{(n)}]\bigr) \;\le\; \E_a[\lambda_{\max}(H_a^{(n)})] \;=\; \sum_a p^t(a,n)\,\lambda_{\max}(H_a^{(n)}).
\end{equation}
Comparing AS-LoRA's induced distribution $p^t$ against the uniform baseline,
\begin{equation}
\sum_a p^t(a,n)\,\lambda_{\max}(H_a^{(n)}) \;\le\; \tfrac{1}{2}(\lambda_{\max}(H_A^{(n)})+\lambda_{\max}(H_B^{(n)})) = \E_a^\text{unif}[\lambda_{\max}(H_a^{(n)})],
\end{equation}
because AS-LoRA favors low maximum eigenvalue as we discussed above. Composing,
\begin{equation}
\E[\lambda_{\max}(\overline H_t^{(n),\text{AS}})] \;\le\; \E[\lambda_{\max}(\overline H_t^{(n),\text{static}})],
\end{equation}
strict whenever the per-block top-spectra differ. This is the SAM-style implicit-flatness statement of Theorem~\ref{thm:implicit}. \hfill$\square$

\section{Experimental Detail} \label{app:experimental_detail}

\subsection{Differential Privacy Implementation}

We implement differentially private training using the Opacus library. 
To track the privacy budget, we primarily adopt the PRV (Privacy Random Variable) accountant, which provides tighter estimates compared to the standard RDP accountant. 
For a subset of kernel-based experiments, we instead use the RDP accountant due to technical limitations in the PRV implementation under those settings.

We employ Poisson subsampling using the \texttt{UniformWithReplacementSampler} provided by Opacus. 
Each data point is independently sampled with probability $q = B / N_k$, where $B$ is the expected batch size and $N_k$ denotes the dataset size of client $k$. 
This sampling scheme ensures correct privacy accounting for DP-SGD.

\subsection{Model and LoRA Configuration}

For language tasks, we apply LoRA to all transformer layers, specifically to the query and value projection matrices. 
The LoRA modules are configured with a dropout rate of $0.05$ and without bias terms (i.e., \texttt{bias="none"}).

For vision tasks, LoRA is similarly applied to the query and value projections across all layers. 
In addition, the classifier head is trained jointly without LoRA adaptation.

\subsection{Tokenization and Input Processing}

For GLUE tasks, the maximum sequence length is set to $128$, while for SQuAD tasks it is set to $384$. 
For question answering, we tokenize the input using the following setting:

\begin{equation*}
\texttt{tokenizer(question, context, truncation="only\_second",}
\end{equation*}
\begin{equation*}
\texttt{max\_length=L, padding="max\_length", return\_offsets\_mapping=True)}
\end{equation*}

\subsection{Implementation Details}

All experiments are conducted in float32 precision to ensure numerical stability under differential privacy. 
We reinitialize the model, optimizer, and privacy engine at each communication round to avoid unintended state accumulation in the privacy accountant.

During training, only the selected LoRA components are optimized according to the layer-wise mode configuration, while all other parameters remain frozen.

\section{Additional Experiments}
\label{sec:Appendix D}
Unless otherwise specified, all experiments follow the settings described in Section~\ref{sec:experimental_setup} and Appendix~\ref{app:experimental_detail}.

\subsection{Effect of random projection}  \label{app:random_projection}

\begin{figure}[h]
\centering
\includegraphics[width=0.4\linewidth]{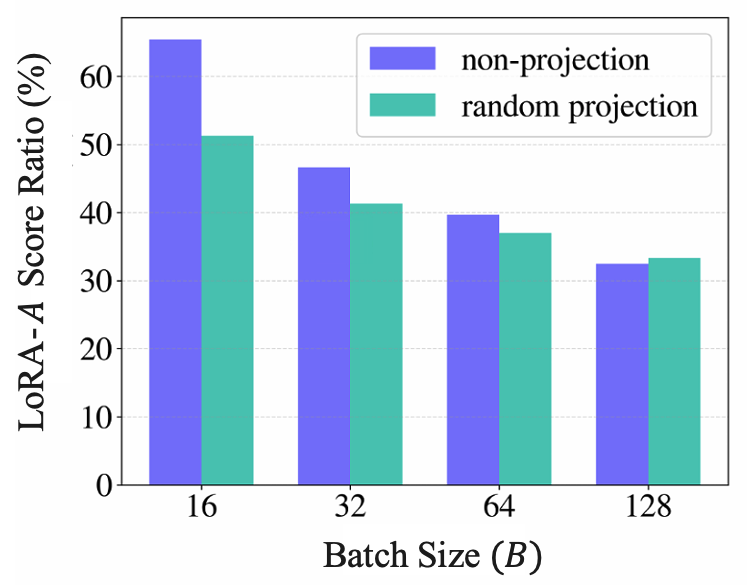}
\caption{Comparison of LoRA-A score ratio with and without random projection across different batch sizes.}
\label{fig:ablation_rp_score}
\end{figure}

To examine whether the benefit of random projection is related to channel-wise 
gradient outliers, we vary the batch size in DP-SGD, since larger batches can 
average stochastic gradient fluctuations and reduce the effect of outliers.
As shown in Table~\ref{tab:ablation_rp}, the performance gap between Proposed 
and Proposed w/o RP decreases as the batch size increases.
At $B{=}32$, random projection improves QQP accuracy by 2.03 pp, while the improvement reduces to 0.79 pp at $B{=}128$.
This trend suggests that random projection is particularly effective when smaller 
batch sizes make channel-wise outliers less likely to be canceled through batch 
averaging.
As the batch size increases and this implicit averaging effect becomes stronger, 
the additional benefit of random projection becomes smaller.
\begin{wraptable}{r}{0.45\linewidth}
\vspace{3pt}
\centering
\scriptsize
\setlength{\tabcolsep}{2pt}
\renewcommand{\arraystretch}{1.05}
\newcommand{\std}[1]{\,{\tiny$\pm$}\,\tiny #1}

\caption{Comparison with/without random projection under DP-SGD ($\epsilon=6$).}
\label{tab:ablation_rp}
\begin{tabular}{lccc}
\toprule
\textbf{Method} & QQP ($B$=32) & QQP ($B$=64) & QQP ($B$=128) \\
\midrule
Proposed
& $\textbf{63.19}$\std{0.01}
& $\textbf{73.99}$\std{1.41}
& $\textbf{79.94}$\std{0.11}\\

Proposed w/o RP
& 61.16\std{2.03}
& 70.90\std{2.78}
& 79.15\std{0.69}\\
\bottomrule
\end{tabular}

\vspace{-10pt}
\end{wraptable}

Figure~\ref{fig:ablation_rp_score} further corroborates this interpretation by 
visualizing the LoRA-$A$ score ratio across different batch sizes.
At small batch sizes (e.g., $B{=}16$), the LoRA-$A$ score ratio under 
non-projection is substantially higher than that under random projection, 
indicating that channel-wise outliers inflate the $A$ component's score and 
cause it to be disproportionately favored in mode selection.
As the batch size increases, the gap between non-projection and random projection 
narrows steadily, and at $B{=}128$ the two nearly coincide.
This confirms that larger batches implicitly suppress outlier effects through 
averaging, reducing the discrepancy between the two scoring strategies.
Taken together, these results support that the proposed random projection 
effectively mitigates channel-wise gradient outliers and leads to more balanced 
and reliable score estimation, particularly in the small-batch regime typical of 
DP-SGD training.

\begin{wraptable}{r}{0.44\linewidth}
\vspace{-13pt}
\centering
\scriptsize
\setlength{\tabcolsep}{2pt}
\renewcommand{\arraystretch}{1.05}
\newcommand{\std}[1]{\,{\tiny$\pm$}\,\tiny #1}

\caption{Comparison of single-pass and two-pass under DP-SGD ($\epsilon=3$).}
\label{tab:pass}

\begin{tabular}{lcc}
\toprule
\textbf{Method} & \textbf{MNLI (m)} & \textbf{MNLI (mm)} \\
\midrule
single-pass (ours)
& 79.15\std{0.23}
& 80.03\std{0.30} \\

two-pass inner-loop computation
& 79.37\std{0.31}
& 80.29\std{0.26} \\
\bottomrule
\end{tabular}

\vspace{-8pt}
\end{wraptable}

\subsection{Single-Pass vs. Two-Pass Score Computation}
\label{app:pass}

We compare two designs for score computation that differ in when and how the 
scores are evaluated relative to the communication rounds.

In the \emph{single-pass} design (ours), each client computes the scores as a 
by-product of local training and uploads them to the server together with the 
updated LoRA parameters.
The server then maintains EMA-smoothed scores across rounds and uses them to 
determine the mode vector for the \emph{next} round, requiring no additional 
communication beyond the standard parameter exchange.
As a result, the mode selection at round $t$ is based on historically smoothed 
scores rather than the current global model state.

In the \emph{two-pass inner-loop} design, the server first broadcasts both the 
$A$ and $B$ components of the current global model to all clients.
Each client then evaluates the scores for both components locally and sends them 
back to the server.
The server aggregates the received scores and determines the mode vector based 
on the current global model, after which the selected components are redistributed 
to the clients for local training.
This design enables greedy, round-wise mode selection based on the most 
up-to-date global model state, at the cost of an additional communication round 
and extra local computation per round.

As shown in Table~\ref{tab:pass}, the two-pass inner-loop computation achieves 
slightly higher performance than the single-pass design, improving MNLI~(m) and 
MNLI~(mm) by $+0.22$ and $+0.26$ pp, respectively.
However, the performance gap is marginal, and the single-pass design avoids the 
additional communication and computation overhead introduced by the two-pass 
approach.
These results demonstrate that the single-pass design provides a favorable 
trade-off between performance and efficiency.

\begin{wraptable}{r}{0.39\linewidth}
\vspace{-13pt}
\centering
\scriptsize
\setlength{\tabcolsep}{2pt}
\renewcommand{\arraystretch}{1.05}
\newcommand{\std}[1]{\,{\tiny$\pm$}\,\tiny #1}

\caption{Comparison of mode selection rules. Experiments are conducted under DP-SGD ($\epsilon=6$) with $B=64$.}
\label{tab:mode_selection_ablation}

\begin{tabular}{lcc}
\toprule
\textbf{Mode Selection Strategy}
& \textbf{MNLI (m)}
& \textbf{MNLI (mm)} \\
\midrule

Majority voting
& \textbf{75.56}\std{1.51}
& 76.31\std{2.38} \\

Weighted averaging
& 75.50\std{1.58}
& \textbf{76.60}\std{1.97} \\

Ours (Uniform averaging)
& 74.48\std{0.04}
& 75.15\std{0.35} \\

\bottomrule
\end{tabular}
\end{wraptable}
\subsection{Effect of aggregation rule}  \label{app:selection_rule}
We further study alternative rules for aggregating layer-wise mode decisions across clients. 
Specifically, we consider three variants: (i) \textit{majority voting}, where each client independently selects its preferred mode and the server adopts the mode receiving the majority of votes;
(ii) \textit{weighted averaging}, which computes a data-size weighted average of client scores before selecting a mode; and 
(iii) \textit{uniform averaging} (ours), which averages client scores uniformly and determines the mode accordingly.
As shown in Table~\ref{tab:mode_selection_ablation}, majority voting and weighted averaging achieve higher performance than uniform averaging on both MNLI (m) and MNLI (mm), suggesting that the aggregation rule for mode selection can meaningfully affect the quality of the selected modes. We adopt uniform averaging as the default aggregation strategy throughout this paper. Even with this simple aggregation, AS-LoRA outperforms 
prior federated LoRA methods; further gains are expected when combined 
with more sophisticated aggregation rules including majority rule, weighted aggregation.

\begin{figure*}[h]
\centering
\includegraphics[width=1.0\textwidth]{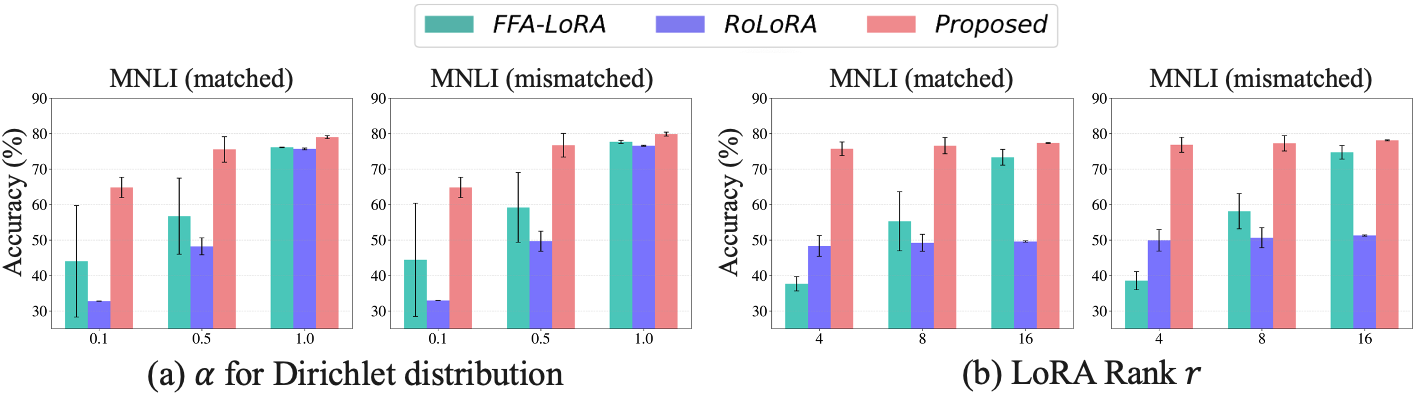}
\caption{(a): Comparison under varying dirichlet distribution $\alpha \in \{0.1,0.5,1.0\}$ on the MNLI dataset. (b): Comparison under varying LoRA rank $r \in \{4,8,16\}$ on the MNLI dataset. The number of clients is fixed at $K=3$ under DP-SGD ($\epsilon=6$).}
\label{fig:analysis_alpha_r}
\end{figure*}
\subsection{Effect of LoRA rank and data distribution} \label{app:alpha_rank}
In Figure~\ref{fig:analysis_alpha_r}(a), the proposed method consistently achieves the best performance across all $\alpha$ values and shows larger gains at smaller $\alpha$. This suggests stronger robustness under highly heterogeneous settings. The advantage of the proposed method is maintained across all heterogeneity levels.
In Figure~\ref{fig:analysis_alpha_r}(b), performance generally improves as the LoRA rank increases. The proposed method consistently outperforms the baselines across all rank settings and maintains its advantage even at low ranks. Notably, the performance improvement of FFA-LoRA is more pronounced than that of RoLoRA as the rank increases, suggesting that FFA-LoRA benefits more from the increased expressive capacity provided by larger ranks.

\subsection{Communication and memory overhead analysis} \label{app:communication_overhead}
We analyze the additional overhead introduced by AS-LoRA. Let $\mathcal{M}_n$
denote the set of LoRA modules in Transformer layer $n$. We define
\[
P_A=\sum_{n=1}^{N}\sum_{m\in\mathcal{M}_n} r d_{\mathrm{in},m}(n),
\qquad
P_B=\sum_{n=1}^{N}\sum_{m\in\mathcal{M}_n} r d_{\mathrm{out},m}(n),
\]
where $P_A$ and $P_B$ denote the number of trainable parameters in the $A$ and
$B$ LoRA components, respectively. FedLoRA communicates $P_A+P_B$ parameters
per client per round, whereas FFA-LoRA communicates $P_B$, and RoLoRA
communicates either $P_A$ or $P_B$. AS-LoRA also communicates only one active
component per layer and therefore has the same order of parameter communication
as alternating methods.

The additional communication of AS-LoRA comes only from the score and mode
information. In our implementation, all LoRA modules within the same Transformer
layer share one layer-level mode decision. Thus, each client uploads two scalar
scores per Transformer layer, and the server broadcasts one binary mode
indicator per layer. The additional uplink and downlink costs are therefore
$N$ floating-point values and $N$ bits per round, respectively. For a 24-layer
backbone, this corresponds to only $24$ FP32 values, or $96$ bytes, per client
per round, which is less than $0.02\%$ of the active LoRA parameter
communication when the hidden dimension is 1024.
\subsection{Computational overhead analysis}  \label{app:computational_overhead}

We analyze the computational overhead of the proposed curvature-based 
scoring and present practical strategies to overcome it.
We note that $\|\tilde{g}\|^2$ is obtained as a byproduct of the 
standard backward pass and incurs no additional computation.

\textbf{FLOPs accounting.}
Following~\cite{kaplan2020scaling}, we set $C_{\mathrm{bwd}} \approx 2C_{\mathrm{fwd}}$, 
so one local update step costs $3C_{\mathrm{fwd}}$.
Over $T=100$ rounds with $\tau=10$ local steps, the baseline FLOPs are:
\[
\mathcal{F}_{\mathrm{base}} = 3\tau T = 3 \times 10 \times 100 = 3{,}000\;C_{\mathrm{fwd}}.
\]
The proposed method uses a central-difference FD scheme requiring 
two forward passes per scoring target per round.
With $N_{\mathrm{target}} = 24$ targets (one active component per layer), 
the full per-round curvature scoring costs:
\[
\mathcal{F}_{\mathrm{full}} = N_{\mathrm{target}} \times 2 \times T
= 24 \times 2 \times 100 = 4{,}800\;C_{\mathrm{fwd}},
\]
yielding an overhead of $4{,}800 / 3{,}000 = \mathbf{160\%}$ over the baseline,
which is prohibitive for resource-constrained deployments.

We propose two complementary strategies to reduce this overhead.
First, replacing the central-difference scheme with a one-sided scheme:
\[
c_{a,k}^t(n) \approx 
\frac{\mathcal{L}_k^t(W_t + \epsilon_{\mathrm{fd}}\,v_{a,k}^t(n)) 
- \mathcal{L}_k^t(W_t)}{\epsilon_{\mathrm{fd}}^2},
\]
halves the per-target cost by reusing $\mathcal{L}_k^t(W_t)$ 
already computed during training.
Second, applying curvature scoring only every $f$ rounds reduces 
the number of scoring events from $T$ to $T/f$.
Under both strategies with $f=10$, the curvature overhead becomes:
\[
\mathcal{F}_{\mathrm{curv}} = N_{\mathrm{target}} \times 1 \times \frac{T}{f}
= 24 \times 10 = 240\;C_{\mathrm{fwd}},
\]
reducing the overhead to $240 / 3{,}000 \approx \mathbf{8\%}$---a 
$20\times$ reduction from the original $160\%$.

\begin{wraptable}{r}{0.34\textwidth}
\vspace{-13pt}
\centering
\scriptsize
\setlength{\tabcolsep}{2pt}
\renewcommand{\arraystretch}{1.05}
\newcommand{\std}[1]{\,{\tiny$\pm$}\,\tiny #1}
\caption{
    Comparison of curvature computation schedules in AS-LoRA under DP-SGD ($\epsilon=3$).
}
\label{app:scheduling}
\begin{tabular}{lcc}
\toprule
\textbf{Method} & \textbf{MNLI (m)} & \textbf{MNLI (mm)} \\
\midrule
\multicolumn{3}{l}{\textit{Periodic computation}} \\
($f{=}5$)   & 77.99\std{1.17} & 79.39\std{1.07} \\
($f{=}10$)  & 78.83\std{1.08} & 79.76\std{0.80} \\
\midrule
\multicolumn{3}{l}{\textit{Late-phase computation}} \\
($T^*{=}80$) & 78.91\std{1.11} & 79.87\std{0.95} \\
($T^*{=}90$) & 78.85\std{0.88} & \textbf{80.11}\std{0.74} \\
\midrule
\multicolumn{3}{l}{\textit{Original}} \\
Proposed (ours) & \textbf{79.15}\std{0.23} & 80.03\std{0.40} \\
\bottomrule
\end{tabular}
\vspace{-3pt}
\end{wraptable}

\textbf{Scheduling strategies.}
We consider two scheduling variants:
\textbf{(i) Periodic}: compute curvature scores every $f \in \{5, 10\}$ 
rounds and use $\|\tilde{g}\|^2$ otherwise.
\textbf{(ii) Late-phase}: apply $\|\tilde{g}\|^2$ for the first 
$T^* \in \{80, 90\}$ rounds and switch to curvature scoring thereafter, 
motivated by the observation that $\|\tilde{g}\|^2$ dominates early 
optimization while $g_a^\top H_a g_a$ becomes informative as 
$\|\tilde{g}\|^2$ diminishes near convergence.

\textbf{Analysis.}
As shown in Table~\ref{app:scheduling}, late-phase scheduling 
($T^*{=}80$) incurs only $-0.24$ degradation on MNLI (m) 
relative to the full method.
Notably, $T^*{=}90$ surpasses the full method on MNLI (mm) 
while reducing overhead from $160\%$ to $8\%$ ($20\times$ reduction).
These results confirm that full per-round curvature scoring is unnecessary: 
selective scheduling preserves most of the performance gain 
at a fraction of the computational cost.

\begin{wraptable}{r}{0.39\linewidth}
\vspace{-13pt}
\centering
\scriptsize
\setlength{\tabcolsep}{2pt}
\renewcommand{\arraystretch}{1.05}
\newcommand{\std}[1]{\,{\tiny$\pm$}\,\tiny #1}

\caption{Effect of gradient smoothing methods under DP-SGD ($\epsilon=3$).}
\label{tab:smoothing}

\begin{tabular}{lcc}
\toprule
\textbf{Method} & \textbf{MNLI (m)} & \textbf{MNLI (mm)} \\
\midrule
Proposed (w/o smoothing)
& 79.15\std{0.23}
& 80.03\std{0.40} \\

Gaussian
& 78.85\std{0.14}
& 79.74\std{0.21} \\

Laplacian
& \textbf{79.48}\std{0.16}
& \textbf{80.31}\std{0.16} \\

EMA
& 77.70\std{0.19}
& 78.65\std{0.14} \\
\bottomrule
\end{tabular}

\vspace{-8pt}
\end{wraptable}

\subsection{Effect of gradient smoothing under DP-SGD}\label{app:kernel}
We investigate whether applying smoothing to DP-SGD gradients can further improve utility, motivated by the observation that injected Gaussian noise in DP-SGD is high-frequency in nature and may be partially mitigated by frequency-selective filtering. Specifically, we compare three smoothing strategies applied to the aggregated gradients: (i) a standard 5-tap Gaussian filter $[1/16,\,4/16,\,6/16,\,4/16,\,1/16]$ as proposed in~\cite{liu2026rethinking}, (ii) Laplacian smoothing~\cite{liang2024differentially}, and (iii) EMA across communication rounds

The Gaussian noise injected by DP-SGD is i.i.d.\ across parameters and thus behaves as white noise in the parameter space, concentrating its energy uniformly across all frequencies.In contrast, the true gradient signal is typically smooth and sparse in the frequency domain, with energy concentrated at low frequencies.In the frequency domain, Laplacian smoothing acts as:
\begin{equation}
    \hat{g}_{\mathrm{smooth}}(\omega) 
    = \frac{\hat{g}(\omega)}{1 + \sigma \lambda(\omega)},
    \label{eq:laplacian_freq}
\end{equation}
where $\lambda(\omega)$ denotes the eigenvalue of the graph Laplacian at frequency $\omega$ and $\sigma > 0$ is a smoothing coefficient.Since $\lambda(\omega) \approx 0$ for low frequencies, the true gradient signal passes through largely intact, whereas high-frequency noise components are strongly suppressed as $\lambda(\omega)$ grows. By contrast, the 5-tap Gaussian filter is a weighted average with fixed kernel $[1/16, 4/16, 6/16, 4/16, 1/16]$, which attenuates \emph{all} frequency components including low-frequency signal, leading to signal distortion. EMA operates along the time axis rather than the parameter space, and therefore cannot exploit the spatial structure of DP noise within a single round, making it the least compatible with the spatial nature of DP-SGD noise.

Table~\ref{tab:smoothing} reports the effect of each smoothing method under $\epsilon = 3$ on the MNLI matched (m) and mismatched (mm) benchmarks. The results are consistent with our frequency-domain analysis. Laplacian smoothing is the only method that improves over the no-smoothing baseline, gaining $+0.33$ and $+0.28$ pp on MNLI (m) and MNLI (mm), respectively.
Gaussian filtering slightly degrades performance relative to the baseline ($-0.30$ pp on MNLI (m)), confirming that indiscriminate attenuation of all frequency components distorts the gradient signal. EMA yields the largest degradation ($-1.45$ pp on MNLI (m)), as temporal averaging across rounds is ill-suited for suppressing within-round spatial noise introduced by DP-SGD. These findings suggest that frequency-selective smoothing, specifically Laplacian smoothing, is the most compatible post-processing strategy for DP-SGD gradients.

\subsection{Analysis of Layer-wise Mode Selection Patterns.} \label{app:mode_selection}

Figure~\ref{fig:mode_all} visualizes the layer-wise mode selection patterns over training rounds across different tasks, revealing several non-random and task-dependent behaviors. Overall, the proposed method does not produce uniformly alternating selections across layers; instead, each task exhibits its own characteristic layer-wise preference structure, indicating that the adaptive scores capture meaningful optimization differences between LoRA-$A$ and LoRA-$B$ components rather than inducing arbitrary switching behavior.

For language understanding tasks such as QNLI, MNLI, SST-2, QQP, and SNLI, mode selection patterns often exhibit partially coherent structures across neighboring layers. Certain intermediate or upper layers consistently favor the same component over many rounds, while other layers switch more frequently. This suggests that the relative utility of LoRA-$A$ and LoRA-$B$ varies across Transformer depth, with different layers playing distinct roles in private federated adaptation depending on both task and layer.

In contrast, vision tasks exhibit more mixed and fine-grained switching. On CIFAR-100 and Tiny-ImageNet, selected modes distribute more evenly across layers and rounds, with fewer large contiguous regions dominated by a single component. This implies that in vision tasks, the relative advantage between LoRA-$A$ and LoRA-$B$ varies more locally across rounds, leading to less stable layer-wise specialization than in language tasks.

These observations have two key implications. First, they support the need for adaptive layer-wise mode selection, as a fixed global schedule cannot capture such heterogeneous preferences across tasks and layers. Second, recurring structures in language tasks suggest cross-layer dependencies not explicitly modeled by the current greedy formulation. Although each AS-LoRA layer independently selects its active component based on its score, the dynamics imply that groups of layers may prefer similar modes or evolve in a correlated manner over training rounds.

This also highlights a limitation of the current framework. Our analysis and algorithm focus on layer-wise greedy selection, enabling tractable optimization and strong empirical performance, but do not exploit inter-layer correlations. Future work could extend AS-LoRA to a joint decision framework that models structured dependencies across layers, for example by encouraging coordinated mode assignments among related layers or learning higher-level scheduling policies over layer groups. Such extensions may improve global coordination and yield more efficient update schedules beyond the current independent layer-wise strategy.

\begin{figure}[t]
    \centering

    \begin{subfigure}{0.48\linewidth}
        \centering
        \includegraphics[width=\linewidth]{./figures/8_appendix/mode_qnli.png}
        \caption{QNLI}
    \end{subfigure}
    \hfill
    \begin{subfigure}{0.48\linewidth}
        \centering
        \includegraphics[width=\linewidth]{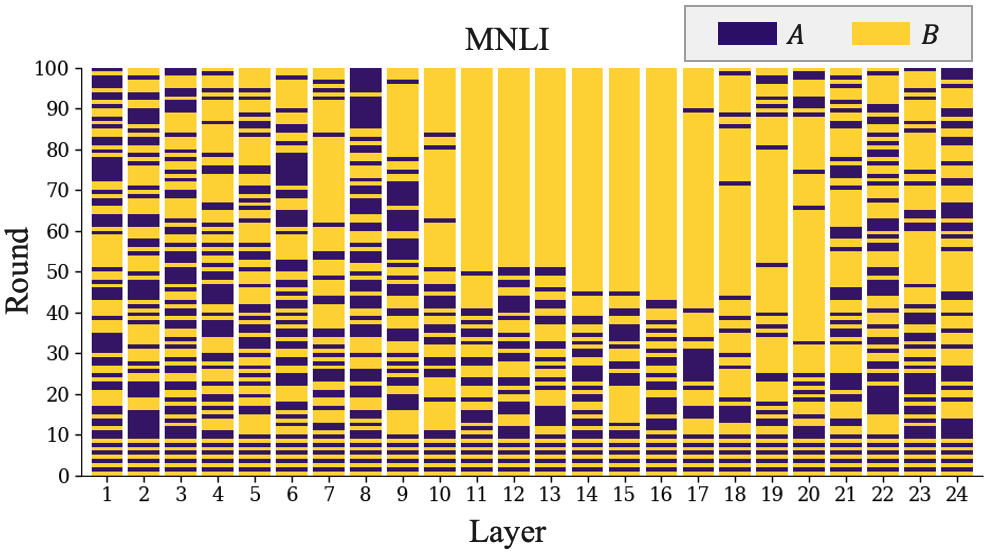}
        \caption{MNLI}
    \end{subfigure}

    \vspace{6pt}

    \begin{subfigure}{0.48\linewidth}
        \centering
        \includegraphics[width=\linewidth]{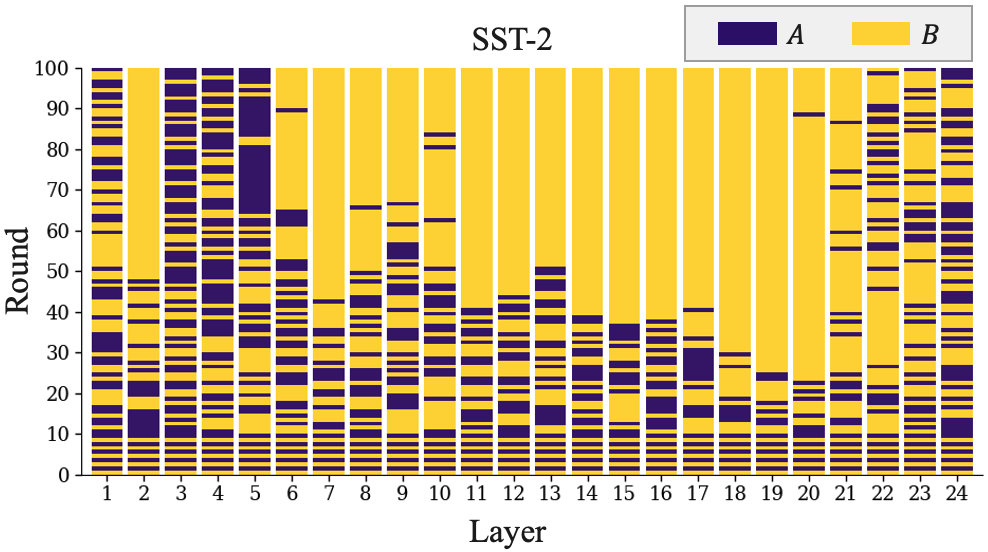}
        \caption{SST-2}
    \end{subfigure}
    \hfill
    \begin{subfigure}{0.48\linewidth}
        \centering
        \includegraphics[width=\linewidth]{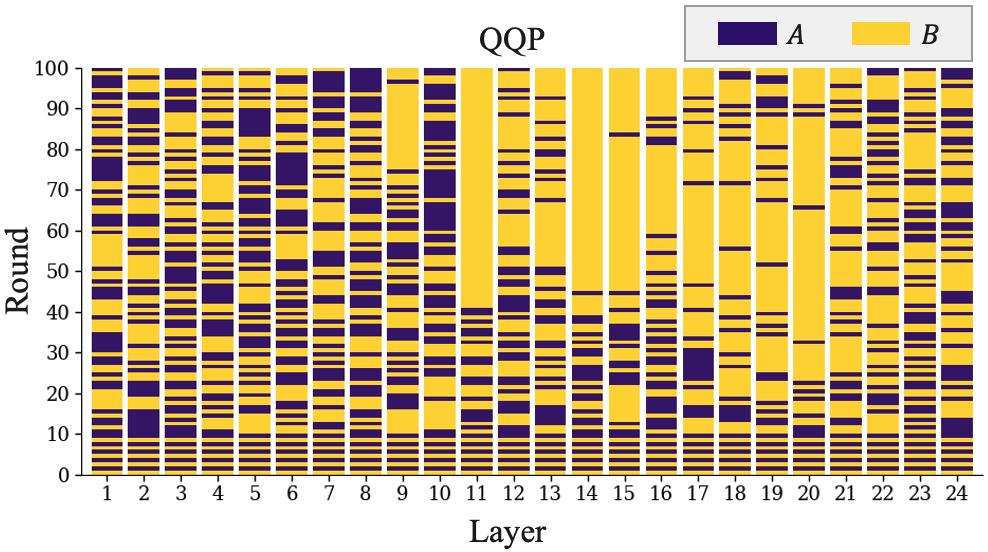}
        \caption{QQP}
    \end{subfigure}

    \vspace{6pt}

    \begin{subfigure}{0.48\linewidth}
        \centering
        \includegraphics[width=\linewidth]{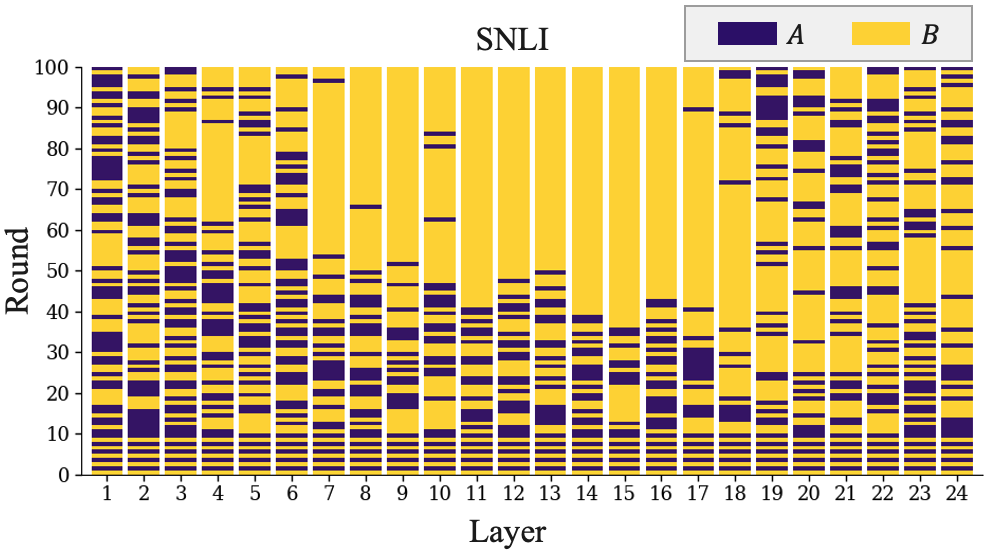}
        \caption{SNLI}
    \end{subfigure}
    \hfill
    \begin{subfigure}{0.48\linewidth}
        \centering
        \includegraphics[width=\linewidth]{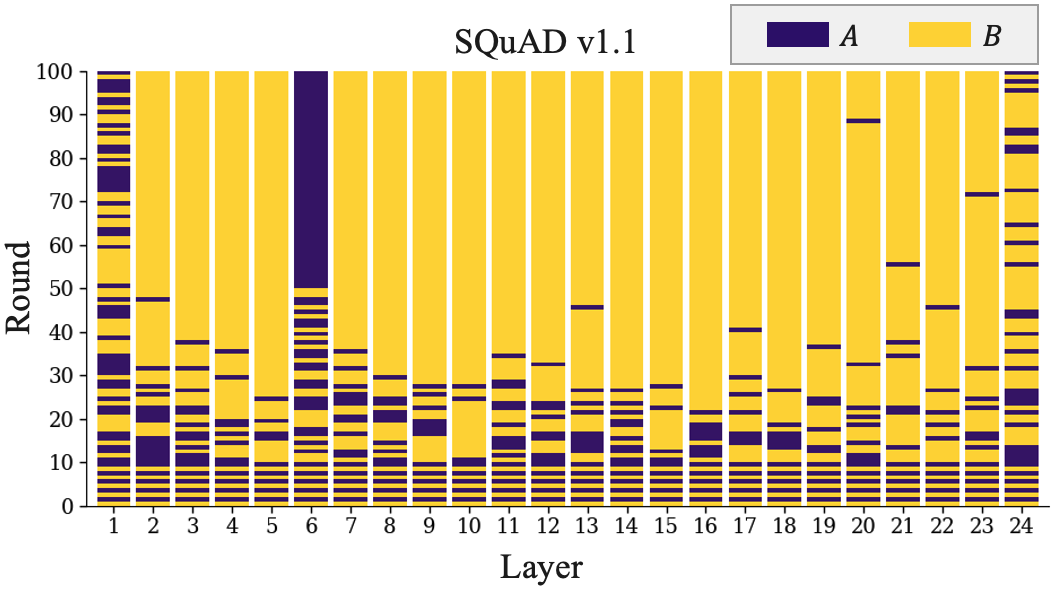}
        \caption{SQuAD v1.1}
    \end{subfigure}

    \vspace{6pt}

    \begin{subfigure}{0.48\linewidth}
        \centering
        \includegraphics[width=\linewidth]{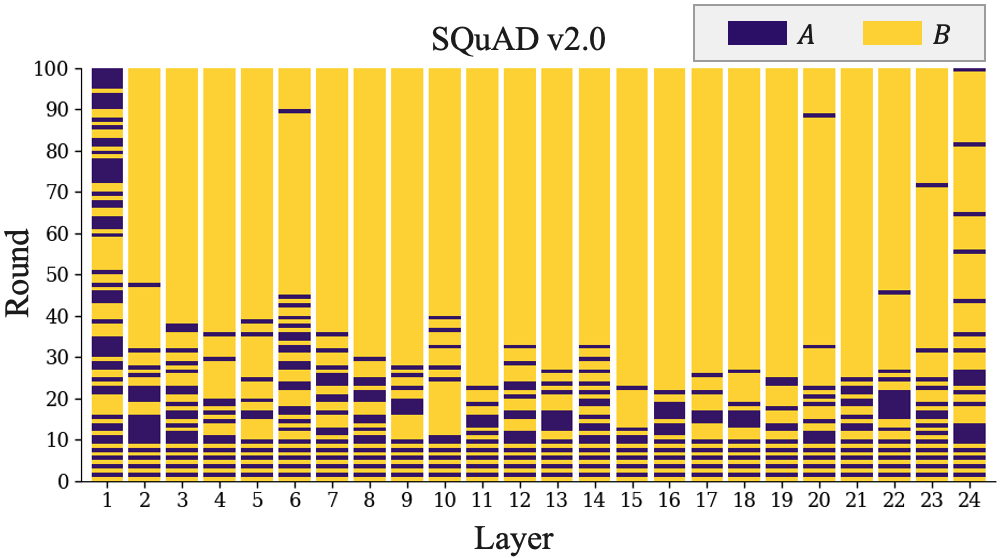}
        \caption{SQuAD v2.0}
    \end{subfigure}
    \hfill
    \begin{subfigure}{0.48\linewidth}
        \centering
        \includegraphics[width=\linewidth]{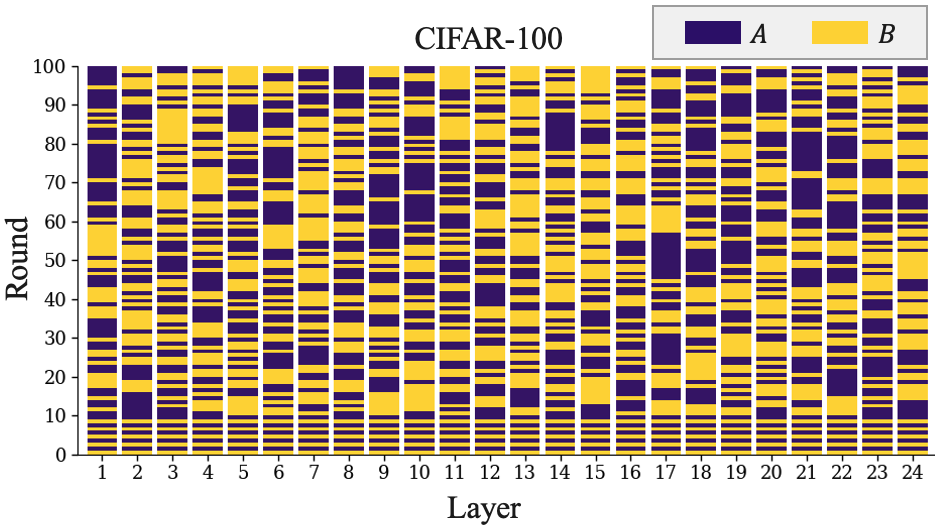}
        \caption{CIFAR-100}
    \end{subfigure}

    \vspace{6pt}

    \begin{subfigure}{0.48\linewidth}
        \centering
        \includegraphics[width=\linewidth]{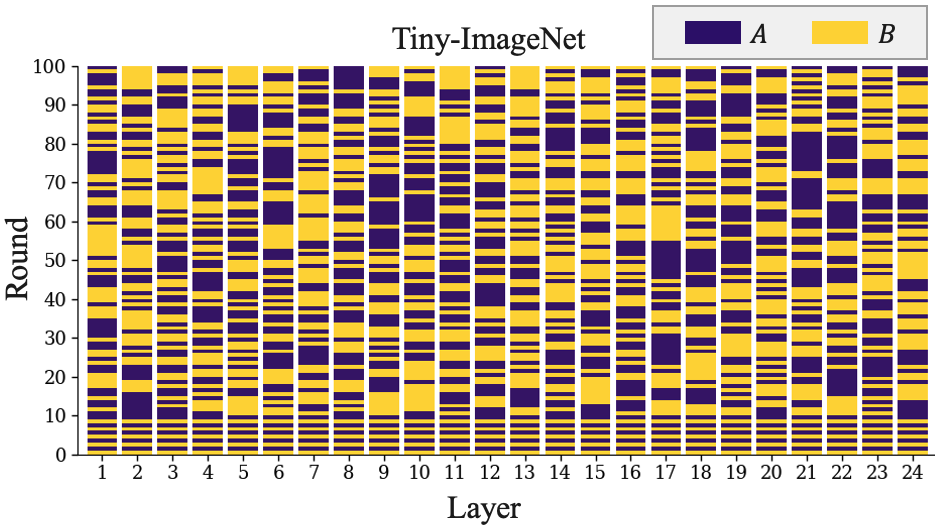}
        \caption{Tiny-ImageNet}
    \end{subfigure}

    \caption{Layer-wise mode selection patterns across different tasks.}
    \label{fig:mode_all}
\end{figure}

\end{document}